\tikzstyle{ellipsoid} = [draw, ellipse, minimum height=3em, minimum width=3em]
\tikzstyle{block} = [draw, rectangle, minimum height=3em, minimum width=3em]
\tikzstyle{round} = [draw, circle, minimum height=3em, minimum width=2em]
\tikzstyle{virtual} = [coordinate]
\newcommand{\ba}{\boldsymbol{a}}
\newcommand{\A}{\boldsymbol{A}}
\newcommand{\bd}{\boldsymbol{d}}
\newcommand{\x}{\boldsymbol{x}}
\newcommand{\y}{\boldsymbol{y}}
\newcommand{\bv}{\boldsymbol{v}}
\newcommand{\bL}{\boldsymbol{L}}
\newcommand{\bR}{\boldsymbol{R}}
\newcommand{\w}{\boldsymbol{w}}
\newcommand{\M}{\boldsymbol{M}}
\newcommand{\I}{\boldsymbol{I}}
\newcommand{\X}{\boldsymbol{X}}
\newcommand{\z}{\boldsymbol{z}}
\newcommand{\Q}{\boldsymbol{Q}}
\newcommand{\bu}{\boldsymbol{u}}
\newcommand{\be}{\boldsymbol{e}}
\newcommand{\bmu}{\boldsymbol{\mu}}
\newcommand{\zero}{\boldsymbol{0}}
\newcommand{\bDelta}{\boldsymbol{\Delta}}
\newcommand{\hw}{\hat{\w}}
\newcommand{\hc}{\hat{C}}
\newcommand{\dc}{\check{C}}
\newcommand{\barmu}{\bar{\bmu}}
\newcommand{\nd}{n_{\Delta}}
\newcommand{\R}{\mathbb{R}}
\newcommand{\Risk}{\mathcal{R}}
\newcommand{\sumn}{\sum_{i=1}^{n}}
\newcommand{\umin}[1]{\underset{#1}{\min}}
\newcommand{\umax}[1]{\underset{#1}{\max}}
\newcommand{\uargmin}[1]{\underset{#1}{\arg\min\,}}
\newcommand{\pts}[1]{\left(#1\right)}
\newcommand{\mts}[1]{\left[#1\right]}
\newcommand{\Riskf}[2][]{\Risk_{#1}\left(#2\right)}
\newcommand{\ellf}[2][]{\ell^{#1}\left(#2\right)}
\newcommand{\ellsf}[2][]{\ell_{#1}\left(#2\right)}
\newcommand{\ip}[2]{\left\langle #1, #2\right\rangle}
\newcommand{\Qf}[2][]{Q^{#1}\left(#2\right)}
\newcommand{\ffs}[2][]{f_{#1}\left(#2\right)}
\newcommand{\nnorm}[2][]{\left\|#2\right\|_{#1}}
\newcommand{\normo}[1]{\left|#1\right|}
\newcommand{\seq}[1]{\left\{#1\right\}}
\newcommand{\seqr}[3]{\left\{#1\right\}_{#2}^{#3}}
\newcommand{\expf}[2][]{\exp^{#1}\left(#2\right)}
\newcommand{\logf}[2][]{\log^{#1}\left(#2\right)}
\newcommand{\lbf}[2][]{\lambda_{#1}\left(#2\right)}
\newcommand{\XX}{\X\X^\top}
\newcommand{\XXtaui}{\left(\X\X^\top+\tau\I\right)^{-1}}
\newcommand{\XXz}{\X_0\X_0^\top}
\newcommand{\XXztaui}{\left(\X_0\X_0^\top+\tau\I\right)^{-1}}
\newcommand{\XXt}{\X_2\X_2^\top}
\newcommand{\XXttaui}{\left(\X_2\X_2^\top+\tau\I\right)^{-1}}
\newcommand{\XXk}{\X_k\X_k^\top}
\newcommand{\XXki}{\left(\X_k\X_k^\top\right)^{-1}}
\newcommand{\XXktaui}{\left(\X_k\X_k^\top+\tau\I\right)^{-1}}
\newcommand{\XXkpotaui}{\left(\X_{k+1}\X_{k+1}^\top+\tau\I\right)^{-1}}
\newcommand{\XXXdy}{\X^\top\left(\X\X^\top\right)^{-1}\bDelta^{-1}\y}
\newcommand{\XXXtaudy}{\X^\top\left(\X\X^\top+\tau\I\right)^{-1}\bDelta^{-1}\y}
\newcommand{\QQ}{\Q\Q^\top}
\newcommand{\QQi}{\left(\Q\Q^\top\right)^{-1}}
\newcommand{\QQtaui}{\left(\Q\Q^\top+\tau\I\right)^{-1}}
\newcommand{\Mk}{\M_k}
\newcommand{\Mkmo}{\M_{k-1}}
\newcommand{\Mzi}{\M_0^{-1}}
\newcommand{\Mki}{\M_k^{-1}}
\newcommand{\Mkmoi}{\M_{k-1}^{-1}}
\newcommand{\ndpnd}{\frac{n_+}{\Delta_+} + \frac{n_-}{\Delta_-}}
\newcommand{\ndmnd}{\frac{n_+}{\Delta_+} - \frac{n_-}{\Delta_-}}
\newcommand{\nddpndd}{\frac{n_+}{\Delta_+^2} + \frac{n_-}{\Delta_-^2}}
\newcommand{\suuz}{s_{u,u}^{(0)}}
\newcommand{\siik}{s_{i,i}^{(k)}}
\newcommand{\sijk}{s_{i,j}^{(k)}}
\newcommand{\suuk}{s_{u,u}^{(k)}}
\newcommand{\suik}{s_{u,i}^{(k)}}
\newcommand{\sotk}{s_{1,2}^{(k)}}
\newcommand{\sotkpo}{s_{1,2}^{(k+1)}}
\newcommand{\skkkmo}{s_{k,k}^{(k-1)}}
\newcommand{\siiz}{s_{i,i}^{(0)}}
\newcommand{\sotz}{s_{1,2}^{(0)}}
\newcommand{\siikpo}{s_{i,i}^{(k+1)}}
\newcommand{\sikpok}{s_{i,k+1}^{(k)}}
\newcommand{\sjkpok}{s_{j,k+1}^{(k)}}
\newcommand{\sokpok}{s_{1,k+1}^{(k)}}
\newcommand{\stkpok}{s_{2,k+1}^{(k)}}
\newcommand{\skpokpok}{s_{k+1,k+1}^{(k)}}
\newcommand{\suukpo}{s_{u,u}^{(k+1)}}
\newcommand{\sukpok}{s_{u,k+1}^{(k)}}
\newcommand{\suiz}{s_{u,i}^{(0)}}
\newcommand{\suikpo}{s_{u,i}^{(k+1)}}
\newcommand{\tiik}{t_{i,i}^{(k)}}
\newcommand{\tijk}{t_{i,j}^{(k)}}
\newcommand{\totk}{t_{1,2}^{(k)}}
\newcommand{\tkkkmo}{t_{k,k}^{(k-1)}}
\newcommand{\tiikpo}{t_{i,i}^{(k+1)}}
\newcommand{\tikpok}{t_{i,k+1}^{(k)}}
\newcommand{\tokpok}{t_{1,k+1}^{(k)}}
\newcommand{\ttkpok}{t_{2,k+1}^{(k)}}
\newcommand{\tiiz}{t_{i,i}^{(0)}}
\newcommand{\totz}{t_{1,2}^{(0)}}
\newcommand{\totkpo}{t_{1,2}^{(k+1)}}
\newcommand{\hijk}{h_{i,j}^{(k)}}
\newcommand{\hiuk}{h_{i,u}^{(k)}}
\newcommand{\hijkpo}{h_{i,j}^{(k+1)}}
\newcommand{\hkkkmo}{h_{k,k}^{(k-1)}}
\newcommand{\hikpok}{h_{i,k+1}^{(k)}}
\newcommand{\hokpok}{h_{1,k+1}^{(k)}}
\newcommand{\htkpok}{h_{2,k+1}^{(k)}}
\newcommand{\hijz}{h_{i,j}^{(0)}}
\newcommand{\hkpoik}{h_{k+1,i}^{(k)}}
\newcommand{\hkpojk}{h_{k+1,j}^{(k)}}
\newcommand{\hkpook}{h_{k+1,1}^{(k)}}
\newcommand{\hkpotk}{h_{k+1,2}^{(k)}}
\newcommand{\hiuz}{h_{i,u}^{(0)}}
\newcommand{\hkpouk}{h_{k+1,u}^{(k)}}
\newcommand{\hiukpo}{h_{i,u}^{(k+1)}}
\newcommand{\stdtt}{s_{2_\Delta,2}^{(2)}}
\newcommand{\stdot}{s_{2_\Delta,1}^{(2)}}
\newcommand{\sidjk}{s_{{i_\Delta},j}^{(k)}}
\newcommand{\sidik}{s_{{i_\Delta},i}^{(k)}}
\newcommand{\sidjdk}{s_{{i_\Delta},j_\Delta}^{(k)}}
\newcommand{\skpojdk}{s_{{k+1},j_\Delta}^{(k)}}
\newcommand{\sididk}{s_{i_\Delta,i_\Delta}^{(k)}}
\newcommand{\sidikpo}{s_{i_\Delta,i}^{(k+1)}}
\newcommand{\sididkpo}{s_{i_\Delta,i_\Delta}^{(k+1)}}
\newcommand{\sidkpok}{s_{i_\Delta,k+1}^{(k)}}
\newcommand{\sidiz}{s_{i_\Delta,i}^{(0)}}
\newcommand{\stdok}{s_{2_\Delta,1}^{(k)}}
\newcommand{\stdkpok}{s_{2_\Delta,k+1}^{(k)}}
\newcommand{\stdokpo}{s_{2_\Delta,1}^{(k+1)}}
\newcommand{\stdoz}{s_{2_\Delta,1}^{(0)}}
\newcommand{\sididz}{s_{i_\Delta,i_\Delta}^{(0)}}
\newcommand{\hotdt}{h_{1,2_\Delta}^{(2)}}
\newcommand{\httdt}{h_{2,2_\Delta}^{(2)}}
\newcommand{\hijdk}{h_{i,j_\Delta}^{(k)}}
\newcommand{\hijdkpo}{h_{i,j_\Delta}^{(k+1)}}
\newcommand{\hijdz}{h_{i,j_\Delta}^{(0)}}
\newcommand{\hkpoidk}{h_{k+1,i_\Delta}^{(k)}}
\newcommand{\hkpojdk}{h_{k+1,j_\Delta}^{(k)}}
\newcommand{\hkpotdk}{h_{k+1,2_\Delta}^{(k)}}
\newcommand{\oididk}{o_{i_\Delta,i_\Delta}^{(k)}}
\newcommand{\otdtdt}{o_{2_\Delta,2_\Delta}^{(2)}}
\newcommand{\nus}{\nnorm[2]{\barmu_s}}
\newcommand{\nuc}{\nnorm[2]{\barmu_c}}
\newcommand{\nuo}{\nnorm[2]{\barmu_1}}
\newcommand{\nut}{\nnorm[2]{\barmu_2}}
\newcommand{\nui}{\nnorm[2]{\barmu_i}}
\newcommand{\nuk}{\nnorm[2]{\barmu_k}}
\newcommand{\nukpo}{\nnorm[2]{\barmu_{k+1}}}
\newcommand{\ndo}{\nnorm[2]{\bd_1}}
\newcommand{\ndt}{\nnorm[2]{\bd_2}}
\newcommand{\ndi}{\nnorm[2]{\bd_i}}
\newcommand{\nvi}{\nnorm[2]{\bv_i}}
\newcommand{\nvj}{\nnorm[2]{\bv_j}}
\newcommand{\ndvi}{\nnorm[2]{\bDelta_{-1}\bv_i}}
\newcommand{\ndvj}{\nnorm[2]{\bDelta_{-1}\bv_j}}
\newcommand{\nbu}{\nnorm[2]{\bu}}
\newcommand{\myquad}[1][1]{\hspace*{#1em}\ignorespaces}
\newtheorem{assumption}{Assumption}
\newtheorem{theorem}{Theorem}
\newtheorem{lemma}{Lemma}
\newtheorem{proposition}{Proposition}
\newtheorem{corollary}{Corollary}
\newtheorem{definition}{Definition}
\begin{document}

\begin{center}

{\bf{\LARGE{Sharp analysis of out-of-distribution error for ``importance-weighted" estimators in the overparameterized regime}}}

\vspace*{.2in}

{\large{
\begin{tabular}{ccc}
Kuo-Wei Lai$^\dagger$ & Vidya Muthukumar$^{\dagger,\ddagger}$
\end{tabular}}}

\vspace*{.2in}

\begin{tabular}{c}
School of Electrical \& Computer Engineering, Georgia Institute of Technology$^\dagger$\\
H. Milton School of Industrial \& Systems Engineering, Georgia Institute of Technology$^\ddagger$
\end{tabular}

\vspace*{.2in}
\date{}

\end{center}

\begin{abstract}%
Overparameterized models that achieve zero training error are observed to generalize well on average, but degrade in performance when faced with data that is under-represented in the training sample. In this work, we study an overparameterized Gaussian mixture model imbued with a \emph{spurious feature}, and sharply analyze the \emph{in-distribution} and \emph{out-of-distribution} test error of a \emph{cost-sensitive} interpolating solution that incorporates "importance weights". Compared to recent work~\cite{behnia2022avoid,wang2021importance}, our analysis is sharp with matching upper and lower bounds, and significantly weakens required assumptions on data dimensionality. Our error characterizations also apply to any choice of importance weights and unveil a novel tradeoff between worst-case robustness to distribution shift and average accuracy as a function of the importance weight magnitude\footnote{A short version of this work will be presented at IEEE ISIT 2024.}.
\end{abstract}


\section{Introduction}\label{sec:intro}
Overparameterized models are ubiquitous in machine learning theory and practice today because of their state-of-the-art generalization guarantees (in the sense of low test error) even while perfectly fitting the training data~\cite{zhang2021understanding,deng2022model}.
However, this ``good generalization" property does not extend to test data that is distributed differently from training data, termed~\emph{out-of-distribution} (OOD) data~\cite{sagawa2019distributionally,sagawa2020investigation,zech2018variable}.
A particularly acute scenario arises when the data is drawn as a mixture from multiple groups (each with a different distribution) and some groups are very under-represented in training data~\cite{buolamwini2018gender}.
Under such models, the \emph{worst-group generalization error} can be significantly degraded with respect to the average generalization error on all groups~\cite{behnia2022avoid,wang2021importance,sagawa2020investigation,sagawa2019distributionally}.
%

The effect of distribution shift on generalization has been sharply characterized in a worst-case/minimax sense, e.g.~\cite{mansour2008domain,kpotufe2021marginal,pathak2022new}. However, distribution-shift arising from practical inequities in group representation is often much more structured.
Of particular interest is distribution shift induced by \emph{spurious correlations} present in the
training dataset, which is a scenario that commonly arises on real-world image and language datasets\footnote{For example, the Waterbirds
dataset~\cite{welinder2010caltech} overwhelmingly features landbirds on land and waterbirds on water; in this dataset, the background of the image is a spurious feature.}. Here,
the shifted distribution removes or reverses these correlations, degrading performance.
Recent work~\cite{wang2021importance,behnia2022avoid,sagawa2019distributionally} has examined the ramifications of training overparameterized models in the presence of spurious correlations and proposed several strategies to mitigate their harmful effects on worst-group error.
For example,~\cite{wang2021importance,behnia2022avoid,kini2021label,lai2023general} introduce novel loss functions with \emph{adjustment weights} that upweight the importance of minority group examples. On the other hand, \emph{group distributionally robust optimization (DRO)} techniques~\cite{sagawa2019distributionally,hu2018does,kuhn2019wasserstein} adaptively optimize examples from the worst group during training. 
Despite these efforts, a tight characterization of the impact of overparameterization, number of examples and training loss function on OOD generalization remains missing, even in this special setting.
%

\paragraph{Our contributions:}

In this paper, we provide sharp matching upper and lower bounds on the group-wise generalization error of the overparameterized linear model on data drawn from the Gaussian Mixture Model (GMM) data distribution imbued with a spurious feature: this model was first introduced in~\cite{sagawa2020investigation} and studied in~\cite{behnia2022avoid}.
We study, in particular, a form of \emph{cost-sensitive minimum-norm interpolation} (cMNI) wherein minority examples are ``upweighted" by an adjustment weight $1/\Delta_-$ (our bounds also carry over to the cost-sensitive support-vector-machine (SVM) via a corollary of~\cite{behnia2022avoid}).
Our results (Theorem~\ref{thm:wst_risk_upperbound} and Proposition~\ref{pro:wst_lower_bound}) characterize the precise role of the number of majority and minority training examples, the model dimension, and the upweighting factor on both ID and OOD generalization.
Our tight bounds uncover an interesting \emph{robustness-accuracy tradeoff} (Table~\ref{tab:tradeoff_acc_robust}): while a smaller upweighting factor (larger $\Delta_-$) results in higher average accuracy and worse robustness (i.e. higher worst-group error), a larger upweighting factor (larger $\Delta_+$) results in improved robustness at the cost of average accuracy.
We also demonstrate that explicit regularization in the form of a ridge penalty does not change our bounds except for universal constant factors, nor does it change the nature of this robustness-accuracy tradeoff.

At a technical level, to achieve matching upper and lower bounds we deviate significantly from the related analyses~\cite{behnia2022avoid,wang2021importance}, which study the regularization path of gradient descent on the training error objective directly, and require strong assumptions on the dimensionality of the data.
We instead leverage techniques from benign overfitting in binary and multiclass classification~\cite{wang2022binary,wang2021benign} and apply them to a careful and delicate analysis of the cost-sensitive estimators.

\paragraph{Organization of paper:} The rest of this paper is organized as follows. Section~\ref{sec:relatedwork} contextualizes our approach and results with the most closely related work.
Section~\ref{sec:wst_risk_upperbound} presents our main results, Section~\ref{sec:upperbound_proof} is overall proof, and Section~\ref{sec:conclusion} provides a brief conclusion.
The proofs of technical and auxiliary lemmas are deferred to the appendices.
\subsection{Related work}\label{sec:relatedwork}
The problem of generalizing to \emph{out-of-distribution} (OOD) data has been studied in a vast body of work through two lenses.
First, the goal of \emph{domain adaptation}, or generalizing to an unseen domain, has been studied from a minimax perspective in statistics, e.g.~\citep{kpotufe2021marginal,pathak2022new} and learning theory, e.g.~\citep{mansour2008domain}.
Such minimax bounds are often pessimistic and do not reflect real-world distribution shifts that may be more structured in nature.
An alternative and increasingly popular model for distribution shift consists of a \emph{mixture model} in which data comes from one of a finite set of pre-defined groups, each of which possesses a very different data distribution.
Typically, some groups (designated as \emph{minority groups}) are under-represented in the training data, and a natural type of distribution shift would entail the minority groups being more prevalent, or ``upsampled", in test data.
Under this model, it suffices to minimize the \emph{worst-group error} to achieve robustness to distribution shift.
To model further structure,~\cite{sagawa2020investigation} introduced the Gaussian mixture model with a \emph{spurious feature}.
Here, the data is partitioned into two groups depending on the sign of the spurious feature.
Such spurious features are often present in real-world datasets and induce \emph{spurious correlations} between input and output due to imbalance in the representation of the two groups~\cite{sagawa2020investigation}.
A related setting with \emph{class imbalance} was also studied by~\cite{wang2021importance}. For this setting, researchers have proposed specially designed \emph{group-aware} loss functions to alleviate the impact of OOD data during training, as seen in~\cite{kini2021label,li2021autobalance,wang2021importance,cao2019learning,menon2020long}. Notably,~\cite{kini2021label} introduced the \emph{vector-scaling} loss (VS-loss) and demonstrated both theoretical and empirical benefits in using this loss function for worst-group error~\cite{behnia2022avoid}. The implicit bias of the VS-loss turns out to align with the cost-sensitive minimum-norm solution that we study in this work for high-dimensional data~\cite{behnia2022avoid}.
This makes our findings directly applicable to estimators derived using the VS-loss.
Finally, while our work as well as the above discussion focused on unregularized empirical risk minimization with the gradient descent algorithm, numerous studies utilize adaptive/robust optimization methods~\cite{sagawa2019distributionally,hu2018does,samuel2021distributional}. These methods aim to minimize the worst-case error within predefined groups instead.

Compared to our work, the upper bound on worst-group error of~\cite{behnia2022avoid} is more loose (in particular in its dependence on the number of minority examples); on the other hand, its assumptions on data dimensionality and signal strength are slightly less stringent. 
~\cite{wang2021importance} studied the effect of importance weighting on an interpolating classifier trained with a \emph{polynomial loss function}, and showed that a similar worst-class error scaling is possible to our Theorem~\ref{thm:wst_risk_upperbound} for a specific case of the importance weights.
However, the assumptions on data dimensionality made in~\cite{wang2021importance} are significantly more stringent.
Moreover, neither of these papers provides a matching lower bound on the worst-group risk or theoretically characterizes the impact of importance weight magnitude on ID and OOD error, nor do these papers characterize the impact of explicit ridge regularization.
Simulations conducted in~\cite{wang2021importance} do highlight a robustness-accuracy tradeoff as a function of the magnitude of the importance weight that we uncover mathematically in our work.
See Table~\ref{tab:comparison} for a brief contextualization of our results with these closely related papers.

Our approach is quite different from that taken in in~\cite{wang2021importance,behnia2022avoid}, which both apply the proof strategy proposed in~\cite{chatterji2021finite} which derives bounds between gradient updates of data examples in gradient descent.
While that approach is generally applicable to a variety of training loss functions and label noise models, it requires larger data dimension requirements that can be removed by other analyses even for ID model generalization~\cite{wang2022binary,muthukumar2021classification}. 
We instead take the approach of \emph{benign overfitting in classification of binary/multiclass GMM}~\cite{wang2022binary,wang2021benign,cao2021risk}, which is known to provide sufficient and necessary conditions for ID generalization (i.e. in the absence of spurious features).
The introduction of spurious features and unequal weights on training examples introduces new technical challenges in the analysis even for binary classification. We discuss in detail how we tackle these challenges in Appendix~\ref{app:primitive_proof}.

Finally, we wish to mention that our in-depth contextualization above has focused on the impact of distribution shift in the spurious feature setup with distinct groups, and on classification error.
A parallel question of the impact of covariate shift on regression error in the overparameterized regime has also received sizable recent interest; see, e.g.~\cite{tripuraneni2021overparameterization,tripuraneni2021covariate,feng2024towards}.
This work predominantly studies random-feature models and the impact of overparameterization on robustness.

\vspace{-2mm}
\subsection{Notation}
We use lower-case boldface (e.g. $\x$) to denote vector notation and upper-case boldface (e.g. $\X$) to denote matrix notation.
We use $\nnorm[2]{\cdot}$ to denote the $\ell_2$-norm of a vector and the operator norm of a matrix. We use $\lbf[k]{\X}$ to denote the $k$-th largest eigenvalue of matrix $\X$ for $\X\in\R^{n\times n}$ and $1\leq k \leq n$. We use the shorthand notation $[n]$ to denote the set of natural numbers $\{1,\ldots,n\}$.
We use $C,C_1,C_2,\ldots > 0$ to denote universal and finite constants independent of all problem parameters, that can change line to line.
\begin{table}[htbp]
    \renewcommand{\arraystretch}{1.2}
    \caption{Comparison of our results with related work}
    \label{tab:comparison}
    \centering
    \resizebox{0.85\textwidth}{!}{\begin{tabular}{|c | c | c | c|} 
    \hline
     & $\Riskf[wst]{\hw}$ & Assumed lower bound on $d$ & Assumed lower bound on $R_+$ \\
    \hline\hline
    Wang et al.~\cite{wang2021importance} & $\mathcal{O}(R_+^2n_-/d)$ & $\Omega(n^3\log(n/\delta))$ & $\Omega(n^2\log(n/\delta))$ \\ 
    \hline
    Behnia et al.~\cite{behnia2022avoid} & $\mathcal{O}(R_+^2/d)$ &  $\Omega(n^2\log(n/\delta))$ & $\Omega(\log(n/\delta))$ \\
    \hline
    \textbf{Our result ($\Delta_b = \frac{n_b}{n}$)} & \textbf{$\Theta(R_+^2n_-/d)$} &  \textbf{$\Omega(n^2\log(n/\delta))$} & \textbf{$\Omega(n\log(n/\delta))$} \\ 
    \hline
    \end{tabular}}
\end{table}

\section{Our group-wise error characterizations}\label{sec:wst_risk_upperbound}
%
%
%
\paragraph{Data distribution.}
We use the setup in~\cite{behnia2022avoid}, i.e.~a binary classification problem with data possessing spurious features which incorrectly correlate with its label. We denote $\x\in\R^d$ as a \emph{feature} and $y\in\seq{+1, -1}$ as a \emph{label} which is equal to $1$ with probability $\pi_{+1}$ and $-1$ with probability $\pi_{-1}$. Finally, we denote $a\in\seq{+1, -1}$ as the \emph{attribute} of the data, meaning that the case $a \neq y$ indicates that the example possesses spurious features. Accordingly, $b=y\times a\in\seq{+1, -1}$ partitions the data into two groups distinguishing the presence of spurious features ($b=+1$)  or not ($b=-1$). We consider a product distribution on the features, labels and attributes, i.e.~$\mathcal{P}=\mathcal{P}_{x}\times \mathcal{P}_{y}\times\mathcal{P}_{a}$. We assume that the feature distribution is a \emph{Gaussian mixture model} (GMM) such that $\x$ is a Gaussian vector conditioned on $y$ and $a$ with mean vector $\bmu =\mts{y\bmu_c; a\bmu_s}\in\R^d$ and covariance is $\I_d$, where we denote $\bmu_c$ as the ``correct" class mean vector and $\bmu_s$ is the ``spurious" class mean vector. We also denote $\barmu_c:=\mts{\bmu_c; \zero}\in\R^d$ and $\barmu_s:=\mts{\zero; \bmu_s}\in\R^d$. More formally, we have:
\begin{gather}
    \x|y,a\sim\mathcal{N}\pts{
    \begin{bmatrix}
    y \bmu_c \\
    a \bmu_s 
    \end{bmatrix}, \I_d}, \\
    \x = \begin{bmatrix} \label{eq:x_definition}
    y \bmu_c \\
    a \bmu_s 
    \end{bmatrix} + \z = y\barmu_c+a\barmu_s + \z,
\end{gather}
where $\z\sim\mathcal{N}\pts{\zero, \I_d}$. We also denote the mean vector for each group as $\bmu_b=y\bmu=\mts{\bmu_c; b\bmu_s}$. 
Finally, we denote the \emph{total signal strength} as $R_+=\nnorm[2]{\bmu}^2=\nuc^2 + \nus^2$ and the \emph{signal strength difference} as $R_-=\nuc^2 - \nus^2$.

\paragraph{Training dataset.}
We consider a dataset $\seqr{\x_i, y_i, a_i}{i=1}{n}$ where each example is drawn i.i.d. from the distribution $\mathcal{P}$. We consider the number of \emph{minority examples} (i.e. where $b_i = -1$) to be $n_-$ and the number of \emph{majority examples} (i.e. where $b_i = +1$) to be is $n_+ = n - n_-$. We also denote $\X\in\R^{n\times d}$ as the training data matrix (where each row in $\X$ is a data point), $\y\in\R^n$ as the label vector, $\ba\in\R^n$ as the attribute vector. The definition of $\x$ in Eq.~\eqref{eq:x_definition} gives us
\begin{align}\label{eq:x_matrix_def}
    \X=\y\barmu_c^\top+\ba\barmu_s^\top+\Q,
\end{align}
where $\Q\in\R^{n\times d}$ and each row in $\Q$ follows $\mathcal{N}\pts{\zero, \I_d}$.
%

\paragraph{Assumptions on data.}
We first make some mild assumptions: a) we operate in the \emph{overparameterized regime} ($d \geq n$) in which the dataset is linearly separable almost surely; b) the number of majority examples exceeds the minority examples ($n_+ \geq \frac{n}{2}$), and c) the strength of the correct features exceeds the strength of the spurious features ($R_- \geq 0$).
Next, we introduce our stronger assumptions on the dataset that are required to sharply analyze the group-wise classification error. These assumptions are considerably weaker than those made in~\cite{wang2021importance} and mostly identical to those made in~\cite{behnia2022avoid}, with the exception of Assumption~\ref{asm:dataset_gen}(B) which is slightly stronger.
%
\begin{assumption}\label{asm:dataset_gen}
    For a target failure probability $\delta$, there exists a large enough constant $C>0$ such that
    \begin{multicols}{2}
    \renewcommand{\labelenumi}{\Alph{enumi})}
    \begin{enumerate}
        \item $n \geq C \logf{1/\delta}$,
        \item $\nnorm[2]{\barmu_c}^2 \geq Cn\logf{n/\delta}$,
        \item $d \geq CR_+n$,
        \item $d \geq Cn^2\logf{n/\delta}$.
    \end{enumerate}
    \end{multicols}
\end{assumption}
\paragraph{Estimators.}
We deploy a linear model $\w\in\R^d$ that solves \emph{empirical risk minimization (ERM) with ridge regularization}, i.e.~$\umin{\w\in\R^d}\Riskf{\w} = \frac{1}{n}\sumn\ellf{y_i, \ip{\w}{\x_i}} + \tau\nnorm[2]{\w}^2,$ where $\ell$ is the loss function and $\tau\geq0$. Note that when $\tau=0$, we recover the special case of unregularized ERM, which will result in an interpolating solution. We begin our analysis with the squared loss with \emph{adjusted labels} for this classification problem, $\ellsf[\text{sq}]{y, z} = \pts{\Delta^{-1}y - z}^2$, where $0 < \Delta \leq 1$ is the \emph{adjustment weight} of the label (also commonly called \emph{importance weight}). Typically, we adjust the weights $\Delta$ differently for majority and minority examples to mitigate spurious correlations. Accordingly, we denote $\Delta_+$ and $\Delta_-$ as the adjustment weight for majority examples and minority examples. This results in the closed-form \emph{cost-sensitive ridge estimator}
\begin{gather}\label{eq:sq_ridge_sol}
    \hw_{\text{ridge}}:=\XXXtaudy,
\end{gather}
where $\bDelta\in\R^{n\times n}$ is the diagonal matrix of adjustment weights, i.e.~$\bDelta_{i,i} = \Delta_{b_i}$.
%
Since $d \geq n$ and we assume Gaussian covariates, $\X$ is almost surely full-rank.
Therefore, in the special case where $\tau = 0$, solving the unregularized ERM with gradient descent (initialized at $0$) would yield the \emph{cost-sensitive minimum-norm interpolation} (cMNI) of the adjusted labels, i.e.
%
\begin{gather}\label{eq:mni_sol}
    \hw_{\text{cMNI}}:=\uargmin{\w}\nnorm[2]{\w}\\
    \text{ s.t. } \Delta_{b_i}\ip{\w}{\x_i} = y_i \text{ for all } i\in[n]\label{eq:mni_constraint},
\end{gather}
This solution turns out to be expressible in closed form as:
\begin{gather}\label{eq:sq_sol}
    \hw_{\text{cMNI}}:=\XXXdy.
\end{gather}
%
Note that decreasing the adjustment weight $\Delta_{b_i}$ would induce a larger \emph{margin} on the example $\x_i$ in Eq.~\eqref{eq:mni_constraint}, which could alleviate at least in part the harmful effect of spurious correlations.
On the other hand, very low adjustment weights could also increase the variance of the cMNI estimator.
As a final remark, we note that under Assumption~\ref{asm:dataset_gen},~\cite{behnia2022avoid} shows that the cost-sensitive support-vector-machine (cSVM) solution (arising from gradient descent run on a logit-adjusted loss~\cite{kini2021label}) exactly coincides with the cMNI solution presented in Eq.~\eqref{eq:mni_sol}. Therefore, our forthcoming characterization of the cMNI estimator also automatically applies to the cSVM estimator, which is more popular in practice~\cite{kini2021label}. 

%
\paragraph{Generalization error.}
For a fresh (test) sample $\pts{\x, y, a}$, we can calculate the expected 0-1 error risk as $\Riskf{\hw}=\mathds{E}\mts{\mathds{1}\pts{y\neq\hat{y}}}=\text{Pr}\mts{y\hw^\top \x < 0}.$
%
%
Further, we consider the worst-group error, defined below as
\begin{align}\label{eq:worst_group_risk}
    \Riskf[wst]{\hw}=\umax{b\in\pm1}~\Riskf[b]{\hw}=\umax{b\in\pm1}~\text{Pr}\mts{y\hw^\top \x < 0 | y \times a = b}.
\end{align}
Our main focus in this work is deriving tight upper and lower bounds for Eq.~\eqref{eq:worst_group_risk}.

\subsection{Error analysis results}
%
We present our main result, an upper bound for the group-wise generalization error, in Theorem~\ref{thm:wst_risk_upperbound} below.
\begin{theorem}\label{thm:wst_risk_upperbound}
    Under Assumption~\ref{asm:dataset_gen}, the generalization error for each group $b \in \{+1,-1\}$ is upper bounded as 
    \begin{gather}
        \Riskf[b]{\hw}\leq\expf{-C_1\frac{ \pts{\alpha_+ R_b^2n_+ + \alpha_-R_{-b}^2n_-}}{d}},
    \end{gather}
    with probability at least $1 -\delta$, where we defined $\alpha_\pm \coloneqq \frac{n_\pm/\Delta_\pm^2}{\nddpndd}=\frac{n_\pm/\Delta_\pm^2}{\nd}$ and we have $0 < \alpha_\pm < 1$.
\end{theorem}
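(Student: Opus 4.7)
My plan is to reduce the group-conditional 0-1 error to a one-dimensional Gaussian tail bound governed by the signal-to-noise ratio $(\hw^\top\bmu_b)^2/\|\hw\|_2^2$, and then estimate this ratio from the closed form~\eqref{eq:sq_sol} via an orthogonal decomposition of $\X$ combined with the Sherman--Morrison--Woodbury identity. This follows the benign-overfitting roadmap of~\cite{wang2022binary,wang2021benign}; the new ingredient is carefully tracking the adjustment-weight vector $\boldsymbol{\pi}\coloneqq\bDelta^{-1}\y$ (entries $y_i/\Delta_{b_i}$) through the two low-rank signal directions. First I would fix $b\in\{\pm1\}$, condition on the training data (and hence on $\hw$), and write a fresh test point as $\x=y\barmu_c+a\barmu_s+\z$ with $\z\sim\mathcal{N}(\zero,\I_d)$ independent of $\hw$. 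Multiplying by $y$ and using $ya=b$, $y^2=1$ yields $y\hw^\top\x=\hw^\top\bmu_b+y\hw^\top\z$ with $\bmu_b=\barmu_c+b\barmu_s$; since $y\hw^\top\z\sim\mathcal{N}(0,\|\hw\|_2^2)$, a standard Mills-ratio bound gives $\Riskf[b]{\hw}\le\expf{-(\hw^\top\bmu_b)^2/(2\|\hw\|_2^2)}$ on the event $\hw^\top\bmu_b>0$, so it remains to lower-bound $\hw^\top\bmu_b$ and upper-bound $\|\hw\|_2^2$.

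The key structural move is to split $\Q$ orthogonally along $\mathrm{span}(\barmu_c,\barmu_s)$: set $\bu_c=\Q\barmu_c$, $\bu_s=\Q\barmu_s$, and $\Q_V=\Q-\bu_c\barmu_c^\top/\nuc^2-\bu_s\barmu_s^\top/\nus^2$, so that $\Q_V$ is independent of $(\bu_c,\bu_s)$. Defining $\tilde\y\coloneqq\y+\bu_c/\nuc^2$, $\tilde\ba\coloneqq\ba+\bu_s/\nus^2$, and $\boldsymbol{U}\coloneqq[\nuc\tilde\y,\;\nus\tilde\ba]\in\R^{n\times 2}$, a direct computation yields the clean low-rank-plus-bulk form
\begin{gather*}
\X\X^\top=\M+\boldsymbol{U}\boldsymbol{U}^\top,\qquad \X\bmu_b=\boldsymbol{U}\bc_b,\qquad \bc_b=(\nuc,\,b\nus)^\top,
\end{gather*}
where $\M\coloneqq\Q_V\Q_V^\top$ is essentially Wishart in effective dimension $d-2$ and independent of both $\boldsymbol{U}$ and $\bc_b$. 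Applying Woodbury with $\boldsymbol{G}\coloneqq\I_2+\boldsymbol{U}^\top\Mi\boldsymbol{U}$ gives the exact identities
\begin{gather*}
\hw^\top\bmu_b=\boldsymbol{\pi}^\top\Mi\boldsymbol{U}\boldsymbol{G}^{-1}\bc_b,\qquad \|\hw\|_2^2=\boldsymbol{\pi}^\top\Mi\boldsymbol{\pi}-\boldsymbol{\pi}^\top\Mi\boldsymbol{U}\boldsymbol{G}^{-1}\boldsymbol{U}^\top\Mi\boldsymbol{\pi}\le\boldsymbol{\pi}^\top\Mi\boldsymbol{\pi},
\end{gather*}
reducing the entire problem to a handful of scalar inner products plus the inversion of a single $2\times 2$ matrix.

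Under Assumption~\ref{asm:dataset_gen}, standard Gaussian concentration (in the spirit of the primitive estimates in Appendix~\ref{app:primitive_proof}) gives $\lambda_i(\M)=d(1\pm o(1))$, $\|\bu_c\|_2^2=n\nuc^2(1\pm o(1))$, $\|\bu_s\|_2^2=n\nus^2(1\pm o(1))$, together with sharp estimates for the inner products $\y^\top\bu_c$, $\ba^\top\bu_s$, $\bu_c^\top\bu_s$, $\boldsymbol{\pi}^\top\bu_c$, $\boldsymbol{\pi}^\top\bu_s$. Plugging these in, $\boldsymbol{U}^\top\Mi\boldsymbol{U}$ has diagonal entries of order $\nuc^2n/d$ and $\nus^2n/d$ and off-diagonal of order $\nuc\nus(n_+-n_-)/d$, while $\boldsymbol{\pi}^\top\Mi\boldsymbol{U}\approx d^{-1}\bigl(\nuc(\tfrac{n_+}{\Delta_+}+\tfrac{n_-}{\Delta_-}),\,\nus(\tfrac{n_+}{\Delta_+}-\tfrac{n_-}{\Delta_-})\bigr)$ and $\boldsymbol{\pi}^\top\Mi\boldsymbol{\pi}\approx\nd/d$. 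Exact $2\times 2$ inversion of $\boldsymbol{G}$ followed by the identity $\nuc^2(\tfrac{n_+}{\Delta_+}+\tfrac{n_-}{\Delta_-})+b\nus^2(\tfrac{n_+}{\Delta_+}-\tfrac{n_-}{\Delta_-})=R_b\tfrac{n_+}{\Delta_+}+R_{-b}\tfrac{n_-}{\Delta_-}$ yields $\hw^\top\bmu_b\ge\tfrac{c_1}{d}\bigl(R_b\tfrac{n_+}{\Delta_+}+R_{-b}\tfrac{n_-}{\Delta_-}\bigr)$ and $\|\hw\|_2^2\le c_2\nd/d$; since both summands in the signal bound are non-negative, $(a+b)^2\ge a^2+b^2$ gives $(\hw^\top\bmu_b)^2/\|\hw\|_2^2\gtrsim(R_b^2n_+^2/\Delta_+^2+R_{-b}^2n_-^2/\Delta_-^2)/(d\nd)=(\alpha_+R_b^2n_++\alpha_-R_{-b}^2n_-)/d$, which inserted into the Gaussian tail bound is exactly the claim. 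The hardest part is the exact inversion of $\boldsymbol{G}$: Assumption~\ref{asm:dataset_gen}(C) only forces $d\gtrsim R_+n$, so $\nuc^2n/d$ is $\Theta(1)$ and $\boldsymbol{G}$ is \emph{not} close to $\I_2$; moreover the cost-sensitive weighting breaks the symmetry $\boldsymbol{\pi}^\top\tilde\y\neq\boldsymbol{\pi}^\top\tilde\ba$ present in the unweighted MNI, so $\boldsymbol{\pi}^\top\Mi\boldsymbol{U}\boldsymbol{G}^{-1}$ mixes the two signal directions in a $b$-dependent way, and keeping the $R_b^2n_+^2/\Delta_+^2$ and $R_{-b}^2n_-^2/\Delta_-^2$ contributions cleanly separated so that the final $(a+b)^2\ge a^2+b^2$ step produces the sharp constants $\alpha_\pm$ is the most delicate piece of the argument.
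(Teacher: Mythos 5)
Your proposal is correct and takes a genuinely different route from the paper's. Section~\ref{sec:upperbound_proof} together with Appendix~\ref{app:primitive_proof} peels the two signal directions off $\X$ recursively: each peel $\X_k=\bv_k\barmu_k^\top+\X_{k-1}$ produces a rank-$3$ update $\bL_k\bR_k$ on the Gram matrix (the extra rank coming from the spillover $\bd_k=\X_{k-1}\barmu_k$), and a two-level induction on ``primitive'' quadratic forms reduces everything to the inverse Wishart matrix $\QQtaui$. You instead project $\Q$ orthogonally onto $\mathrm{span}(\barmu_c,\barmu_s)^\perp$, absorbing $\Q\barmu_c,\Q\barmu_s$ into tilted label and attribute vectors $\tilde{\y},\tilde{\ba}$ so that the residual $\Q_V$ annihilates both mean directions; all cross terms then cancel, the Gram matrix splits exactly as $\Q_V\Q_V^\top+\boldsymbol{U}\boldsymbol{U}^\top$ with the two pieces conditionally independent, and a single Woodbury step with a $2\times 2$ inner block $\boldsymbol{G}$ suffices. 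For the binary problem this is cleaner — the signal term $\X\bmu_b=\boldsymbol{U}\bc_b$ has no spillover at all — at the cost of discarding the recursive scaffolding that makes the paper's method a direct descendant of the multiclass analysis of~\cite{wang2021benign}. Concentration still has to be re-derived for the new primitives $\tilde{\y}^\top(\Q_V\Q_V^\top)^{-1}\tilde{\y}$, $\tilde{\y}^\top(\Q_V\Q_V^\top)^{-1}\tilde{\ba}$, $\boldsymbol{\pi}^\top(\Q_V\Q_V^\top)^{-1}\tilde{\y}$, etc., but the orders you quote are correct.

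Two items would need to be written out to close the argument. First, your denominator bound $\nnorm[2]{\hw}^2\le\boldsymbol{\pi}^\top(\Q_V\Q_V^\top)^{-1}\boldsymbol{\pi}$ is exact only at $\tau=0$; for ridge ($\tau>0$) the same step gives $\nnorm[2]{\hw}^2\le\boldsymbol{\pi}^\top(\Q_V\Q_V^\top+\tau\I)^{-1}\boldsymbol{\pi}\asymp\nd/(d+\tau)$, which would produce a $1/(d+\tau)$ exponent rather than the $\tau$-free $1/d$ of the theorem; the paper recovers $1/d$ by bounding $\boldsymbol{\pi}^\top\XXtaui\XX\XXtaui\boldsymbol{\pi}$ directly (the primitive $\oididk$ picks up an extra factor of $d/(d+\tau)$), and you would need the analogous sharper estimate for $\boldsymbol{\pi}^\top(\Q_V\Q_V^\top+\tau\I)^{-1}\Q_V\Q_V^\top(\Q_V\Q_V^\top+\tau\I)^{-1}\boldsymbol{\pi}$. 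Second, on the $\boldsymbol{G}^{-1}$ inversion: under Assumption~\ref{asm:dataset_gen}(C) the operator norm of $\boldsymbol{U}^\top(\Q_V\Q_V^\top)^{-1}\boldsymbol{U}$ is $\lesssim 1/C$, so $\boldsymbol{G}$ is in fact within a small constant of $\I_2$; the real subtlety, which you correctly sense, is that for $b=-1$ the target $\boldsymbol{\pi}^\top(\Q_V\Q_V^\top)^{-1}\boldsymbol{U}\bc_{-1}\asymp\bigl(R_-\tfrac{n_+}{\Delta_+}+R_+\tfrac{n_-}{\Delta_-}\bigr)/d$ can be far smaller than $\nnorm[2]{\boldsymbol{\pi}^\top(\Q_V\Q_V^\top)^{-1}\boldsymbol{U}}\cdot\nnorm[2]{\bc_{-1}}$, so a crude operator-norm perturbation of $\boldsymbol{G}^{-1}-\I_2$ is not enough. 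Expanding the explicit $2\times 2$ inverse, the off-diagonal entry $g_{12}\asymp\nuc\nus(n_+-n_-)/d\ge 0$ contributes $+2g_{12}\nuc\nus\,n_-/\Delta_-$ to the $b=-1$ numerator, so every correction pushes the lower bound on $\hw^\top\bmu_{-1}$ the right way and the claim survives — but this sign verification is the crux of the ``delicate piece'' you flagged and needs to be spelled out rather than gestured at.
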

Notice that Theorem~\ref{thm:wst_risk_upperbound} characterizes the group-wise error as a function of: a) the number of majority and minority examples $n_+,n_-$, b) the total and difference of signal strength $R_+, R_-$, c) the number of features $d$, and d) the parameter $\alpha_\pm$ which is a functional of the adjustment weights $\Delta_\pm$ (and $n_+,n_-$). 
The forthcoming Proposition~\ref{pro:wst_lower_bound} shows that this rate is sharp up to a universal constant factor in all of these problem parameters.
\begin{proposition}\label{pro:wst_lower_bound}
Under Assumption~\ref{asm:dataset_gen}, the generalization error for each group $b \in \{+1,-1\}$ is lower bounded as
    \begin{gather}
        \Riskf[b]{\hw}\geq C_2 \expf{- C_3 \frac{ \pts{\alpha_+R_b^2n_+ + \alpha_-R_{-b}^2n_-}}{d}},
    \end{gather}
    where $\alpha_\pm \coloneqq \frac{n_\pm/\Delta_\pm^2}{\nddpndd}=\frac{n_\pm/\Delta_\pm^2}{\nd}$ and $0 < \alpha_\pm < 1$.
\end{proposition}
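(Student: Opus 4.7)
The plan is to express $\Riskf[b]{\hw}$ as a Gaussian tail and then obtain the claim by reading the two-sided concentration estimates that underlie Theorem~\ref{thm:wst_risk_upperbound} in the reverse direction, combined with a Gaussian lower-tail bound. For a fresh test point with $ya=b$, writing $\x=y\barmu_c+by\barmu_s+\z$ with $\z\sim\mathcal{N}(\zero,\I_d)$ independent of the training sample, we have conditionally on the training data that $y\hw^\top\x\sim\mathcal{N}\bigl(\hw^\top(\barmu_c+b\barmu_s),\nnorm[2]{\hw}^2\bigr)$, so
\begin{align*}
    \Riskf[b]{\hw}=\Phi(-\rho_b),\qquad \rho_b:=\frac{\hw^\top(\barmu_c+b\barmu_s)}{\nnorm[2]{\hw}}.
\end{align*}
Since $\Phi(-t)\geq C_2\expf{-C_3 t^2}$ for every $t\geq 0$ (trivially for $t\leq 1$ via $\Phi(-1)>0$; via Mills' ratio for $t\geq 1$), it suffices to establish an upper bound $\rho_b^2\leq C\,\pts{\alpha_+R_b^2n_+ + \alpha_- R_{-b}^2 n_-}/d$ with probability at least $1-\delta$.

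Using the orthogonality of $\barmu_c$ and $\barmu_s$ together with $\X=\y\barmu_c^\top+\ba\barmu_s^\top+\Q$, we expand
\begin{align*}
    \hw^\top\barmu_c &= \nnorm[2]{\barmu_c}^2\,\y^\top\XXi\bDelta^{-1}\y + (\Q\barmu_c)^\top\XXi\bDelta^{-1}\y,\\
    \hw^\top\barmu_s &= \nnorm[2]{\barmu_s}^2\,\ba^\top\XXi\bDelta^{-1}\y + (\Q\barmu_s)^\top\XXi\bDelta^{-1}\y,\\
    \nnorm[2]{\hw}^2 &= \y^\top\bDelta^{-1}\XXi\bDelta^{-1}\y,
\end{align*}
and I would appeal to the two-sided concentration of each of these three bilinear forms that is necessarily produced in the proof of Theorem~\ref{thm:wst_risk_upperbound}. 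The leading-order deterministic equivalents scale like $\nnorm[2]{\barmu_c}^2(n_+/\Delta_+ + n_-/\Delta_-)/d$, $\nnorm[2]{\barmu_s}^2(n_+/\Delta_+ - n_-/\Delta_-)/d$, and $\nd/d$ respectively (using $\ba^\top\bDelta^{-1}\y=\sum_i b_i/\Delta_{b_i}$ and $\y^\top\bDelta^{-2}\y=\nd$, with the noise cross-terms $\Q\barmu_{c},\Q\barmu_s$ absorbed into the concentration error). Combining the upper side of the two numerator estimates with the lower side of the denominator estimate, and separating the $b=+1$ and $b=-1$ summands via $(a+b)^2\leq 2(a^2+b^2)$, produces $\rho_b^2\lesssim\pts{\alpha_+R_b^2n_+ + \alpha_- R_{-b}^2 n_-}/d$ by exactly the same algebra that yields the matching upper bound in Theorem~\ref{thm:wst_risk_upperbound}, only with every inequality flipped.

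The main technical obstacle is the \emph{lower} bound on $\nnorm[2]{\hw}^2$. In the benign-overfitting-style decomposition of $\XXi$ into a low-rank piece spanned by $\{\y,\ba,\Q\barmu_c,\Q\barmu_s\}$ plus an orthogonal piece governed by the noise spectrum of $\Q\Q^\top$, it is the orthogonal piece that drives $\nnorm[2]{\hw}^2$ from below; controlling it requires the upper spectral bound $\lbf[1]{\Q\Q^\top}\leq C(d+n)$, which under Assumption~\ref{asm:dataset_gen}(D) concentrates around $d$. Care is needed because the vector $\bDelta^{-1}\y$ carries two distinct entry magnitudes $1/\Delta_\pm$ on majority and minority indices, so the interaction between the group-indicator vectors and the noise eigendirections must be tracked precisely; this interaction is exactly what produces the $\nd=n_+/\Delta_+^2+n_-/\Delta_-^2$ weighting inside $\alpha_\pm$. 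Once this two-sided concentration estimate is in hand, the remaining calculation is entirely mechanical and the proposition follows by substitution into the Gaussian lower-tail bound from the first paragraph.
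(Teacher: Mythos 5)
Your proposal is essentially the paper's proof: both apply a Gaussian lower-tail (matching Q-function lower) bound and then upper-bound the exponent $\frac{(\hw^\top\bmu_b)^2}{2\nnorm[2]{\hw}^2}$ by re-running the Theorem~\ref{thm:wst_risk_upperbound} calculation with the two-sided primitive bounds of Lemma~\ref{lem:w_primitive_bounds} applied in the opposite direction (lower-bound the denominator, upper-bound the numerator), together with $(a+b)^2\le 2(a^2+b^2)$. The one thing to tighten is your expansion $\nnorm[2]{\hw}^2=\y^\top\bDelta^{-1}\XXi\bDelta^{-1}\y$, which holds only at $\tau=0$; for general ridge the paper controls $\y^\top\bDelta^{-1}\XXtaui\X\X^\top\XXtaui\bDelta^{-1}\y$ through the dedicated primitive $\oididk$, but this is a routine substitution and the argument is otherwise identical.
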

It is instructive to consider a few special cases of Theorem~\ref{thm:wst_risk_upperbound} and Proposition~\ref{pro:wst_lower_bound}.
First, if there is no spurious feature in the data (i.e. $\barmu_s = \mathbf{0}$), it is easy to verify that we recover the \emph{in-distribution} error rate for overparameterized binary GMM in~\cite{wang2022binary}.
Second, if the magnitude of the correct and spurious means are near-identical, i.e.~$\nuc \approx \nus$ or $R_- \approx 0$, we obtain the rates $\Riskf[b]{\hw} \asymp \expf{-\frac{C \alpha_b R_+^2 n_b }{d}}$. Therefore, decreasing $\Delta_b$, increasing total signal strength $R_+$, or increasing $n_b$ would improve the error rate. 
Second, we note that the ridge regularization parameter $\tau$ only affects the bound through its influence on the constants $C_1$ and $C_3$, and $0 < C_1,C_3 < \infty$ for all values of $\tau$.
Therefore, the non-asymptotic dependence of the bound on the parameters $d,n_b,R_b,\alpha_b$ is, interestingly, unaffected by explicit ridge regularization.
All the same, Figure~\ref{fig:error_improve} shows that the empirical worst-group error decreases with an increase in $\tau$. This phenomenon is not captured by our proof technique, as we do not provide exact asymptotic characterizations of the test error\footnote{We do expect that our non-asymptotic bounds would be tighter for higher values of $\tau$, meaning that $C_1$ would \emph{increase} with $\tau$ and $C_3$ would \emph{decrease} with $\tau$ --- but we do not conduct a detailed formal investigation of the exact impact of $\tau$ on the constants in our bounds in this paper.}.
Corollary~\ref{cor:vanish_bound} below discusses sufficient and necessary conditions under which we would obtain vanishing error, i.e.~\emph{consistency} for out-of-distribution generalization.
\begin{figure}[htbp]
\centering 
 \includegraphics[width=80mm]{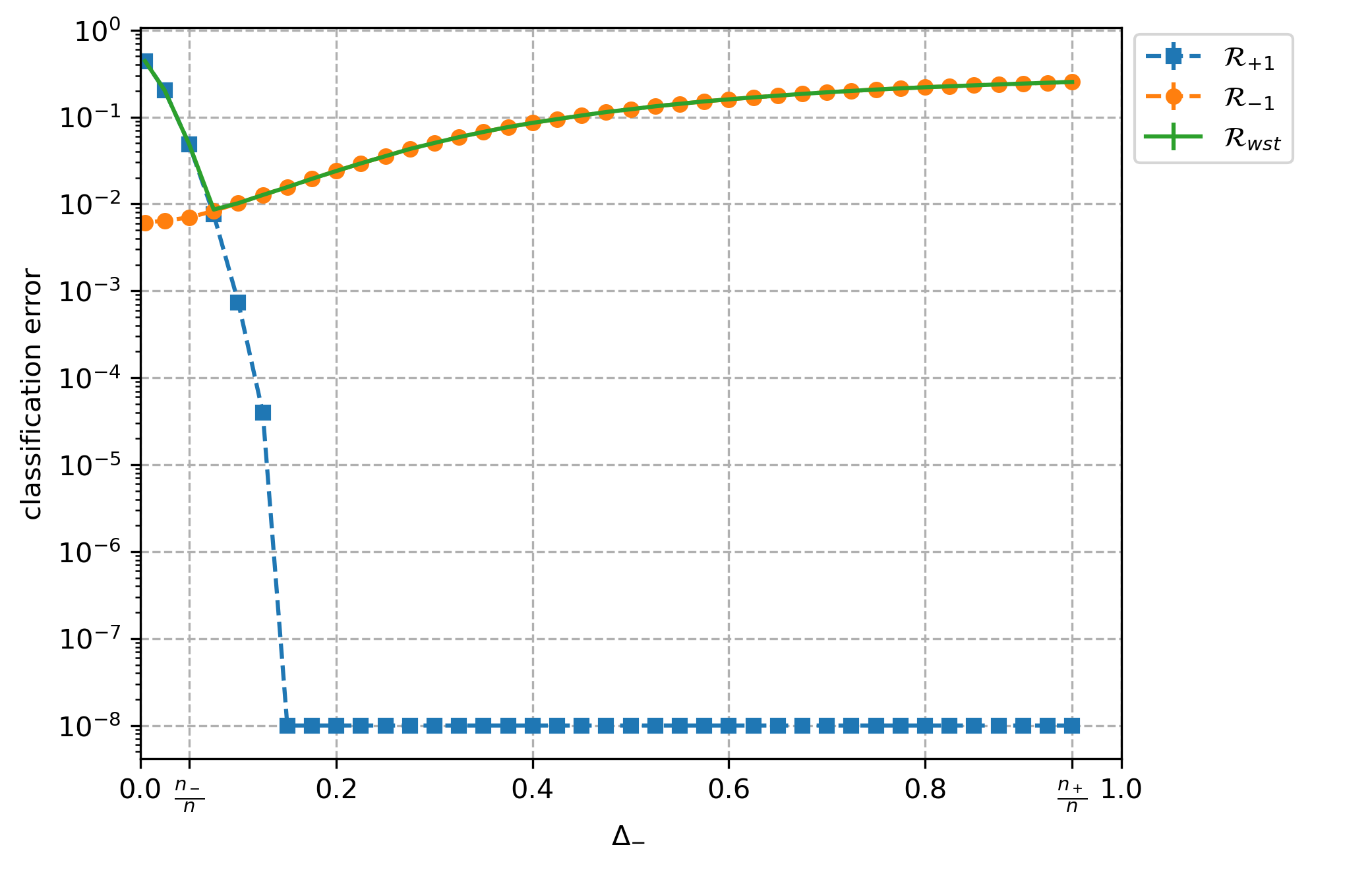}
 \includegraphics[width=68mm]{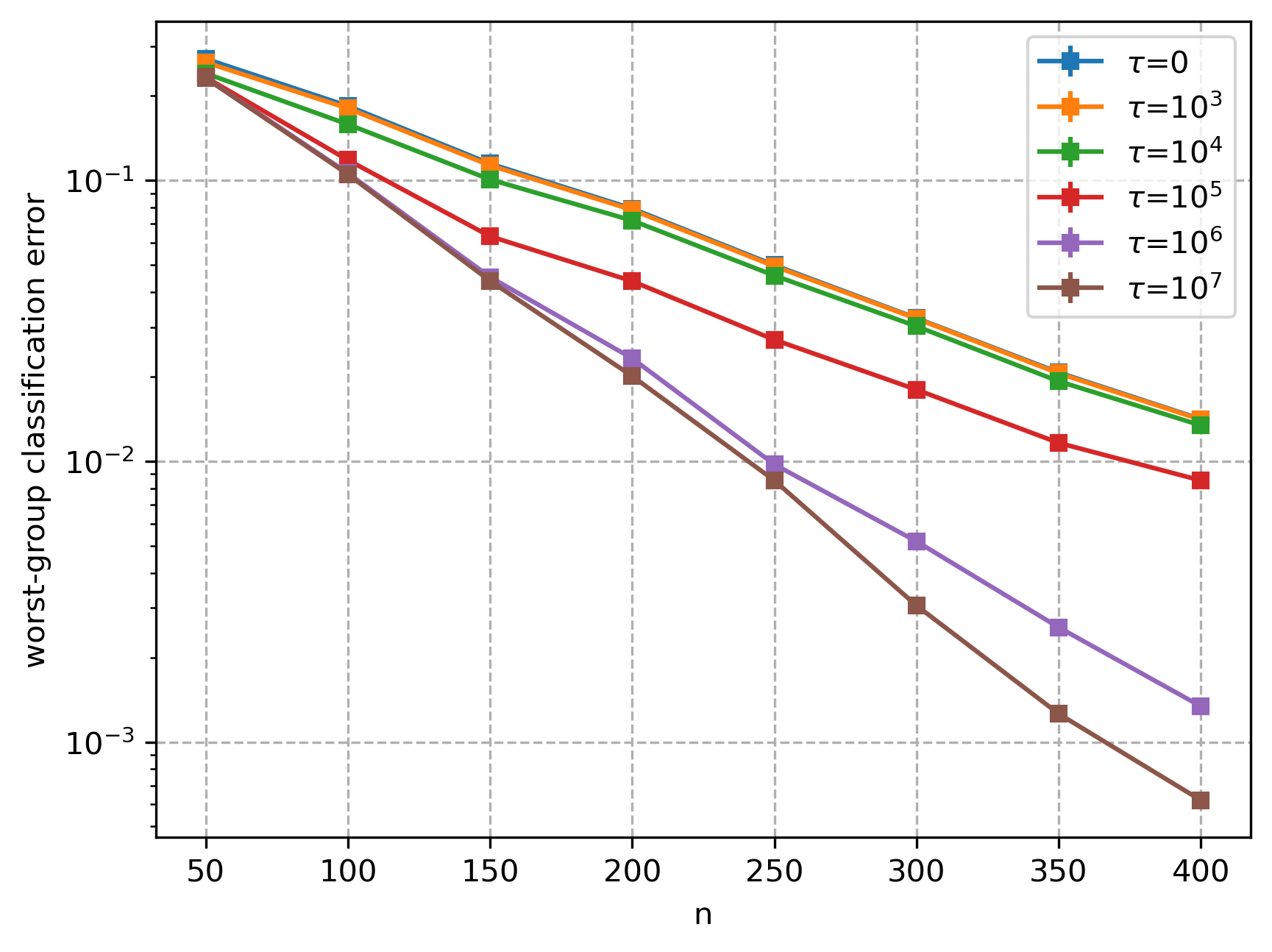}
 \caption{The left panel plots group-wise error as a function of $\Delta_{-}$.
 We fix $d=10^5$, $n=200$, $n_-=10$, $R_+=d^{0.6}/4$, $\bmu_c=\bmu_s=\sqrt{R_+/2}\be_1$ (as in~\cite{behnia2022avoid}), $\Delta_+=\frac{n_+}{n}$ and make $\Delta_-$ decrease from $\frac{n_+}{n}$ to $\frac{n_-}{n}.$ Observe that the worst-group error decreases when $\Delta_-$ decreases until $\Delta_-=\frac{n_-}{n}$, below which the worst-group becomes the majority group, whose error \emph{increases} with decreased $\Delta_{-}$. The right panel plots the worst-group error as a function of $n$ fixing $d=2n^2$, $n_-=0.04n$, $R_+=d^{0.6}/4$, $\bmu_c=\bmu_s=\sqrt{R_+/2}\be_1$, $\Delta_\pm=n_\pm/n$. Observe that increasing the ridge regularization parameter $\tau$ improves the worst-group error rate, but only up to a constant factor in the error exponent. These simulations were obtained by averaging over $10$ trials, and error bars are small enough to not be visible.} \label{fig:error_improve}
\end{figure}

\begin{corollary}\label{cor:vanish_bound}
    Consider the case where $R_- = 0$. Then, under the conditions of Assumption~\ref{asm:dataset_gen}, the generalization error for group $b$ vanishes to $0$ as $\frac{d}{n_b}\rightarrow\infty$ if and only if $R_+^2 = \Omega\left(\frac{d}{\alpha_b n_b}\right)$.
\end{corollary}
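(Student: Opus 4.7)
The plan is to derive this corollary as an immediate consequence of the matching bounds in Theorem~\ref{thm:wst_risk_upperbound} and Proposition~\ref{pro:wst_lower_bound}. The first step is to simplify the exponent $\alpha_+ R_b^2 n_+ + \alpha_- R_{-b}^2 n_-$ under the hypothesis $R_- = 0$. Using the convention $R_{+1} = R_+$, $R_{-1} = R_-$ from the data model, for $b=+1$ the second summand vanishes and the exponent collapses to $\alpha_+ R_+^2 n_+$, while for $b=-1$ the first summand vanishes and the exponent collapses to $\alpha_- R_+^2 n_-$. In both cases the resulting quantity is precisely $\alpha_b R_+^2 n_b$, so Theorem~\ref{thm:wst_risk_upperbound} and Proposition~\ref{pro:wst_lower_bound} together yield the sandwich
\[
C_2 \expf{-C_3\, \alpha_b R_+^2 n_b / d} \;\leq\; \Riskf[b]{\hw} \;\leq\; \expf{-C_1\, \alpha_b R_+^2 n_b / d}.
\]

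Given this two-sided bound, the whole question reduces to when the exponent $\alpha_b R_+^2 n_b/d$ diverges. For the sufficient direction, if $R_+^2 \geq C\, d/(\alpha_b n_b)$ in the intended sense that the ratio grows unboundedly along the regime $d/n_b \to \infty$, then $\alpha_b R_+^2 n_b/d \to \infty$ and the upper bound forces $\Riskf[b]{\hw} \to 0$. For the necessary direction, the lower bound does the work: if the growth hypothesis on $R_+^2$ fails then $\alpha_b R_+^2 n_b/d$ stays bounded, and so $\Riskf[b]{\hw} \geq C_2 \expf{-C_3 \cdot \mathcal{O}(1)}$ is bounded away from $0$ and cannot vanish. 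Since $C_1, C_3$ are finite positive constants (independent of the parameters that are being scaled), divergence of the exponent is equivalent to vanishing of both sides of the sandwich.

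The only bookkeeping point to check is the interaction between the regime $d/n_b \to \infty$ and the condition $R_+^2 = \Omega(d/(\alpha_b n_b))$: because $0 < \alpha_b < 1$ by definition, one has $d/(\alpha_b n_b) \geq d/n_b$, so $d/(\alpha_b n_b)$ itself diverges, and the stated growth condition on $R_+^2$ is genuinely non-trivial. Beyond this, no additional probabilistic or concentration arguments are required, so the main anticipated obstacle is really just presentational, namely stating the asymptotic equivalence cleanly so that the ``if and only if'' reading lines up with the matching exponential rates from the two prior results.
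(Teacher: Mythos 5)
Your proposal is correct and matches the paper's intended (unwritten) argument: plug $R_-=0$ into Theorem~\ref{thm:wst_risk_upperbound} and Proposition~\ref{pro:wst_lower_bound}, observe that the exponent in both bounds collapses to $\alpha_b R_+^2 n_b/d$, and conclude from the two-sided exponential bound that the risk vanishes exactly when this exponent diverges. Your reading of the $\Omega(\cdot)$ condition as requiring the exponent to grow without bound (rather than merely stay bounded below) is the right one and aligns with how the paper uses the corollary in the discussion following Table~\ref{tab:tradeoff_acc_robust}.
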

Intuitively, the condition on $R_+^2 = \Omega\left(\frac{d}{\alpha_b n_b}\right)$ ensures that the total signal strength is sufficiently large, which reduces test error. 
Corollary~\ref{cor:vanish_bound} further shows how the adjustments $\Delta_\pm$, through their influence on the fractions $\alpha_b$ (and therefore the adjustments $\Delta_\pm$) can affect the condition on total signal strength for achieving vanishing group-wise test error.
The implications of Corollary~\ref{cor:vanish_bound} are summarized in Table~\ref{tab:tradeoff_acc_robust} for two extreme choices of adjustment weights: a) $\Delta_\pm = 1$ (i.e. no adjustments) and b) $\Delta_\pm = \frac{n_{\pm}}{n}$ (i.e. importance weighting). 
If we choose $\Delta_\pm=1$, we get $\alpha_b = n_b/n$ and we need $R_+^2 = \Omega\left(\frac{dn}{n_b^2}\right)$ to have vanishing error. Then, since $n_+ \geq \frac{n}{2}$, the majority group generalization error vanishes \emph{iff} $R_+^2 = \Omega(\frac{d}{n})$, which matches the optimal rate of the case without spurious features~\cite{wang2022binary}. However, the condition for vanishing minority-group error becomes much more stringent, and in the worst case yields $R_+^2 = \Omega(dn)$ when $n_-$ is a constant. Thus, $\Delta_\pm = 1$ yields optimal majority-group error but highly suboptimal worst-group error.
On the other hand, if we use $\Delta_\pm = n_\pm/n$, we get $\alpha_b \asymp n_{-b}/n$ and can show that both the majority-group error and worst-group error would vanish \emph{iff} $R_+^2 = \Omega\left(\frac{d}{n_-}\right)$. As a result, this choice improves worst-group error but worsens majority-group error.
Thus Table~\ref{tab:tradeoff_acc_robust} highlights an interesting tradeoff between \emph{average accuracy} and \emph{worst-group robustness} as a function of the adjustment weight magnitudes. 
We briefly explore this tradeoff empirically in Figure~\ref{fig:tradeoff},
which shows a sharp contrast in the rates of decrease of majority-group and minority-group error as a function of $R_{+}^2$ for two choices of the importance weights $\Delta_{\pm}$.
This supports the scalings provided in Table~\ref{tab:tradeoff_acc_robust}.
%
%
\begin{table}[htbp]
    \renewcommand{\arraystretch}{1}
    \caption{$\Delta_\pm$ changes the conditions for $\mathcal{R}_b\rightarrow 0$}
    \label{tab:tradeoff_acc_robust}
    \centering
    \resizebox{0.7\columnwidth}{!}{\begin{tabular}{|c | c | c |} 
    \hline
     & $\Delta_\pm=1$ & $\Delta_\pm = n_\pm / n$ \\
    \hline\hline
    $\mathcal{R}_b$ & $ \asymp\expf{-Cn_b^2R_+^2/dn}$ & $\asymp \expf{-Cn_bn_{-b}R_+^2/dn}$\\
    \hline
    $\Riskf[+1]{\hw}\rightarrow0$ & $R_+^2=\Omega(d/n)$ & $R_+^2=\Omega(d/n_-)$ \\ 
    \hline
    $\Riskf[-1]{\hw}\rightarrow0$ & $R_+^2=\Omega(dn/n_-^2)$ &  $R_+^2=\Omega(d/n_-)$\\
    \hline
    \end{tabular}}
\end{table}

\begin{figure}[htbp]
\centering 
 \includegraphics[width=80mm]{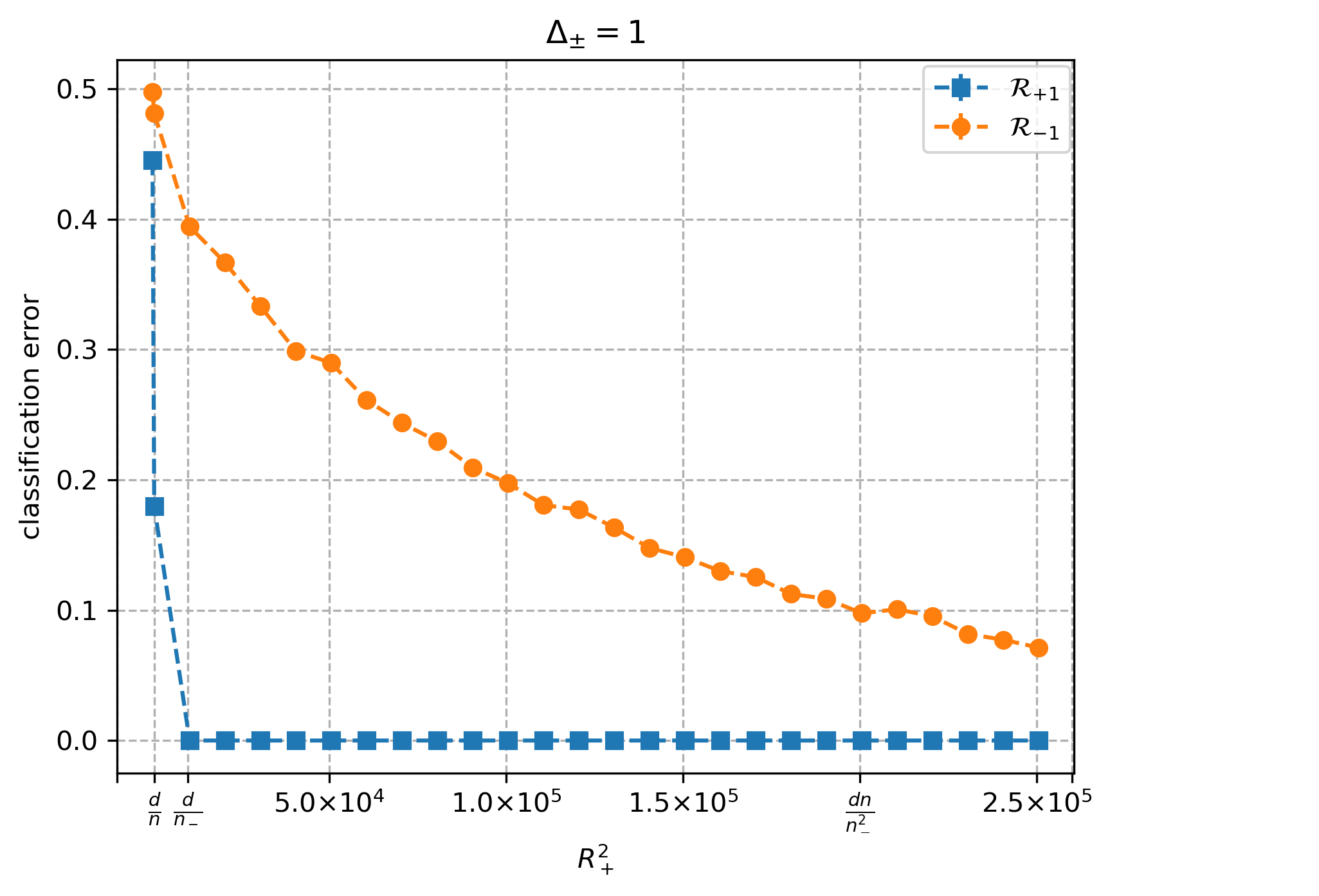}
 \includegraphics[width=80mm]{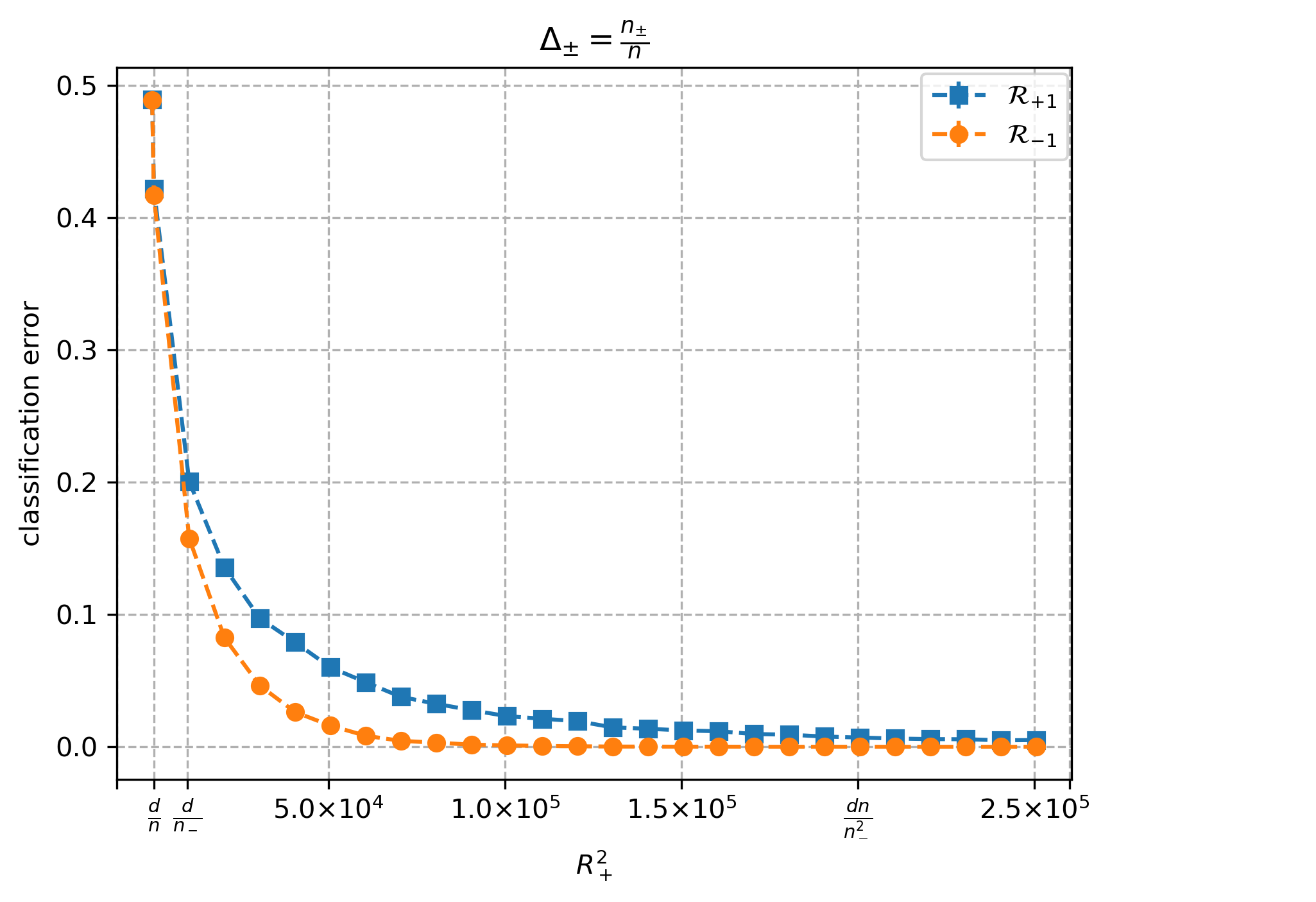}
 \caption{The left panel plots the majority-group and minority-group error as a function of the total signal strength squared ($R_{+}^2$) when the importance weights are set to be equal, i.e.~$\Delta_\pm=1$. We see that while $\Riskf[+1]{\hw}$ approaches $0$ when $R_+^2=\frac{d}{n}$, $\Riskf[-1]{\hw}$ is greater than $0.1$ until $R_+^2=\frac{dn}{n_-^2}$. The right panel also plots group-wise error for the alternative choice $\Delta_\pm=\frac{n_\pm}{n}$. In this case, as expected, both $\Riskf[+1]{\hw}$ and $\Riskf[-1]{\hw}$ decay to $0$ at similar rates. For both plots, we fix $d=10^5$, $n=200$, $n_-=10$, $\bmu_c=\bmu_s=\sqrt{R_+/2\be_1}$. These simulations were obtained by averaging over $10$ trials, and error bars are small enough not to be visible.} \label{fig:tradeoff}
\end{figure}

%

\section{Proof of Theorem~\ref{thm:wst_risk_upperbound}} \label{sec:upperbound_proof}
In this section, we present the proof of Theorem~\ref{thm:wst_risk_upperbound} in four steps. (Note that the lower bound in Proposition~\ref{pro:wst_lower_bound} follows via a matching sequence of steps and uses the matching exponential lower bound of the Q-function~\cite{chiani2003new}; the full proof is in Appendix~\ref{app:lower_bound}.) We first present the upper bound of the worst group generalization error in analytical form. In step 2, we characterize the Gram matrix $\XX$ and its inverse recursively.
Next, to simplify calculations, we define ``primitive" quadratic terms and derive their bounds in step 3. Finally, we substitute these bounds on the primitive terms in the generalization error upper bound to complete the upper bound proof (Theorem~\ref{thm:wst_risk_upperbound}).
We note that steps 2 \& 3 introduce new technical challenges, which we discuss in Appendix~\ref{app:primitive_proof}.

\textbf{Step 1.}
We restate~\cite[Lemma 1]{wang2022binary} below which upper bounds the worst group generalization error.
\begin{lemma}\label{lem:wst_risk_q_upper}
Consider a classification problem under GMM with a linear classifier $\hw$, the worst group risk is $\Riskf[wst]{\hw}=\umax{b\in\pm1}~\Qf{\frac{\hw^{\top}\bmu_b}{\sqrt{\hw^{\top}\hw}}}$. Specifically, if $\hw^{\top}\bmu_b > 0$, then 
\begin{gather}\label{eq:wst_risk_q_upper}
    \Riskf[wst]{\hw} \leq \umax{b\in\pm1}~\expf{-\frac{\pts{\hw^\top\bmu_b}^2}{2\hw^{\top}\hw}}.
\end{gather}
\end{lemma}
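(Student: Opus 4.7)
The plan is to compute the group-conditional misclassification probability in closed form by exploiting the Gaussian structure of the features, and then invoke a standard Chernoff-type tail bound on the Q-function. First I would note that for a fresh sample $\pts{\x,y,a}$ in group $b = y\cdot a$, the identities $a = yb$ and $y^2 = 1$ yield
\[
\x = y\barmu_c + a\barmu_s + \z = y\pts{\barmu_c + b\barmu_s} + \z = y\bmu_b + \z,
\]
since $\bmu_b = \barmu_c + b\barmu_s$ by the definition given in the data distribution paragraph. Multiplying by $y\hw^\top$ yields the signed margin $y\hw^\top\x = \hw^\top\bmu_b + y\hw^\top\z$, where $\z\sim\mathcal{N}\pts{\zero,\I_d}$ is independent of $(y,a)$.

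Because $y\in\seq{\pm1}$ and $\hw^\top\z\sim\mathcal{N}\pts{0,\hw^\top\hw}$, the symmetry of the centered Gaussian implies that $y\hw^\top\z\sim\mathcal{N}\pts{0,\hw^\top\hw}$ conditional on $(y,a)$, and in particular conditional on the event $\seq{ya=b}$. Consequently,
\[
\Riskf[b]{\hw}
= \Prob\mts{y\hw^\top\x < 0 \mid ya = b}
= \Prob\mts{\mathcal{N}\pts{0,\hw^\top\hw} < -\hw^\top\bmu_b}
= \Qf{\frac{\hw^\top\bmu_b}{\sqrt{\hw^\top\hw}}},
\]
and taking the maximum over $b\in\seq{\pm1}$ gives the Q-function expression for the worst-group risk.

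Finally, to obtain the exponential bound in Eq.~\eqref{eq:wst_risk_q_upper}, I would invoke the classical Gaussian tail estimate $\Qf{t} \leq \expf{-t^2/2}$ valid for $t > 0$ (this is precisely where the hypothesis $\hw^\top\bmu_b > 0$ is needed). Substituting $t = \hw^\top\bmu_b/\sqrt{\hw^\top\hw}$ and again taking the maximum over $b$ completes the proof.

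I do not anticipate any real obstacle, as the argument is a short Gaussian tail computation once the signed margin is identified as a shifted Gaussian. The only subtlety worth flagging is that conditioning on $ya=b$ rather than on $(y,a)$ separately does not alter the conditional distribution of the margin, because $\hw^\top\bmu_b$ depends on $(y,a)$ only through the product $b=ya$ and the noise term $y\hw^\top\z$ is symmetric in $y$.
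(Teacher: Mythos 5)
Your proof is correct and is exactly the elementary calculation that the cited result encapsulates; the paper does not prove this lemma itself but simply restates~\cite[Lemma~1]{wang2022binary}, so you have supplied the missing argument rather than diverged from the paper. Your identification $\x = y\bmu_b + \z$ from $a = yb$, the observation that the conditional law of the signed margin $y\hw^\top\x$ depends on $(y,a)$ only through $b$, and the Chernoff-type bound $\Qf{t} \le \expf{-t^2/2}$ for $t>0$ are all sound. One small remark: the sharper standard bound $\Qf{t} \le \tfrac{1}{2}\expf{-t^2/2}$ also holds, and the exact identity $\Riskf[b]{\hw} = \Qf{\hw^\top\bmu_b/\sqrt{\hw^\top\hw}}$ you derive is the form the paper actually uses in Appendix~\ref{app:lower_bound} (with the matching exponential \emph{lower} bound on $Q$ from~\cite{chiani2003new}), so your computation dovetails cleanly with the rest of the paper.
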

According to Eq.~\eqref{eq:wst_risk_q_upper}, it is sufficient to derive a sharp lower bound on $\frac{\pts{\hw^\top\bmu_b}^2}{2\hw^{\top}\hw}$ for each $b\in\pm1$.
Next, we substitute the closed form interpolation of the ridge estimator (Eq.~\eqref{eq:sq_ridge_sol}) into Eq.~\eqref{eq:wst_risk_q_upper} to get 
\begin{gather}\label{eq:target_term_repeat}
    \frac{\pts{\hw^\top\bmu_b}^2}{2\hw^{\top}\hw}
    =\frac{\pts{\y^\top\bDelta^{-1}\XXtaui\X\bmu_b}^2}{2\y^\top\bDelta^{-1}\XXtaui\XX\XXtaui\bDelta^{-1}\y}.
\end{gather}
%
%
\textbf{Step 2.} In order to analyze and lower bound Eq.~\eqref{eq:target_term_repeat}, we first analyze the Gram matrix $\XX$ and its inverse. Recalling the definition of $\X$ in Eq.~\eqref{eq:x_matrix_def}, we define
\begin{align*}
    \M \coloneqq \XX +\tau\I = (\bv_2\barmu_2^\top+\bv_1\barmu_1^\top+\Q)(\bv_2\barmu_2^\top+\bv_1\barmu_1^\top+\Q)^\top +\tau\I,
\end{align*}
where we denote $\bv_1 \coloneqq \ba$, $\bv_2 \coloneqq \y$, $\barmu_1 \coloneqq \barmu_s$, $\barmu_2 \coloneqq \barmu_c$ as shorthand. 
The presence of core and spurious mean vectors (along with the standard Wishart matrix $\QQ$) in the expression for $\XX$ makes analyzing its inverse challenging, even for binary labels.
Inspired by the idea in~\cite{wang2021benign} (which was introduced for \emph{multiclass} classification), we can decompose the data matrix $\X$ as
\begin{gather}
    \X_0 \coloneqq \Q, \quad \X_1 \coloneqq \bv_1\barmu_1^\top + \X_0, \quad \X =: \X_2 =  \bv_2\barmu_2^\top + \X_1\nonumber.
\end{gather}
Then, for $k = 1,2$, we denote $\M_0\coloneqq\XXz+\tau\I = \QQ+\tau\I$ and
\begin{align}
    \Mk&\coloneqq\XXk+\tau\I = \pts{\bv_k\barmu_k^\top+\X_{k-1}}\pts{\bv_k\barmu_k^\top+\X_{k-1}}^\top+\tau\I\nonumber\\
         &= \Mkmo+\underbrace{\mts{\nuk\bv_k,~\bd_k,~\bv_k}}_{=:\bL_k}\underbrace{\begin{bmatrix}
         \nuk\bv_k^\top \\
         \bv_k^\top \\
         \bd_k^\top
         \end{bmatrix}}_{=:\bR_k},\label{eq:xx_def}
\end{align}
%
%
where $\bL_k$, $\bR_k$ are defined in the above display, and we define $\bd_k:=\X_{k-1}\barmu_k=\Q\barmu_k$. Next, as in~\cite{wang2021benign}, we apply the \emph{recursive Woodbury identity}~\cite{horn2012matrix} to break the matrix inversion calculation into recursive steps as below:
\begin{align}
    &\Mzi = \XXztaui = \QQtaui\nonumber\\
    &\Mki = \Mkmoi - \Mkmoi \bL_{k}\A_k^{-1}\bR_k\Mkmoi,\label{eq:xxi_def}
\end{align}
where $\A_k \coloneqq \I+\bR_k\Mkmoi\bL_k$ for $k=1,2$. As a result, we can represent the $k$th order inverse Gram matrix $\XXktaui$ as a constant multiple of the lower-order inverse Gram matrices. Ultimately, this shows that $\XXtaui$ is a constant multiple of the $0$th order \emph{inverse Wishart matrix} $\QQtaui$.\\
\textbf{Step 3.} 
Inspired by~\cite{wang2022binary,wang2021benign}, we define several ``primitive" quadratic terms involving $\XXktaui$ for $k = 0,1,2$. In Step 4 of the proof we will see that Eq.~\eqref{eq:target_term_repeat} can be characterized entirely as a function of these primitives.
\begin{definition}\label{def:primitives}
    For $i, j = 1,2$ and order $k = 0,1,2$, we define the following basic primitives:
    \begin{align}
        &\sijk \coloneqq \bv_i^\top\XXktaui\bv_j,\quad \tijk \coloneqq \bd_i^\top\XXktaui\bd_j, \quad \hijk \coloneqq \bd_i^\top\XXktaui\bv_j \nonumber,\\
        &\suuk \coloneqq \bu^\top\XXktaui\bu,\quad \suik \coloneqq \bu^\top\XXktaui\bv_i,\quad \hiuk \coloneqq \bd_i^\top\XXktaui\bu,\nonumber
    \end{align}
    where we let $\bu\in\R^n$ be an arbitrary unit vector, i.e.~$\nnorm[2]{\bu}=1$.
\end{definition}
%
%
\begin{definition}\label{def:w_primitives}
    For $i,j = 1,2$ and order $k = 0,1,2$, we define the following adjusted primitives:
    \begin{gather}
        \sidjk \coloneqq \bv_i^\top\bDelta^{-1}\XXktaui\bv_j,\nonumber\quad \hijdk\coloneqq \bd_i^\top\XXktaui\bDelta^{-1}\bv_j,\nonumber\\
        \sidjdk \coloneqq \bv_i^\top\bDelta^{-1}\XXktaui\bDelta^{-1}\bv_j,\nonumber\\
        \oididk \coloneqq \bv_i^\top\bDelta^{-1}\XXktaui\XXk\XXktaui\bDelta^{-1}\bv_i.\nonumber
    \end{gather}
\end{definition}
Using the ideas in Step 2, these primitives consisting of quadratic forms on $\XXktaui$ can be characterized sharply as quadratic forms on $\QQtaui$.
Then, we can apply sharp concentration inequalities on the inverse Wishart matrix $\QQi$ to characterize the primitives. 
These bounds are presented in Lemma~\ref{lem:primitive_bounds} and Lemma~\ref{lem:w_primitive_bounds} below; the proofs of these lemmas are contained in Appendix~\ref{app:primitive_proof}.
Henceforth, we define $\hc_m = \frac{C_m+ 1}{C_m}, \dc_m = \frac{C_m - 1}{C_m}$ for universal constants $C_m > 1$.
%
\begin{lemma}\label{lem:primitive_bounds}
    We have the following upper and lower bounds for the basic primitives (Def.~\ref{def:primitives}):
    \begin{gather*}
        \frac{\dc_1n}{d+\tau}\leq \siik \leq  \frac{\hc_1n}{d+\tau}, \myquad[2] \tiik \leq \frac{C_2 n\nui^2}{d+\tau}\\
        \dc_3\frac{n_+}{d+\tau} - \hc_3\frac{n_-}{d+\tau} \leq \sotk \leq \hc_3\frac{n_+}{d+\tau} - \dc_3\frac{n_-}{d+\tau}\\
        -\frac{C_4 n\nuo\nut}{d+\tau}\leq \totk \leq \frac{C_4 n\nuo\nut}{d+\tau}\\
        -\frac{C_5 n\nui}{d+\tau}\leq \hijk \leq \frac{C_5 n\nui}{d+\tau}\\
        \frac{\dc_6}{d+\tau}\leq \suuk \leq  \frac{\hc_6}{d+\tau}, \myquad[1]-\frac{C_7\sqrt{n}}{d+\tau} \leq \suik \leq \frac{C_7\sqrt{n}}{d+\tau}\\
        -\frac{C_8 \sqrt{n}\nui}{d+\tau}\leq \hiuk \leq \frac{C_8 \sqrt{n}\nui}{d+\tau}
    \end{gather*}
\end{lemma}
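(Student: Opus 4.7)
The plan is to establish the bounds in two phases. Phase~1 directly analyzes all primitives at order $k=0$, which involve only the inverse Wishart matrix $\QQtaui$. Phase~2 then propagates the results through $k=1,2$ via the Woodbury recursion~\eqref{eq:xxi_def}, showing that each rank-one update $\bv_k\barmu_k^\top$ added to $\X_{k-1}$ perturbs each primitive by at most a multiplicative constant.

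For Phase~1, I would first invoke standard nonasymptotic Wishart concentration to obtain, with probability at least $1-\delta$, sandwich eigenvalue bounds $\dc d \leq \lbf[n]{\QQ}$ and $\lbf[1]{\QQ}\leq\hc d$ for absolute constants $\dc<1<\hc$; the assumption $d\geq Cn^2\log(n/\delta)$ is far stronger than needed for this. The two-sided bounds on $\siiz$ and $\suuz$ follow immediately from $\nnorm[2]{\bv_i}^2=n$ and $\nnorm[2]{\bu}^2=1$, respectively. For the \emph{signed} bound on $\sotz=\ba^\top\QQtaui\y$, I would use the polarization identity
\begin{equation*}
\sotz = \tfrac14\bigl[(\ba+\y)^\top\QQtaui(\ba+\y) - (\ba-\y)^\top\QQtaui(\ba-\y)\bigr],
\end{equation*}
together with $\nnorm[2]{\ba+\y}^2=4n_+$ and $\nnorm[2]{\ba-\y}^2=4n_-$; applying the eigenvalue bounds separately to each of the two positive quadratic forms reproduces exactly the asymmetric bound $\dc_3 n_+/(d+\tau) - \hc_3 n_-/(d+\tau) \leq \sotz \leq \hc_3 n_+/(d+\tau) - \dc_3 n_-/(d+\tau)$.

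The key subtlety is $\tiiz = \barmu_i^\top\Q^\top\QQtaui\Q\barmu_i$, for which a naive operator-norm bound only yields $\tiiz \leq \nui^2$, which is too weak. Instead, I would decompose $\Q = (\bd_i/\nui^2)\barmu_i^\top + \Q_\perp$ with $\Q_\perp := \Q(\I - \barmu_i\barmu_i^\top/\nui^2)$. By Gaussianity, $\Q_\perp$ and $\bd_i$ are independent, $\Q_\perp\barmu_i = \zero$, and hence $\QQ = \Q_\perp\Q_\perp^\top + \bd_i\bd_i^\top/\nui^2$. A Sherman-Morrison rank-one update then gives
\begin{equation*}
\tiiz = \frac{\bd_i^\top(\Q_\perp\Q_\perp^\top+\tau\I)^{-1}\bd_i}{1 + \bd_i^\top(\Q_\perp\Q_\perp^\top+\tau\I)^{-1}\bd_i/\nui^2}.
\end{equation*}
Since $\bd_i/\nui\sim\mathcal{N}(\zero,\I_n)$ is independent of $\Q_\perp$, whose Wishart matrix itself concentrates around $d\I$ on the $(d-1)$-dim subspace orthogonal to $\barmu_i$, a Hanson-Wright inequality gives $\bd_i^\top(\Q_\perp\Q_\perp^\top+\tau\I)^{-1}\bd_i \asymp n\nui^2/(d+\tau)$, and the claimed upper bound on $\tiiz$ follows (the denominator stays of order $1$ because $n/(d+\tau) = o(1)$ under Assumption~\ref{asm:dataset_gen}). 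The remaining cross-type primitives $\totz$, $\hijz$, $\suiz$, $\hiuz$ are then obtained by Cauchy-Schwarz on the diagonal primitives just bounded, e.g.\ $|\totz| \leq \sqrt{t_{1,1}^{(0)}\,t_{2,2}^{(0)}} \leq Cn\nuo\nut/(d+\tau)$, and analogously for the others.

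For Phase~2, I would apply the Woodbury recursion~\eqref{eq:xxi_def} inductively. Each order-$k$ primitive equals the corresponding order-$(k-1)$ primitive minus a correction of the form $\mathbf{p}^\top\Mkmoi\bL_k\A_k^{-1}\bR_k\Mkmoi\mathbf{q}$, and the $3\times 3$ matrix $\A_k=\I+\bR_k\Mkmoi\bL_k$ has entries that are themselves order-$(k-1)$ primitives involving $\bv_k$ and $\bd_k$ (namely $\skkkmo$, $\tkkkmo$, $\hkkkmo$ and their reversed-index counterparts). This allows me to invert $\A_k$ in closed form and control each of its nine entries using Phase~1; expanding the correction then writes it as a polynomial in order-$(k-1)$ primitives, which I would verify term by term to be either subordinate to or comparable in magnitude (with controlled constant) to the leading order-$(k-1)$ primitive being corrected. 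The main obstacle is precisely this bookkeeping: the Woodbury expansion produces many signed cross-terms, and the \emph{signed} lower bound on $\sotk$ in particular demands that the correction not cancel the leading $(n_+-n_-)/(d+\tau)$ term. The assumption $d\geq Cn^2\log(n/\delta)$ is exactly what guarantees these corrections are suppressed by a factor of order $n/d$ relative to the leading terms, preserving the target scaling and only slightly worsening the universal constants $\hc_m$, $\dc_m$, $C_m$ at each step of the recursion.
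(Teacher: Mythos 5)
Your proposal is essentially the same two-phase structure as the paper's proof (base case at $k=0$ via inverse Wishart concentration, then Woodbury induction to $k=1,2$), and your Phase~2 description matches the paper's recursion, including inverting the $3\times 3$ matrix $\A_k$ whose entries are $(k-1)$-order primitives. Two small but genuine differences in Phase~1 are worth noting. First, your polarization identity for $\sotz$ is exactly the paper's. Second, for $\tiiz$ you take a more refined route than the paper does: you decompose $\Q$ into the rank-one piece along $\barmu_i$ plus the orthogonal piece and invoke Sherman--Morrison, whereas the paper simply uses the deterministic factorization $\tiiz\le\nnorm[2]{\bd_i}^2\,\lbf[1]{\QQtaui}$, with $\nnorm[2]{\bd_i}^2\approx n\nui^2$ from $\chi^2$ concentration (the ``naive'' bound you worry about, $\nnorm[2]{\Q^\top\QQtaui\Q}\le1$, is not what the paper uses). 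Your Sherman--Morrison computation is correct and arguably tighter, but it is not needed for this one-sided bound; the correlation between $\bd_i$ and $\QQ$ is irrelevant once you combine a deterministic inequality with separate concentration events for $\nnorm[2]{\bd_i}$ and $\lbf[1]{\QQtaui}$. Likewise, your Cauchy--Schwarz treatment of the cross-terms $\totz,\hijz,\suiz,\hiuz$ (reducing to already-bounded diagonal primitives) is a valid alternative to the paper's sub-multiplicativity-of-norms route; both give the same scaling. One minor slip: in Phase~2 the corrections are suppressed by Assumption~\ref{asm:dataset_gen}(C), i.e.\ $d\ge CR_+n$ (since the extra factors are of order $\nukpo^2 n/(d+\tau)$), not by Assumption~\ref{asm:dataset_gen}(D); the $d\ge Cn^2\log(n/\delta)$ assumption is what sharpens the base-case Wishart constants.
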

\begin{lemma}\label{lem:w_primitive_bounds}
    We have the following upper and lower bounds for the adjusted primitives (Def.~\ref{def:w_primitives}):
    \begin{gather*}
        \frac{\pts{\hc_9-\frac{1}{C_{10}}}\pts{\ndpnd}}{d+\tau}\leq \sidik \leq \frac{\pts{\hc_9+\frac{1}{C_{10}}}\pts{\ndpnd}}{d+\tau}\\
        \frac{\hc_9 \pts{\ndmnd} - \frac{1}{C_{10}}\pts{\ndpnd}}{d+\tau}\leq \stdok \leq \frac{\hc_9 \pts{\ndmnd} + \frac{1}{C_{10}}\pts{\ndpnd}}{d+\tau}\\
        \frac{\dc_{11}\nd}{d+\tau}\leq \sididk \leq \frac{\hc_{11} \nd}{d+\tau}\\
        -\frac{C_{12} \sqrt{n\nd}\nui}{d+\tau}\leq \hijdk \leq \frac{C_{12} \sqrt{n\nd}\nui}{d+\tau}\\
        \frac{\dc_{13}\nd d}{\pts{d+\tau}^2}\leq \oididk \leq \frac{\hc_{13} \nd d}{\pts{d+\tau}^2},
    \end{gather*}
    where we define $\nd \coloneqq \frac{n_+}{\Delta_+^2}+\frac{n_-}{\Delta_-^2}$.
\end{lemma}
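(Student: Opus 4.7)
\textbf{Plan for Lemma~\ref{lem:w_primitive_bounds}.} My plan is to reduce each adjusted primitive to a linear combination of quadratic forms already controlled by Lemma~\ref{lem:primitive_bounds}, after decomposing $\bDelta^{-1}$ along group membership. Since $\bDelta^{-1}$ is diagonal with entry $1/\Delta_{b_i}$ in row $i$, I write $\bDelta^{-1} = (1/\Delta_+) P_+ + (1/\Delta_-) P_-$, where $P_\pm$ are the diagonal $0/1$ projectors onto the majority and minority indices. Using $a_i = b_i y_i$ gives $P_+\ba = P_+\y$ and $P_-\ba = -P_-\y$, so with $\y_+ := P_+\y$ and $\y_- := P_-\y$ (exactly orthogonal because their supports are disjoint, with $\nnorm[2]{\y_\pm}^2 = n_\pm$),
\[
\bDelta^{-1}\bv_1 = \tfrac{1}{\Delta_+}\y_+ - \tfrac{1}{\Delta_-}\y_-, \qquad \bDelta^{-1}\bv_2 = \tfrac{1}{\Delta_+}\y_+ + \tfrac{1}{\Delta_-}\y_-.
\]
Substituting into Definition~\ref{def:w_primitives}, each of $\sidik$, $\stdok$ and $\sididk$ becomes a $\{1/\Delta_+^2,\,1/\Delta_-^2,\,1/(\Delta_+\Delta_-)\}$-weighted combination of the three group-restricted quadratic forms $\y_+^\top\XXktaui\y_+$, $\y_-^\top\XXktaui\y_-$ and $\y_+^\top\XXktaui\y_-$. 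For $\oididk$ I will additionally use the identity $\XXktaui\XXk\XXktaui = \XXktaui - \tau(\XXktaui)^2$ to reduce the sandwich to an $\sididk$-type quadratic form plus a $\tau$-scaled correction in $(\XXktaui)^2$, and $\hijdk$ I will handle by Cauchy--Schwarz against $\tiik$ (from Lemma~\ref{lem:primitive_bounds}) and the just-established bound on $\sididk$.

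The diagonal group terms are immediate: since $\y_\pm/\sqrt{n_\pm}$ are unit vectors, applying the $\suuk$ bound of Lemma~\ref{lem:primitive_bounds} at $\bu = \y_\pm/\sqrt{n_\pm}$ yields $\y_\pm^\top\XXktaui\y_\pm \in [\dc_6 n_\pm,\hc_6 n_\pm]/(d+\tau)$. A short arithmetic check confirms that, when the cross term is negligible, the $\Delta$-weighted combinations collapse to exactly the advertised leading orders $\ndpnd/(d+\tau)$ for $\sidik$, $\ndmnd/(d+\tau)$ for $\stdok$, and $\nd/(d+\tau)$ for $\sididk$. The relative $\pm 1/C_{10}$ and $\pm 1/C_{11}$ slack in the lemma then absorbs the $\dc_6/\hc_6$ uncertainty in the diagonal bounds together with the residual cross-term contribution discussed below.

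The main obstacle is controlling $\y_+^\top\XXktaui\y_-$ tightly enough. A naive Cauchy--Schwarz gives only $|\y_+^\top\XXktaui\y_-| \lesssim \sqrt{n_+ n_-}/(d+\tau)$, which contributes an error of order $\sqrt{n_+ n_-}/(\Delta_+\Delta_-(d+\tau))$ to $\sididk$; by AM--GM this can be as large as the leading term $\nd/(d+\tau)$, and so cannot deliver $\dc_{11}>0$. To obtain the stated relative accuracy I exploit the exact orthogonality $\y_+\perp\y_-$: the recursive Woodbury expansion of Step~2 reduces the cross term to a bounded multiple of $\y_+^\top\QQi\y_-$, and a sharp concentration inequality for off-diagonal quadratic forms of the inverse Wishart matrix on orthogonal unit vectors then yields a bound smaller than the diagonal scale $1/d$ by an extra factor that vanishes with $d$. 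Assumption~\ref{asm:dataset_gen}(D), $d \geq Cn^2\log(n/\delta)$, is precisely what makes this extra factor small enough to be absorbed into the $1/C_{10},1/C_{11}$ slack. Propagating this refined cross-term bound back through the Woodbury recursion (picking up only bounded multiplicative factors at each level, as in the proofs of the basic primitives) and through the $\tau(\XXktaui)^2$ correction term for $\oididk$ is then routine bookkeeping.
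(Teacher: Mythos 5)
Your plan departs from the paper's. For the cross-type adjusted primitives the paper does \emph{not} decompose $\bDelta^{-1}$ along group membership; it applies the parallelogram law directly to the pair $\bDelta^{-1}\bv_i$, $\bv_j$:
\begin{equation*}
\bv_i^\top\bDelta^{-1}\QQtaui\bv_j = \tfrac{1}{4}\left[\pts{\bDelta^{-1}\bv_i+\bv_j}^\top\QQtaui\pts{\bDelta^{-1}\bv_i+\bv_j} - \pts{\bDelta^{-1}\bv_i-\bv_j}^\top\QQtaui\pts{\bDelta^{-1}\bv_i-\bv_j}\right],
\end{equation*}
so that each mixed primitive becomes a difference of \emph{two} plain quadratic forms to which Corollary~\ref{cor:wishart_bound_ridge} applies. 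The $1\pm O(\sqrt{\log(n/\delta)/d})$ relative slack there, combined with Assumption~\ref{asm:dataset_gen}(D) and the absorption Lemma~\ref{lem:aux_ineq}, shows the residual after cancellation is a small fraction of $\ndpnd$; the inductive step $k\to k+1$ then reuses the Woodbury/$f_{\A_{k+1}}$ machinery of Lemma~\ref{lem:primitive_bounds} with the new zeroth-order bounds plugged in.

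There are concrete gaps in your route. First, for $\sididk$ there is no cross term to control at all: $\sididk=(\bDelta^{-1}\bv_i)^\top\XXktaui(\bDelta^{-1}\bv_i)$ is a single quadratic form in the one vector $\bDelta^{-1}\bv_i$, with $\nnorm[2]{\bDelta^{-1}\bv_i}^2=\nd$, and the paper simply applies Corollary~\ref{cor:wishart_bound_ridge} to it. Your split into $\y_\pm$ manufactures a cross term $\tfrac{2}{\Delta_+\Delta_-}\y_+^\top\XXktaui\y_-$ that Cauchy--Schwarz cannot absorb --- an artifact of your decomposition, not an obstacle in the lemma. Second, for the genuine mixed primitives $\sidik,\stdok$ you rely on a concentration inequality for off-diagonal inverse-Wishart quadratic forms on orthogonal vectors that gains an extra $\sqrt{\log(n/\delta)/d}$ over Cauchy--Schwarz. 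Such a bound is plausible, but it is nowhere stated or proved in the paper (Lemma~\ref{lem:wishart_bound} and Corollary~\ref{cor:wishart_bound_ridge} are diagonal only), so as written your proof is incomplete; and the natural way to establish it is again the parallelogram law on $\y_+\pm\y_-$, at which point you have reproduced the paper's mechanism with an extra layer of bookkeeping. Third, the identity $\XXktaui\XXk\XXktaui=\XXktaui-\tau\pts{\XXktaui}^2$ is algebraically correct, but the lower bound $\oididk\ge\sididk-\tau\,\lbf[1]{\XXktaui}^2\nd$ it naturally yields does not produce a positive $\dc_{13}$ uniformly in $\tau\ge 0$ (for $\tau$ large the subtracted term can dominate); the paper instead directly sandwiches the eigenvalues of $\XXktaui\XXk\XXktaui$ using that all eigenvalues of $\XXk$ lie in a constant-factor window around $d$, which works for every $\tau$. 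The Cauchy--Schwarz treatment of $\hijdk$ against $\tiik$ and $\sididk$ is fine and is essentially what the paper does.
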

%
%
%
\textbf{Step 4.} Finally, we can write Eq.~\eqref{eq:target_term_repeat} entirely as a function of the primitives in Step 3. We show the derivation for the case $b= +1$ ($b=-1$ follows by an identical set of calculations):
\begin{align}
    \frac{\pts{\hw^\top\bmu_{+1}}^2}{2\hw^{\top}\hw} &= \frac{\pts{\y^\top\bDelta^{-1}\XXtaui\X\bmu_{+1}}^2}{2\y^\top\bDelta^{-1}\XXtaui\XX\XXtaui\bDelta^{-1}\y}\nonumber\\
    &= \frac{\pts{\bv_2^\top\bDelta^{-1}\XXttaui\pts{\bv_2\barmu_2^\top + \bv_1\barmu_1^\top+\Q}\pts{\barmu_2 + \barmu_1}}^2}{2\bv_2^\top\bDelta^{-1}\XXttaui\XXt\XXttaui\bDelta^{-1}\bv_2}\nonumber\\
    &= \frac{\pts{\nut^2\stdtt + \nuo^2\stdot + \httdt + \hotdt}^2}{2\otdtdt}\label{eq:maj_target_primitive}.
\end{align}
Above, the second equality follows by the definition of $\bmu_{+1}=\barmu_c + \barmu_s$ and the definition of $\X$; the last equality substitutes the adjusted primitives from Def.~\ref{def:w_primitives}. 
We also define the following auxiliary lemma that helps us identify lower-order terms that can be removed up to constant factors. The proof of Lemma~\ref{lem:aux_compare_a_h} is deferred to Appendix~\ref{app:aux_lem_proof}.
\begin{lemma}\label{lem:aux_compare_a_h}
    For some $C>C_1^2>0$, if Assumption~\ref{asm:dataset_gen}(B) holds, or $n_-$ is a factor of $n$ such that $n_-=\alpha n$ for $0 < \alpha \leq 1/2$ is satisfied, we have $\frac{1}{2}\pts{\ndpnd}\nut \geq C_1\sqrt{n\nd}$.
\end{lemma}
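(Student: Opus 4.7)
The plan is to reduce the claim to a scalar comparison in $\nut^2$ and then verify it in each of the two stated cases. Since both sides of the desired inequality are non-negative, I first square them to obtain the equivalent condition
\[
\nut^{2} \;\geq\; \frac{4 C_{1}^{2}\, n\, \nd}{\pts{\ndpnd}^{2}}.
\]
Thus it suffices to find a sharp-enough lower bound on the denominator $(n_+/\Delta_+ + n_-/\Delta_-)^2$ in terms of $\nd$.

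The crux is the elementary bound
\[
\pts{\ndpnd}^{2} \;\geq\; \frac{n_{+}^{2}}{\Delta_{+}^{2}} + \frac{n_{-}^{2}}{\Delta_{-}^{2}} \;\geq\; n_{-}\pts{\nddpndd} \;=\; n_{-}\,\nd,
\]
where the first step uses $(a+b)^2 \geq a^2 + b^2$ for $a,b \geq 0$, and the second step factors out $n_+$ and $n_-$ from the two terms and then uses the mild assumption $n_+ \geq n_-$ from Section~\ref{sec:wst_risk_upperbound} to replace $n_+$ by $n_-$ in the first summand. Plugging this in reduces the claim to the scalar inequality $\nut^{2} \geq 4 C_{1}^{2}\, n/n_{-}$.

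It remains to verify this scalar condition under each of the two stated hypotheses. Under Assumption~\ref{asm:dataset_gen}(B), $\nut^{2} = \nnorm[2]{\barmu_c}^{2} \geq C n \log(n/\delta)$, so the condition follows as soon as $C\, n_{-} \log(n/\delta) \geq 4 C_{1}^{2}$; for any $n_{-} \geq 1$ this holds by taking $C > C_{1}^{2}$ sufficiently large, absorbing the factor $4$ into $\log(n/\delta)$ via Assumption~\ref{asm:dataset_gen}(A). Under the alternative hypothesis $n_{-} = \alpha n$ with $0 < \alpha \leq 1/2$, the ratio $n/n_{-} = 1/\alpha$ is a fixed constant, so the required bound $\nut^{2} \geq 4 C_{1}^{2}/\alpha$ is an $O(1)$ condition that is implied by the same signal-strength condition on $\nut$.

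The only non-routine point — and the step I would flag as the main obstacle — is choosing the right lower bound $(n_+/\Delta_+ + n_-/\Delta_-)^{2} \geq n_{-}\,\nd$ rather than a coarser one such as $\geq n_{+}\,\nd$, which would be false in general. It is precisely the appearance of the smaller quantity $n_{-}$ (and not $n_{+}$) that forces a non-trivial signal-strength requirement and thereby explains the role of Assumption~\ref{asm:dataset_gen}(B) in the first case. Beyond this, the argument is pure arithmetic, which is why the lemma is placed among the auxiliary results used to simplify lower-order terms in Step~4 of the main proof.
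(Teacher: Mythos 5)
Your proof is correct and follows essentially the same route as the paper's. Both arguments hinge on exactly the same two elementary inequalities — $\pts{\ndpnd}^{2} \geq \frac{n_+^2}{\Delta_+^2} + \frac{n_-^2}{\Delta_-^2}$ and then $n_+ \geq n_-$ to factor out $n_-\,\nd$ — and both then close by invoking Assumption~\ref{asm:dataset_gen}(B) (or the alternative $n_- = \alpha n$); the only difference is cosmetic: you square the target inequality first and manipulate $\nut^{2}$, while the paper carries the argument out under the square root, pushing $\nuc^2/4$ inside the radical before comparing. Your observation that the lower bound must produce the factor $n_-$ rather than $n_+$ is on point and correctly identifies why the signal-strength assumption is needed in the first place.
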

We proceed to introduce the primitive bounds from Lemma~\ref{lem:w_primitive_bounds} into Eq.~\eqref{eq:maj_target_primitive} (lower bounding the numerator terms and upper bounding the denominator terms) to get:
%
\begin{align}
    \eqref{eq:maj_target_primitive}&\geq \dfrac{\pts{\nut^2 \frac{\pts{\hc_9-\frac{1}{C_{10}}} \pts{\ndpnd}}{d+\tau} + \nuo^2 \frac{\hc_9 \pts{\ndmnd} - \frac{1}{C_{10}}\pts{\ndpnd}}{d+\tau} -\frac{C_{12} \sqrt{n\nd}\nut}{d+\tau}-\frac{C_{12} \sqrt{n\nd}\nuo}{d+\tau}}^2}{\frac{\hc_{13} \nd d}{\pts{d+\tau}^2}}\nonumber\\
    &= \dfrac{\pts{\pts{\hc_9\nut^2 -\frac{\nut^2+\nuo^2}{C_{10}}}\pts{\ndpnd} + \hc_9 \nuo^2\pts{\ndmnd}-C_{12} \sqrt{n\nd}\pts{\nut+\nuo}}^2}{\hc_{13} \nd d}.\label{eq:maj_target_primitive_1}
\end{align}
Next, since $R_- \geq 0$, i.e.~$\nut \geq \nuo$, we get
\begin{align}
    \eqref{eq:maj_target_primitive_1} &\geq \dfrac{\pts{\pts{\hc_9\nut^2 -\frac{2\nut^2}{C_{10}}}\pts{\ndpnd} + \hc_9 \nuo^2\pts{\ndmnd}-2C_{12} \sqrt{n\nd}\nut}^2}{\hc_{13} \nd d}\nonumber\\
    &\geq \dfrac{\pts{\pts{\hc_9 -\frac{2}{C_{10}} -\frac{C_{12}}{C_{14}}}\nut^2\pts{\ndpnd} + \hc_9 \nuo^2\pts{\ndmnd}}^2}{\hc_{13} \nd d}\nonumber\\
    &\geq \dfrac{\pts{\hc_{15}\nut^2\pts{\ndpnd} + \hc_{15} \nuo^2\pts{\ndmnd}}^2}{\hc_{13} \nd d}\nonumber\\
    &= \dfrac{\pts{\hc_{15} R_+\frac{n_+}{\Delta_+} + \hc_{15} R_-\frac{n_-}{\Delta_-}}^2}{\hc_{13} \nd d} \geq \dfrac{\hc_{15}^2 R_+^2\frac{n_+^2}{\Delta_+^2} + \hc_{15}^2 R_-^2\frac{n_-^2}{\Delta_-^2}}{\hc_{13}\nd d},\label{eq:maj_target_primitive_2}
\end{align}
where the second inequality follows by Lemma~\ref{lem:aux_compare_a_h} with a large enough constant $C_{14} > C_{12}$.
Finally, we define $\alpha_\pm = \frac{n_\pm/\Delta_\pm^2}{\nd}$, noting that $0<\alpha_\pm<1$ and $\alpha_+ + \alpha_-=1$. Eq.~\eqref{eq:maj_target_primitive_2} becomes
\begin{align*}
    \frac{\pts{\hw^\top\bmu_{+1}}^2}{2\hw^{\top}\hw} &\geq \frac{\hc_{16} \pts{\alpha_+R_+^2n_+ + \alpha_-R_-^2n_-}}{d}.
\end{align*}
Plugging the above into Lemma~\ref{lem:wst_risk_q_upper} completes the proof.
\qed 

\section{Conclusion}\label{sec:conclusion}
In this work, we provide a sharp rate on the worst-group error of cost-sensitive interpolating or ridge-regularized classifiers as well as SVM classifiers, and further show that this rate is tight up to universal constants. 
Our rate explicitly characterizes the role of the adjustment weights $\Delta_{\pm}$, through which we identify a new robustness-accuracy tradeoff for this class of estimators.
An intriguing question that remains is whether this tradeoff is information-theoretically optimal under this data model in the overparameterized regime, or whether an estimator with an inductive bias that is fundamentally different from the $\ell_2$-norm might improve it.

\section*{Acknowledgements}
We gratefully acknowledge the support of the NSF (through CAREER award CCF-2239151 and award IIS2212182), an Adobe Data Science Research Award, an Amazon Research Award and a Google Research Colabs award.

\printbibliography
\newpage
\appendix

\section{Proofs of primitive lemmas (Lemmas~\ref{lem:primitive_bounds} and~\ref{lem:w_primitive_bounds})} \label{app:primitive_proof}
In this section, we present the proofs of bounds on the primitives. There are two essential techniques that we use throughout these proofs. 
First, in a manner similar to~\cite{wang2021benign}, we use the recursive Woodbury identity to characterize the Gram matrix inverse $(\XX+\tau\I)^{-1}$ (and, thereby, the second-order primitives) in terms of the zero-th order primitives involving the inverse Wishart matrix $\QQtaui$.
We note that, unlike in~\cite{wang2021benign} (which assumed orthogonal label vectors across classes), the label vector and the attribute vector \emph{are not} orthogonal here; therefore, a new technique is required to characterize the second-order primitives in terms of zeroth order primitives up to multiplicative constants.
Second, due to the presence of the label vector $\y$ and attribute vector $\ba$, we are required to bound several ``cross terms" in the primitives, for which naive variational characterizations of the inverse Wishart matrix $\QQtaui$ will not suffice.
The analysis uses concentration bounds on quadratic forms of inverse Wishart matrices that are sharp even in their multiplicative constant, facilitating a tight analysis of these cross terms.
Finally, special care is taken to sharply characterize the role of the label adjustment matrix $\bDelta^{-1}$ in the analysis.

To begin with, we recall the definition of $\XXk$ and $\XXktaui$ in Eq.~\eqref{eq:xx_def} and Eq.~\eqref{eq:xxi_def} for $k=0,1,2$.
%
Meanwhile, we write $\A_k$ in terms of the primitives as below:
\begin{gather*}
    \A_k=\begin{bmatrix}
        1+\nuk^2 \skkkmo & \nuk \hkkkmo & \nuk \skkkmo\\
        \nuk\skkkmo & 1 + \hkkkmo & \skkkmo\\
        \nuk\hkkkmo & \tkkkmo & 1+ \hkkkmo
    \end{bmatrix},
\end{gather*}
its determinant as
\begin{gather}\label{eq:det_def}
    \det(\A_k)=\skkkmo\pts{\nuk^2-\tkkkmo}+\pts{\hkkkmo+1}^2,
\end{gather}
and its adjugate matrix $\text{adj}(\A_k)$ in column format as
\begin{gather*}
    \text{adj}(\A_k)_{1}=\begin{bmatrix}
        \pts{\hkkkmo+1}^2-\skkkmo\tkkkmo \\
        -\nuk\skkkmo\\
        \nuk\pts{\skkkmo\tkkkmo-\hkkkmo-{\hkkkmo}^2}
    \end{bmatrix},\\
    \text{adj}(\A_k)_{2}=\begin{bmatrix}
        \nuk\pts{\skkkmo\tkkkmo-\hkkkmo-{\hkkkmo}^2}\\
        \hkkkmo+1+\nuk^2\skkkmo\\
        \nuk^2{\hkkkmo}^2-\tkkkmo\pts{1+\nuk^2\skkkmo}
    \end{bmatrix},\\
    \text{adj}(\A_k)_{3}=\begin{bmatrix}
        -\nuk\skkkmo\\
        -\skkkmo\\
        \hkkkmo+1+\nuk^2\skkkmo
    \end{bmatrix}.
\end{gather*}
Next, we will need the following inverse Wishart concentration lemma from~\cite{muthukumar2021classification}.
%
%
\begin{lemma}[\cite{muthukumar2021classification}, Lemma 21]\label{lem:wishart_bound}
    Let $\A \sim \text{Wishart}(d, \I_n)$ and $d'(n) \coloneqq d - n + 1$. For any vector $\bu \in S^{n-1}$ and any $t > 0$, we have
    \begin{gather}
        \text{Pr}\mts{\frac{1}{\bu^{\top}\A^{-1}\bu} > d'(n) + \sqrt{2t \cdot d'(n)} + 2t} \leq e^{-t}\label{eq:wishart_upper}\\
        \text{Pr}\mts{\frac{1}{\bu^{\top}\A^{-1}\bu} < d'(n) - \sqrt{2t \cdot d'(n)}} \leq e^{-t}\label{eq:wishart_lower},
    \end{gather}
    provided that $d'(n) > 2 \text{max}\{t, 1\}$.
\end{lemma}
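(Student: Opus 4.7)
My plan is to show that $1/(\bu^\top \A^{-1} \bu)$ is exactly chi-squared distributed with $d'(n) = d - n + 1$ degrees of freedom, and then apply standard chi-squared tail bounds.

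First, I would exploit rotational invariance. Writing $\A = \X\X^\top$ with $\X \in \R^{n\times d}$ having i.i.d.\ standard Gaussian entries, $\bO \A \bO^\top \stackrel{d}{=} \A$ for any orthogonal $\bO \in \R^{n \times n}$, so I may choose $\bO$ with $\bO \bu = \be_1$ and reduce the problem to studying the single scalar $[\A^{-1}]_{1,1}$. Next, I would partition $\X = \bigl[\begin{smallmatrix}\x_1^\top \\ \X_2\end{smallmatrix}\bigr]$ with $\x_1 \in \R^d$ and $\X_2 \in \R^{(n-1)\times d}$ independent, and apply the Schur complement formula for block inversion:
\begin{align*}
\frac{1}{[\A^{-1}]_{1,1}} \;=\; \x_1^\top\bigl(\I_d - \X_2^\top(\X_2\X_2^\top)^{-1}\X_2\bigr)\,\x_1.
\end{align*}
Under $d \geq n$, the matrix in parentheses is almost surely an orthogonal projector of rank $d - (n-1) = d'(n)$, and it is independent of $\x_1 \sim \mathcal{N}(\zero, \I_d)$. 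Writing such a projector as $\bU\bU^\top$ for an orthonormal basis $\bU$, the right-hand side equals $\|\bU^\top \x_1\|_2^2$ with $\bU^\top \x_1 \sim \mathcal{N}(\zero,\I_{d'(n)})$, so this quadratic form is $\chi^2_{d'(n)}$-distributed (conditionally on $\X_2$, and hence unconditionally).

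Finally, I would invoke the Laurent--Massart tail bounds for $Y \sim \chi^2_k$, namely $\Prob[Y \geq k + 2\sqrt{kt} + 2t] \leq e^{-t}$ and $\Prob[Y \leq k - 2\sqrt{kt}] \leq e^{-t}$. Setting $k = d'(n)$ and using $2\sqrt{kt} \geq \sqrt{2kt}$ immediately yields \eqref{eq:wishart_upper} and \eqref{eq:wishart_lower}. The side condition $d'(n) > 2\max\{t, 1\}$ simply ensures that $d'(n) - \sqrt{2t \cdot d'(n)} > 0$, so the lower-tail event is nondegenerate (and that $\X_2\X_2^\top$ is nonsingular almost surely, underwriting the Schur complement step).

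I do not anticipate a serious obstacle here: the Schur complement identity is classical and the chi-squared concentration bounds are off-the-shelf. The only care required is verifying that the projector has the claimed rank $d'(n)$ (which uses $d \geq n$ so that $\X_2$ has full row rank a.s.) and that the Laurent--Massart constants dominate the slightly weaker constants stated in the lemma.
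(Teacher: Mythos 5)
Your reduction is the standard and correct one: rotational invariance of the Wishart reduces $\bu^\top\A^{-1}\bu$ to $[\A^{-1}]_{1,1}$, the Schur complement gives $1/[\A^{-1}]_{1,1}=\x_1^\top\pts{\I_d-\X_2^\top(\X_2\X_2^\top)^{-1}\X_2}\x_1$ with the bracketed matrix a rank-$d'(n)$ projector independent of $\x_1$, and hence $1/[\A^{-1}]_{1,1}\sim\chi^2_{d'(n)}$. (The paper gives no proof of this lemma; it cites~\cite{muthukumar2021classification} verbatim, so there is no ``paper's proof'' to compare against --- only the correctness of your argument.)

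The problem is the very last step. Laurent--Massart gives $\Prob\mts{Y\geq k+2\sqrt{kt}+2t}\leq e^{-t}$ and $\Prob\mts{Y\leq k-2\sqrt{kt}}\leq e^{-t}$ for $Y\sim\chi^2_k$. You claim that ``using $2\sqrt{kt}\geq\sqrt{2kt}$ immediately yields'' the stated bounds, but that inequality runs exactly the wrong way: the lemma's thresholds $k+\sqrt{2kt}+2t$ and $k-\sqrt{2kt}$ are strictly \emph{less} extreme than the Laurent--Massart thresholds, the corresponding tail events are strictly \emph{larger}, and a bound at the more extreme threshold does not transfer to the less extreme one. In fact the form with $\sqrt{2t\cdot d'(n)}$ is too strong to hold in general: taking $d'(n)=1000$, $t=10$ (permitted by the side condition $d'(n)>2\max\{t,1\}$), the threshold $\approx 1161.4$ is only about $3.6$ standard deviations above the mean of $\chi^2_{1000}$, and the tail mass there is roughly $3\times 10^{-4}$, which exceeds $e^{-10}\approx 4.5\times 10^{-5}$. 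The intended constant is almost surely $2\sqrt{t\cdot d'(n)}$ --- i.e.\ exactly Laurent--Massart --- in which case your argument closes with no further inequality at all; you should match the cited constants rather than try to sharpen them by a $\sqrt{2}$.
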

The following corollary further characterizes the eigenvalues of an inverse ridge-regularized inverse Wishart matrix.
\begin{corollary}\label{cor:wishart_bound_ridge}
    Let $\A \sim \text{Wishart}(d, \I_n)$ and $d'(n) \coloneqq d - n + 1$. For any $t > 0$, and $\tau \geq 0$, we have
    \begin{gather}
        \frac{1}{d'(n) + \sqrt{2t \cdot d'(n)} + 2t + \tau} \leq \lbf[n]{\pts{\A +\tau\I}^{-1}} \leq \lbf[1]{\pts{\A +\tau\I}^{-1}} \leq \frac{1}{d'(n) - \sqrt{2t \cdot d'(n)} + \tau},
    \end{gather}
    with probability at least $1-e^{-t}$, provided that $d'(n) > 2 \text{max}\{t, 1\}$.
\end{corollary}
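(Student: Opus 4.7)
The plan is to reduce the eigenvalue bounds to the pointwise quadratic-form bounds already given in Lemma~\ref{lem:wishart_bound} via two elementary linear-algebraic steps. First, since $\A$ is almost surely positive definite (as $d \geq n$ for a Wishart matrix), the eigenvalues of $\pts{\A+\tau\I}^{-1}$ are exactly $1/(\lbf[i]{\A}+\tau)$ for $i = 1,\ldots,n$. Under the convention that eigenvalues are ordered largest-to-smallest, this yields the identities
\begin{align*}
\lbf[1]{\pts{\A+\tau\I}^{-1}} = \frac{1}{\lbf[n]{\A}+\tau}, \qquad \lbf[n]{\pts{\A+\tau\I}^{-1}} = \frac{1}{\lbf[1]{\A}+\tau},
\end{align*}
so the corollary reduces to showing $\lbf[n]{\A} \geq d'(n) - \sqrt{2t\cdot d'(n)}$ and $\lbf[1]{\A} \leq d'(n) + \sqrt{2t\cdot d'(n)} + 2t$, each with probability at least $1-e^{-t}$.

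Second, by the Rayleigh quotient characterization applied to $\A^{-1}$, we have $\lbf[1]{\A} = \max_{\nnorm[2]{\bu}=1} 1/\pts{\bu^\top \A^{-1} \bu}$ and $\lbf[n]{\A} = \min_{\nnorm[2]{\bu}=1} 1/\pts{\bu^\top \A^{-1} \bu}$, with the extrema attained at the bottom and top eigenvectors of $\A$, respectively. Applying the upper-tail bound in Eq.~\eqref{eq:wishart_upper} of Lemma~\ref{lem:wishart_bound} at the extremal $\bu$ for the first identity gives the desired upper bound on $\lbf[1]{\A}$; applying the lower-tail bound in Eq.~\eqref{eq:wishart_lower} at the extremal $\bu$ for the second identity gives the desired lower bound on $\lbf[n]{\A}$. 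Reciprocating and adding back $\tau$ then produces the stated bounds on $\lbf[1]{\pts{\A+\tau\I}^{-1}}$ and $\lbf[n]{\pts{\A+\tau\I}^{-1}}$, and the condition $d'(n) > 2\max\{t,1\}$ carries over unchanged from Lemma~\ref{lem:wishart_bound}.

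The only real subtlety, and the main (if mild) obstacle I expect in making the proof rigorous, is that Lemma~\ref{lem:wishart_bound} is stated for a deterministic unit vector $\bu$, whereas the extremal $\bu$ realizing $\lbf[1]{\A}$ or $\lbf[n]{\A}$ is a random eigenvector coupled with $\A$. I would dispatch this by invoking rotational invariance of the Gaussian Wishart ensemble: the marginal law of $\bu^\top \A^{-1} \bu$ is identical for every unit vector $\bu$, including the random extremal eigenvectors, so the inverse-chi-squared tail bounds underlying Lemma~\ref{lem:wishart_bound} transport to them without loss. An alternative route would be to cite a Davidson--Szarek-type non-asymptotic bound on the extreme singular values of a Gaussian matrix directly and verify that its rate matches the bounds asserted in the corollary; either approach will do, since downstream only the order-of-magnitude behavior of these eigenvalues enters the primitive lemmas.
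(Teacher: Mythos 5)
Your plan follows the same algebraic route as the paper's own proof of this corollary — rewrite $\lbf[1]{\pts{\A+\tau\I}^{-1}} = 1/(\lbf[n]{\A}+\tau) = 1/(1/\lbf[1]{\A^{-1}}+\tau)$ and likewise for the other extremum, then invoke Lemma~\ref{lem:wishart_bound} — and you are right to flag the subtlety that the lemma controls the quadratic form at a \emph{fixed} unit vector rather than the operator norm. That issue is real, and in fact the paper's proof does not address it either: it cites Eq.~\eqref{eq:wishart_lower} to lower bound $1/\lbf[1]{\A^{-1}}$ and Eq.~\eqref{eq:wishart_upper} to upper bound $1/\lbf[n]{\A^{-1}}$ with no uniformity argument at all.

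Your proposed repair, however, does not close the gap. Rotational invariance ensures that $\bu^\top\A^{-1}\bu$ has the same marginal law for every \emph{deterministic} $\bu\in S^{n-1}$, and more generally for any random $\bu$ drawn \emph{independently} of $\A$; it does not transfer that law to the extremal eigenvectors, which are functions of $\A$ chosen precisely to extremize the quadratic form. Concretely, $1/\lbf[1]{\A^{-1}} = \min_{\nnorm[2]{\bu}=1} 1/\pts{\bu^\top\A^{-1}\bu}$ is pathwise (hence stochastically) dominated by $1/\pts{\bu_0^\top\A^{-1}\bu_0}$ for any fixed $\bu_0$, so the pointwise lower-tail bound Eq.~\eqref{eq:wishart_lower} gives no lower bound on $1/\lbf[1]{\A^{-1}}$ whatsoever; the inequality points in the unhelpful direction. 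Your alternative Davidson--Szarek route is the logically sound one, but it yields $\lbf[n]{\A}\geq\pts{\sqrt{d}-\sqrt{n}-\sqrt{2t}}^2$, which falls short of the stated $d'(n)-\sqrt{2t\,d'(n)}$ by roughly $2\sqrt{dn}$, and that discrepancy is not obviously harmless downstream: the adjusted primitive bounds in Lemma~\ref{lem:w_primitive_bounds} and the auxiliary Lemma~\ref{lem:aux_ineq} rely on the relative slack $\sqrt{2t\,d'(n)}/d'(n)$ being $O(1/n)$ under Assumption~\ref{asm:dataset_gen}(D), whereas the Davidson--Szarek relative slack $\sqrt{n/d}$ is only of order $1/\sqrt{n\log(n/\delta)}$ at the boundary of that assumption, which is much larger. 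So either the corollary needs a genuinely uniform argument, or the downstream lemmas must be re-examined to check they tolerate the weaker slack.
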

\begin{proof}
    We first show the upper bound
    \begin{align*}
        \lbf[1]{\pts{\A +\tau\I}^{-1}} = \frac{1}{\lbf[n]{\A +\tau\I}} = \frac{1}{\lbf[n]{\A} + \tau} = \frac{1}{\frac{1}{\lbf[1]{\A^{-1}}} + \tau} \leq \frac{1}{d'(n) - \sqrt{2t \cdot d'(n)} + \tau},
    \end{align*}
    where the inequality follows by Eq.~\eqref{eq:wishart_lower} with probability at least $1-e^{-t}$. Similarly, for the lower bound, we have
    \begin{align*}
        \lbf[n]{\pts{\A +\tau\I}^{-1}} &= \frac{1}{\lbf[1]{\A +\tau\I}} = \frac{1}{\lbf[1]{\A} + \tau} = \frac{1}{\frac{1}{\lbf[n]{\A^{-1}}} + \tau}  \geq \frac{1}{d'(n) + \sqrt{2t \cdot d'(n)} + 2t + \tau},
    \end{align*}
    where the inequality follows by Eq.~\eqref{eq:wishart_upper} with probability at least $1-e^{-t}$.
     
\end{proof}

\subsection{Proof for Lemma~\ref{lem:primitive_bounds} (without adjustment weight)}
\begin{proof}(Lemma~\ref{lem:primitive_bounds})
    We prove the lemma by induction. We first show the bounds for the order $k=0$ (base case). Then, we assume that the bounds are true for order $k=k$ and we show the bounds for order $k=k+1$. The proof for the base case $k=0$ resembles the primitive proofs in~\cite{wang2021benign}.
    Note that the indexing of constants in the proof below does not match the indexing of constants in the main statement of Lemma~\ref{lem:primitive_bounds}.
    \\
    \textbf{Bounds for order $k=0$.}\\
    We have $\bullet\; \siiz = \bv_i^\top\XXztaui\bv_i = \bv_i^\top\QQtaui\bv_i$ for $i=1,2$. Based on Corollary~\ref{cor:wishart_bound_ridge}, let $t=\sqrt{2\log\pts{n/\delta}}$. Then, we have with probability at least $1-\frac{\delta^2}{n^2}$, 
    \begin{align*}
        \siiz  \leq \frac{\nvi^2}{d'(n) - \sqrt{4\log\pts{n/\delta} \cdot d'(n)} + \tau} \leq \frac{n}{d'(n)\pts{1-\frac{2}{\sqrt{C}n}} + \tau} = \frac{n\pts{\frac{\sqrt{C}n}{\sqrt{C}n-2}}}{d'(n) + \pts{\frac{\sqrt{C}n}{\sqrt{C}n-2}}\tau}  &\leq \pts{\frac{C_1+1}{C_1}}\frac{n}{d+\tau},
    \end{align*}
    where the second inequality follows by Assumption~\ref{asm:dataset_gen}(D) such that $\frac{1}{Cn^2} \geq \frac{\log(n/\delta)}{d'(n)}$. Similarly, 
    \begin{align*}
        \siiz  \geq \frac{\nvi^2}{d'(n) + \sqrt{4\log\pts{n/\delta} \cdot d'(n)} + 4\log\pts{n/\delta}+\tau} \geq \frac{n}{d'(n)\pts{1+\frac{2}{\sqrt{C}n}+\frac{4}{Cn}}+\tau} &\geq \pts{\frac{C_1-1}{C_1}}\frac{n}{d+\tau}.
    \end{align*}
    %
    %
    $\bullet\;\sotz = \bv_1^\top\XXztaui\bv_2 = \y^\top\QQtaui\ba$. Note that this is a ``cross-term" primitive.
    We apply the parallelogram law to get
    \begin{align*}
        \sotz &= \y^\top\QQtaui\ba =\frac{1}{4}\pts{\bv_+^\top\QQtaui\bv_+ - \bv_-^\top\QQtaui\bv_-},
    \end{align*}
    where we denote $\bv_\pm \coloneqq \y\pm\ba$. It is easy to verify that $\nnorm[2]{\bv_+}^2=4n_+$ and $\nnorm[2]{\bv_-}^2=4n_-$. Hence, again applying Corollary~\ref{cor:wishart_bound_ridge} gives us
    \begin{align*}
        \sotz &\leq\frac{1}{4}\pts{\frac{4n_+}{d'(n) - \sqrt{4\log\pts{n/\delta} \cdot d'(n)}+\tau} - \frac{4n_-}{d'(n) + \sqrt{4\log\pts{n/\delta} \cdot d'(n)} + 4\log\pts{n/\delta}+\tau}}\\
        &\leq \frac{n_+}{d'(n)\pts{1-\frac{2}{\sqrt{Cn^2}}}+\tau} - \frac{n_-}{d'(n)\pts{1+\frac{2}{\sqrt{Cn^2}}+\frac{4}{Cn^2}}+\tau}\\
        &\leq \pts{\frac{C_2+1}{C_2}}\frac{n_+}{d+\tau}-\pts{\frac{C_2-1}{C_2}}\frac{n_-}{d+\tau} =: \hc_3 \frac{n_+}{d+\tau} - \dc_3 \frac{n_-}{d + \tau}. \\
        \sotz &\geq \frac{1}{4}\pts{\frac{4n_+}{d'(n) + \sqrt{4\log\pts{n/\delta} \cdot d'(n)} + 4\log\pts{n/\delta}+\tau} - \frac{4n_-}{d'(n) - \sqrt{4\log\pts{n/\delta} \cdot d'(n)}+\tau}}\\
        &\geq \frac{n_+}{d'(n)\pts{1+\frac{2}{\sqrt{Cn^2}} +\frac{4}{Cn^2}}+\tau} - \frac{n_-}{d'(n)\pts{1-\frac{2}{\sqrt{Cn^2}}}+\tau}\\
        &\geq \pts{\frac{C_2-1}{C_2}}\frac{n_+}{d+\tau}-\pts{\frac{C_2+1}{C_2}}\frac{n_-}{d+\tau}  =: \dc_3 \frac{n_+}{d+\tau} - \hc_3 \frac{n_-}{d+\tau}.
    \end{align*}
    %
    $\bullet\; \tiiz = \bd_i^\top\XXztaui\bd_i = \bd_i^\top\QQtaui\bd_i,$ for $i=1,2$. Note that the lemma in~\cite{wang2022binary}[Corollary 8.1] gives $\ndi\leq C\sqrt{n}\nui$, we have
    \begin{align*}
        \tiiz &\leq \frac{\ndi^2}{d'(n) - \sqrt{4\log\pts{n/\delta} \cdot d'(n)}+\tau} \leq \frac{Cn\nui^2}{d'(n)\pts{1-\frac{2}{\sqrt{Cn^2}}}+\tau} \leq \frac{C_3n\nui^2}{d+\tau}.
    \end{align*}
    %
    $\bullet \; \totz = \bd_1^\top\XXztaui\bd_2 = \bd_1^\top\QQtaui\bd_2.$ By the sub-multiplicative property of the matrix norm, we get
    \begin{align*}
        \totz &\leq \ndo\ndt\nnorm[2]{\QQtaui} \leq \frac{Cn\nuo\nut}{d'(n) - \sqrt{4\log\pts{n/\delta} \cdot d'(n)}+\tau} \leq \frac{C_3n\nuo\nut}{d+\tau}.\\
        %
        \totz &\geq -\ndo\ndt\nnorm[2]{\QQtaui} \geq \frac{-C_3n\nuo\nut}{d+\tau}.
    \end{align*}
    $\bullet \; \hijz = \bd_i^\top\XXztaui\bv_j = \bd_i^\top\QQtaui\bv_j$, for $i,j=1,2$. Hence,
    \begin{align*}
        \hijz &\leq \ndi\nvj\nnorm[2]{\QQtaui} \leq \frac{Cn\nui}{d'(n) - \sqrt{4\log\pts{n/\delta} \cdot d'(n)}+\tau} \leq \frac{C_4n\nui}{d+\tau}.\\
        %
        \hijz &\geq -\ndi\nvj\nnorm[2]{\QQtaui} \geq \frac{-C_4n\nui}{d+\tau}.
    \end{align*}
    %
    %
    $\bullet\; \suuz = \bu^\top\XXztaui\bu = \bu^\top\QQtaui\bu$. By Corollary~\ref{cor:wishart_bound_ridge} and setting $t=\sqrt{2\log\pts{n/\delta}}$, we have with probability at least $1-\frac{\delta^2}{n^2}$, 
    \begin{align*}
        \suuz & \leq \frac{1}{d'(n) - \sqrt{4\log\pts{n/\delta} \cdot d'(n)}+\tau} \leq \frac{1}{d'(n)\pts{1-\frac{2}{\sqrt{C}n}}+\tau} \leq \pts{\frac{C_5+1}{C_5}}\frac{1}{d+\tau},\\
        \suuz & \geq \frac{1}{d'(n) + \sqrt{4\log\pts{n/\delta} \cdot d'(n)} + 4\log\pts{1/\delta}+\tau} \geq \frac{n}{d'(n)\pts{1+\frac{2}{\sqrt{C}n}+\frac{4}{Cn}}+\tau} \geq \pts{\frac{C_5-1}{C_5}}\frac{1}{d+\tau}.
    \end{align*}
    %
    $\bullet\;\suiz = \bu^\top\XXztaui\bu = \bu^\top\QQtaui\bu$ for $i=1,2$. Again, by Corollary~\ref{cor:wishart_bound_ridge}, we have 
    \begin{align*}
        \suiz &\leq \nbu\nvi\nnorm[2]{\QQtaui} \leq \frac{C\sqrt{n}}{d'(n) - \sqrt{4\log\pts{n/\delta} \cdot d'(n)}+\tau} \leq \frac{C_6\sqrt{n}}{d+\tau}.\\
        %
        \suiz &\geq -\nbu\nvi\nnorm[2]{\QQtaui} \geq \frac{-C_6\sqrt{n}}{d+\tau}.
    \end{align*}
    %
    $\bullet \; \hiuz = \bd_i^\top\XXztaui\bu = \bd_i^\top\QQtaui\bu$, for $i=1,2$. Hence,
    \begin{align*}
        \hiuz &\leq \ndi\nbu\nnorm[2]{\QQtaui} \leq \frac{C\sqrt{n}\nui}{d'(n) - \sqrt{4\log\pts{n/\delta} \cdot d'(n)}+\tau} \leq \frac{C_7\sqrt{n}\nui}{d+\tau}.\\
        %
        \hiuz &\geq -\ndi\nbu\nnorm[2]{\QQtaui} \geq \frac{-C_7\sqrt{n}\nui}{d+\tau}.
    \end{align*}
    
    Thus, we have characterized all of the zeroth-order primitives.
    Next, we assume that the bounds for order $k=k$ are true, and we show the bounds for order $k=k+1$. The key idea is that the higher order primitives are a constant fraction of the previous-order primitives. The following lemma, whose proof is deferred to Appendix~\ref{app:aux_lem_proof}, details the recursive calculation that shows this.
    \begin{lemma} \label{lem:aux_f_func}
        For a function $f_{A_k}:\R^4\rightarrow\R$ such that
        \begin{align*}
            \ffs[\A_{k}]{\begin{matrix}
                x_a, x_b,\\ x_c, x_d
            \end{matrix}} &\coloneqq \mts{\nuk x_a,~x_b,~x_a}\text{adj}(\A_{k})\begin{bmatrix}
            \nuk x_c \\
            x_c \\
            x_d
            \end{bmatrix}\\
            &= \pts{\nuk^2-\tkkkmo}x_a x_c + \pts{1+\hkkkmo}\pts{x_a x_d + x_b x_c} -\skkkmo x_b x_d,
        \end{align*}
        we have the absolute value upper bound
        \begin{align*}
            \normo{\ffs[\A_{k}]{\begin{matrix}
                x_a, x_b,\\ x_c, x_d
            \end{matrix}}} & \leq \nuk^2 \normo{x_a} \normo{x_c} + \normo{\skkkmo}\normo{x_b} \normo{x_d} + \pts{1+\frac{1}{C_1}}\pts{\normo{x_a} \normo{x_d} + \normo{x_b} \normo{x_c}}.
        \end{align*}
    \end{lemma}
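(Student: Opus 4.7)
The plan is to verify the claimed closed-form identity by direct matrix-vector multiplication, and then bound the resulting expression term-by-term using the triangle inequality together with the bounds from Lemma~\ref{lem:primitive_bounds} applied at the order $(k-1)$. Since the lemma is purely algebraic/analytic given those primitive bounds, no induction or probabilistic arguments are needed at this step.

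For the equality, I would expand $\ffs[\A_k]{\cdot}$ by computing $\text{adj}(\A_{k})\,[\nuk x_c, x_c, x_d]^{\top}$ column by column using the three columns of $\text{adj}(\A_k)$ listed just after Eq.~\eqref{eq:det_def}, then taking the inner product with $[\nuk x_a, x_b, x_a]$. The cross-terms involving $(\hkkkmo+1)^2$ and $(\hkkkmo)^2$ all cancel in pairs (for instance, the contribution $\nuk x_a \cdot [(\hkkkmo+1)^2 - \skkkmo\tkkkmo]$ from column~1 combines with $\nuk x_a \cdot [\skkkmo\tkkkmo - \hkkkmo - (\hkkkmo)^2]$ from column~2 to give $\nuk x_a (\hkkkmo+1)$), leaving exactly the four terms
\[
(\nuk^2-\tkkkmo)\,x_a x_c + (1+\hkkkmo)(x_a x_d + x_b x_c) - \skkkmo\, x_b x_d.
\]

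With the identity in hand, the triangle inequality immediately gives
\[
\left|\ffs[\A_{k}]{\cdot}\right| \leq |\nuk^2 - \tkkkmo|\,|x_a||x_c| + |1+\hkkkmo|\,(|x_a||x_d|+|x_b||x_c|) + |\skkkmo|\,|x_b||x_d|.
\]
The remaining work is to upgrade each coefficient to the claimed one. For the first, I would note that $\tkkkmo = \bd_k^\top \Mkmoi \bd_k \geq 0$ since $\Mkmoi$ is positive definite, while Lemma~\ref{lem:primitive_bounds} together with Assumption~\ref{asm:dataset_gen}(D) gives $\tkkkmo \leq C_2 n\nuk^2/(d+\tau) \leq \nuk^2$ (for $C$ large enough), hence $0 \leq \nuk^2 - \tkkkmo \leq \nuk^2$. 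For the third, positive definiteness again yields $|\skkkmo| = \skkkmo$. The second coefficient is the delicate one: I need $|\hkkkmo| \leq 1/C_1$.

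The main obstacle is controlling $|\hkkkmo|$ uniformly in $\nuk$. From Lemma~\ref{lem:primitive_bounds} we only get $|\hkkkmo| \leq C_5 n\nuk/(d+\tau)$, and $\nuk$ is not a priori bounded by a constant. The fix is to combine Assumption~\ref{asm:dataset_gen}(C), which yields $\nuk \leq \sqrt{R_+} \leq \sqrt{d/(Cn)}$, with Assumption~\ref{asm:dataset_gen}(D), $d \geq Cn^2\log(n/\delta)$. Plugging in gives $|\hkkkmo| \leq C_5 n \sqrt{d/(Cn)}/d = C_5/\sqrt{Cd/n} \leq C_5/\sqrt{C^2 n\log(n/\delta)}$, which is at most $1/C_1$ for $C$ sufficiently large relative to $C_1$ and $C_5$. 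Hence $|1+\hkkkmo| \leq 1 + 1/C_1$, completing the bound. The entire argument is deterministic once the bounds of Lemma~\ref{lem:primitive_bounds} hold, so it inherits their high-probability event without further union bounding.
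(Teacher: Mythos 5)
Your proof is correct and follows essentially the same route as the paper: verify the closed-form identity for $\ffs[\A_{k}]{\cdot}$, apply the triangle inequality, and then bound the three coefficients $\normo{\nuk^2-\tkkkmo}$, $\normo{1+\hkkkmo}$, $\normo{\skkkmo}$ term-by-term using positive semidefiniteness of $\Mkmoi$, the $(k-1)$-th order bounds from Lemma~\ref{lem:primitive_bounds}, and the dimensionality assumptions. The only cosmetic difference is bookkeeping of assumptions: to absorb the factor $n\nuk/(d+\tau)$ into $1/C_1$ you combine Assumption~\ref{asm:dataset_gen}(C) with (D), whereas the paper cites only Assumption~\ref{asm:dataset_gen}(C) for that step (which, as you correctly note, implicitly relies also on $R_+$ being bounded away from zero, i.e.\ Assumption~\ref{asm:dataset_gen}(B)); both routes yield the same conclusion.
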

    The following lemma provides an upper and lower bound for $\det(\A_k)$. The proof of this lemma can be found in Appendix~\ref{app:aux_lem_proof}.
    \begin{lemma}\label{lem:aux_det_bound}
        For $k = 1, 2$, we have
        \begin{gather*}
            \dc_1 \leq \det(\A_k) \leq \hc_1,
        \end{gather*}
        for some $C_1>1$ and $\hc_m \coloneqq \frac{C_m+1}{C_m}$ and $\dc_m \coloneqq \frac{C_m-1}{C_m}$.
    \end{lemma}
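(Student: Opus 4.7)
The plan is to substitute the primitive bounds from Lemma~\ref{lem:primitive_bounds} at order $k-1$ (the base case for $k=1$, part of the inductive hypothesis for $k=2$) into the explicit expansion
\begin{align*}
\det(\A_k) = \skkkmo \nuk^2 - \skkkmo \tkkkmo + (\hkkkmo + 1)^2,
\end{align*}
and then invoke Assumptions~\ref{asm:dataset_gen}(C)--(D) to show that every deviation from $1$ shrinks to either $0$ (in $n$) or a small constant multiple of $1/C$. To avoid notational collision with the $C_1$ sought in the conclusion, let $C_a, C_b, C_c$ denote the universal constants from Lemma~\ref{lem:primitive_bounds}, giving $\skkkmo \leq C_a n/(d+\tau)$, $0 \leq \tkkkmo \leq C_b n\nuk^2/(d+\tau)$, and $|\hkkkmo| \leq C_c n\nuk/(d+\tau)$.

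I would then use Assumption~\ref{asm:dataset_gen}(C), namely $n\nuk^2/(d+\tau) \leq R_+ n/d \leq 1/C$, to bound $\skkkmo \nuk^2 \leq C_a/C$, a genuine (but controllable) constant. Combining with Assumption~\ref{asm:dataset_gen}(D), which yields $n/d \leq 1/(Cn\log(n/\delta))$, I would obtain
\begin{align*}
(n\nuk/d)^2 = (n/d)\cdot(n\nuk^2/d) \leq \frac{1}{C^2 n\log(n/\delta)},
\end{align*}
so that $|\hkkkmo| \leq C_c/(C\sqrt{n\log(n/\delta)})$ and $\skkkmo \tkkkmo \leq C_a C_b/(C^2 n\log(n/\delta))$; both of these vanish as $n \to \infty$ under Assumption~\ref{asm:dataset_gen}(A).

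For the upper bound I would drop the non-positive $-\skkkmo\tkkkmo$ term and expand $(\hkkkmo + 1)^2 \leq 1 + 2|\hkkkmo| + \hkkkmo^2$, producing
\begin{align*}
\det(\A_k) \leq 1 + \frac{C_a}{C} + o(1) \leq 1 + \frac{1}{C_1} = \hc_1,
\end{align*}
for $C_1$ chosen small enough relative to $C/C_a$. For the lower bound I would drop the non-negative $\skkkmo \nuk^2$ term and use $(\hkkkmo + 1)^2 \geq 1 - 2|\hkkkmo|$ to obtain
\begin{align*}
\det(\A_k) \geq 1 - 2|\hkkkmo| - \skkkmo \tkkkmo \geq 1 - o(1) \geq 1 - \frac{1}{C_1} = \dc_1,
\end{align*}
for the same $C_1$, once $n$ is large enough (guaranteed by Assumption~\ref{asm:dataset_gen}(A)).

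The main obstacle is the constant chase: the leading deviation $\skkkmo \nuk^2 \leq C_a/C$ is only a constant multiple of $1/C$, not something vanishing in $n$, so the ``large enough'' constant $C$ in Assumption~\ref{asm:dataset_gen} must be strictly larger than the lower-order primitive constants $C_a, C_b, C_c$ in order for a single universal $C_1$ to match the target form $\dc_1 \leq \det(\A_k) \leq \hc_1$ simultaneously on both sides. Since Assumption~\ref{asm:dataset_gen} permits $C$ to be chosen arbitrarily large and the induction only ranges over $k \in \{1,2\}$, one global choice of $C$ (with a correspondingly calibrated $C_1 > 1$) works uniformly across both orders.
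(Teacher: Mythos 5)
Your proposal is correct and follows essentially the same route as the paper: expand $\det(\A_k) = \skkkmo(\nuk^2 - \tkkkmo) + (\hkkkmo + 1)^2$, substitute the $(k-1)$-th order primitive bounds from Lemma~\ref{lem:primitive_bounds}, and shrink the deviations from $1$ using Assumption~\ref{asm:dataset_gen}(C) with $C$ taken large relative to the universal constants. The only cosmetic differences are that you drop $-\skkkmo\tkkkmo$ (resp.~$\skkkmo\nuk^2$) in the upper (resp.~lower) direction and invoke Assumption~\ref{asm:dataset_gen}(D) for the $\hkkkmo$ term, whereas the paper keeps the full $\skkkmo(\nuk^2-\tkkkmo)$ product in the lower bound and obtains $n\nuk/d \lesssim 1/\sqrt{C}$ directly from (C) together with $d\ge n$; neither difference affects correctness.
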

    We can now complete the proof of Lemma~\ref{lem:primitive_bounds}.
    We only present the calculation of primitive $\siikpo$ due to page limitations; the proofs of the other primitives follow the same idea and are presented in Appendix~\ref{app:primitive_recursive_proof}.\\
    \textbf{Bounds for order $k=k+1$}\\
    $\bullet \; \siikpo$ for $i = 1,2 $.
    \begin{align*}
        &\siikpo = \bv_i^\top\XXkpotaui\bv_i\\
        &=\bv_i^\top\XXktaui\bv_i - \bv_i^\top\XXktaui\bL_{k+1}\A_{k+1}^{-1}\bR_{k+1}\XXktaui\bv_i\\
        &= \siik - \mts{\nukpo\sikpok,~\hkpoik,~\sikpok} \times\frac{\text{adj}(\A_{k+1})}{\det(\A_{k+1})}\begin{bmatrix}
        \nukpo\sikpok \\
        \sikpok \\
        \hkpoik
        \end{bmatrix}\\
        &= \frac{\siik \det(\A_{k+1}) - \ffs[\A_{k+1}]{\sikpok, \hkpoik, \sikpok, \hkpoik}}{\det(\A_{k+1})}.
    \end{align*}
    Thus, we can apply Lemma~\ref{lem:aux_f_func} to get 
    \begin{align*}
        \siikpo &\leq \frac{\normo{\siik \det(\A_{k+1})} + \normo{\ffs[\A_{k+1}]{\sikpok, \hkpoik, \sikpok, \hkpoik}}}{\det(\A_{k+1})}\\
        &\leq \frac{1}{\det(\A_{k+1})}\Bigg(\det(\A_{k+1})\normo{\siik} + \nukpo^2\normo{\sikpok}\normo{\sikpok}\\
        &+ \pts{1+\frac{1}{C_1}}\pts{\normo{\sikpok}\normo{\hkpoik} + \normo{\hkpoik}\normo{\sikpok}} + \normo{\skpokpok}\normo{\hkpoik}\normo{\hkpoik}\Bigg)\\
        &\leq \frac{1}{\det(\A_{k+1})} \pts{\frac{C_2+1}{C_2}\frac{n}{d+\tau} + \frac{C_3+1}{C_3}\frac{\nukpo^2n^2}{\pts{d+\tau}^2} + \frac{C_4+1}{C_4}\frac{C_5\nukpo n^2}{\pts{d+\tau}^2} + \frac{C_6\nukpo^2n^3}{\pts{d+\tau}^3}}\\
        & \leq \frac{C_7+1}{C_7}\frac{n}{d+\tau},
    \end{align*}
    where in the third inequality we apply lower bound of $\det(\A_{k+1})$ in Lemma~\ref{lem:aux_det_bound} and primitive bounds in order $k$, and in the last inequality, we apply Assumption~\ref{asm:dataset_gen}(C) such that $d \geq CR_+n$.  Similarly, 
    \begin{align*}
        \siikpo &\geq \frac{\siik \det(\A_{k+1}) - \normo{\ffs[\A_{k+1}]{\sikpok, \hkpoik, \sikpok, \hkpoik}}}{\det(\A_{k+1})}\\
        &\geq \frac{1}{\det(\A_{k+1})}\Bigg(\det(\A_{k+1})\siik- \Bigg(\nukpo^2\normo{\sikpok}\normo{\sikpok}\\
        &+ \pts{1+\frac{1}{C_1}}\pts{\normo{\sikpok}\normo{\hkpoik} + \normo{\hkpoik}\normo{\sikpok}} + \normo{\skpokpok}\normo{\hkpoik}\normo{\hkpoik}\Bigg)\Bigg)\\
        &\geq \frac{1}{\det(\A_{k+1})} \pts{\frac{C_2-1}{C_2}\frac{n}{d+\tau}-\frac{C_3+1}{C_3}\frac{\nukpo^2n^2}{\pts{d+\tau}^2} - \frac{C_4+1}{C_4}\frac{C_5\nukpo n^2}{\pts{d+\tau}^2} - \frac{C_6\nukpo^2n^3}{\pts{d+\tau}^3} }\\
        &\geq \frac{C_7-1}{C_7}\frac{n}{d+\tau}.
    \end{align*}
    This completes the proof of the lemma. 
\end{proof}
\subsection{Proof for Lemma~\ref{lem:w_primitive_bounds} (with adjustment weights)}
\begin{proof}(Lemma~\ref{lem:w_primitive_bounds})
    Before we start the proof of Lemma~\ref{lem:w_primitive_bounds}, we need the following auxiliary lemma, whose proof we defer to Appendix~\ref{app:aux_lem_proof}.
    \begin{lemma}\label{lem:aux_ineq}
        For $\frac{1}{n}\leq\Delta_{\pm} \leq 1$ and $\Delta_+ \geq \Delta_-$, we have
        \begin{gather*}
            \tilde{C}\mts{\frac{1}{\sqrt{C}n}(n + \frac{n_+}{\Delta_+^2} + \frac{n_-}{\Delta_-^2})} \leq \frac{1}{2}\pts{\ndpnd}.
        \end{gather*}
        for some $C>144$, and $\tilde{C}>2$.
    \end{lemma}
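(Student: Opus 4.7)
\textbf{Proof plan for Lemma~\ref{lem:aux_ineq}.}

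The plan is to rearrange the claimed inequality into an equivalent form and then bound the three terms on the left-hand side individually by a constant multiple of $\tfrac{1}{2}\bigl(\tfrac{n_+}{\Delta_+}+\tfrac{n_-}{\Delta_-}\bigr)$, exploiting only the two range assumptions $\tfrac{1}{n}\le\Delta_\pm\le 1$. Concretely, the inequality to prove is equivalent to
\begin{equation*}
\frac{2\tilde{C}}{\sqrt{C}\,n}\left(n + \frac{n_+}{\Delta_+^2} + \frac{n_-}{\Delta_-^2}\right) \;\leq\; \frac{n_+}{\Delta_+} + \frac{n_-}{\Delta_-},
\end{equation*}
so it suffices to control each of the three summands on the left by a fixed fraction of the right-hand side.

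First, I would use the upper bound $\Delta_\pm\le 1$ (together with $n_++n_-=n$) to observe
\begin{equation*}
\frac{n_+}{\Delta_+} + \frac{n_-}{\Delta_-} \;\geq\; n_+ + n_- \;=\; n \;\geq\; 1,
\end{equation*}
which immediately yields $\tfrac{2\tilde{C}}{\sqrt{C}\,n}\cdot n = \tfrac{2\tilde{C}}{\sqrt{C}} \leq \tfrac{2\tilde{C}}{\sqrt{C}}\bigl(\tfrac{n_+}{\Delta_+}+\tfrac{n_-}{\Delta_-}\bigr)$ to handle the constant term. Next, using the lower bound $\Delta_\pm\geq 1/n$ (so that $1/\Delta_\pm\le n$), I would factor
\begin{equation*}
\frac{n_\pm}{\Delta_\pm^2} \;=\; \frac{1}{\Delta_\pm}\cdot \frac{n_\pm}{\Delta_\pm} \;\leq\; n\cdot \frac{n_\pm}{\Delta_\pm},
\end{equation*}
which gives $\tfrac{2\tilde{C}}{\sqrt{C}\,n}\cdot\tfrac{n_\pm}{\Delta_\pm^2}\leq \tfrac{2\tilde{C}}{\sqrt{C}}\cdot\tfrac{n_\pm}{\Delta_\pm}$. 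Summing the three bounds,
\begin{equation*}
\frac{2\tilde{C}}{\sqrt{C}\,n}\left(n + \frac{n_+}{\Delta_+^2} + \frac{n_-}{\Delta_-^2}\right) \;\leq\; \frac{4\tilde{C}}{\sqrt{C}}\left(\frac{n_+}{\Delta_+}+\frac{n_-}{\Delta_-}\right).
\end{equation*}

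Finally, it remains to choose constants so that $\tfrac{4\tilde{C}}{\sqrt{C}}\leq 1$, i.e.\ $C \geq 16\tilde{C}^2$. Since the statement allows $\tilde{C}>2$ to be any admissible constant, picking e.g.\ $\tilde{C}=3$ and $C>144$ satisfies $16\tilde{C}^2 =144<C$ and closes the argument. There is no real obstacle here; the lemma is essentially a bookkeeping inequality, and the only care needed is to verify that the factors $n$ in the numerator and denominator from the two range constraints $\Delta_\pm\leq 1$ and $\Delta_\pm\geq 1/n$ combine compatibly, which the above chain makes explicit.
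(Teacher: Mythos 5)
Your proof is correct, and it takes a genuinely different (and cleaner) route than the paper's. The paper proves the lemma by first establishing the pointwise inequality
\[
1 + \frac{1}{\Delta_\pm^2} \;\le\; \frac{1}{2\Delta_\pm}\left(4 + 2\left(\frac{1}{\Delta_\pm}-\Delta_\pm\right)\right),
\]
then multiplying by $n_\pm$ and summing; it subsequently invokes $\Delta_-\le\Delta_+$ (so that $\tfrac{1}{\Delta_-}-\Delta_-\ge \tfrac{1}{\Delta_+}-\Delta_+$) to factor out a common multiplier $4+2\bigl(\tfrac{1}{\Delta_-}-\Delta_-\bigr)$, and finally bounds that multiplier by $\sqrt{C}n/\tilde C$ using $\tfrac{1}{n}\le\Delta_-\le 1$, requiring $C/\tilde C^2\ge 36$. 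Your argument instead bounds each of the three summands $n$, $n_+/\Delta_+^2$, $n_-/\Delta_-^2$ directly against $\tfrac{n_+}{\Delta_+}+\tfrac{n_-}{\Delta_-}$ via the crude range facts $\tfrac{1}{\Delta_\pm}\le n$ and $\tfrac{n_+}{\Delta_+}+\tfrac{n_-}{\Delta_-}\ge n\ge 1$, landing on $C\ge 16\tilde C^2$. This is more elementary, yields a slightly smaller required constant (16 vs.\ 36), and notably never uses the ordering assumption $\Delta_+\ge\Delta_-$ at all, so your version of the lemma actually holds without it. Both arguments are compatible with the stated parameter regime $C>144$, $\tilde C>2$.

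One tiny stylistic remark: in the constant-term step you bound $\tfrac{2\tilde C}{\sqrt C}$ by $\tfrac{2\tilde C}{\sqrt C}\bigl(\tfrac{n_+}{\Delta_+}+\tfrac{n_-}{\Delta_-}\bigr)$ via the weak fact $\tfrac{n_+}{\Delta_+}+\tfrac{n_-}{\Delta_-}\ge 1$; using the stronger $\tfrac{n_+}{\Delta_+}+\tfrac{n_-}{\Delta_-}\ge n$ would let you keep the extra factor of $1/n$ there and shave the constant further, though it is not needed for the stated result.
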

\noindent Similar to the proof of Lemma~\ref{lem:primitive_bounds}, we prove by induction.
    
\noindent \textbf{Bounds for base case $k=0$:}\\
    $\bullet \; \sidiz = \bv_i^\top\bDelta^{-1}\XXztaui\bv_i = \bv_i^\top\bDelta^{-1}\QQtaui\bv_i,$ for $i=1,2$. Applying the parallelogram law gives
    \begin{align*}
        &\sidiz\\ 
        &= \frac{1}{4}\pts{\pts{\bDelta^{-1}\bv_i+\bv_i}^\top\QQtaui\pts{\bDelta^{-1}\bv_i+\bv_i} -\pts{\bDelta^{-1}\bv_i-\bv_i}^\top\QQtaui\pts{\bDelta^{-1}\bv_i-\bv_i}}.
    \end{align*}
    Next, we have
    \begin{align*}
        \nnorm[2]{\bDelta^{-1}\bv_i+\bv_i}^2 &= \pts{1+\frac{1}{\Delta_+}}^2n_+ + \pts{1+\frac{1}{\Delta_-}}^2n_- = n + \frac{2n_+}{\Delta_+} + \frac{n_+}{\Delta_+^2} + \frac{2n_-}{\Delta_-} + \frac{n_-}{\Delta_-^2},\\
        \nnorm[2]{\bDelta^{-1}\bv_i-\bv_i}^2 &= \pts{1-\frac{1}{\Delta_+}}^2n_+ + \pts{1-\frac{1}{\Delta_-}}^2n_- = n - \frac{2n_+}{\Delta_+} + \frac{n_+}{\Delta_+^2} - \frac{2n_-}{\Delta_-} + \frac{n_-}{\Delta_-^2}.
    \end{align*}
    Applying Corollary~\ref{cor:wishart_bound_ridge} with $t=2\log\pts{n/\delta}$, we have, with probability at least $1-\frac{\delta^2}{n^2}$,
    \begin{align*}
        \sidiz &\leq\frac{1}{4}\pts{\frac{n + \frac{2n_+}{\Delta_+} + \frac{n_+}{\Delta_+^2} + \frac{2n_-}{\Delta_-} + \frac{n_-}{\Delta_-^2}}{d'(n) - \sqrt{4\log\pts{n/\delta} \cdot d'(n)}+\tau} - \frac{n - \frac{2n_+}{\Delta_+} + \frac{n_+}{\Delta_+^2} - \frac{2n_-}{\Delta_-} + \frac{n_-}{\Delta_-^2}}{d'(n) + \sqrt{4\log\pts{n/\delta} \cdot d'(n)} + 4\log\pts{n/\delta}+\tau}}\\
        &\leq \frac{1}{4}\pts{\frac{n + \frac{2n_+}{\Delta_+} + \frac{n_+}{\Delta_+^2} + \frac{2n_-}{\Delta_-} + \frac{n_-}{\Delta_-^2}}{d'(n)\pts{1-\frac{2}{\sqrt{C}n}}+\tau} - \frac{n - \frac{2n_+}{\Delta_+} + \frac{n_+}{\Delta_+^2} - \frac{2n_-}{\Delta_-} + \frac{n_-}{\Delta_-^2}}{d'(n)\pts{1+\frac{2}{\sqrt{C}n}+\frac{4}{Cn^2}}+\tau}}\\
        &\leq \frac{\pts{1+\frac{2}{Cn^2}}\pts{\frac{n_+}{\Delta_+}+\frac{n_-}{\Delta_-}} + \pts{\frac{1}{\sqrt{C}n} +\frac{1}{Cn^2}}\pts{n+\frac{n_+}{\Delta_+^2}+\frac{n_-}{\Delta_-^2}}}{d'(n)\pts{1-\frac{2}{\sqrt{C}n}}\pts{1+\frac{2}{\sqrt{C}n}+\frac{4}{Cn^2}} +\tau}\\
        &\leq \frac{\pts{1+\frac{2}{Cn^2}}\pts{\frac{n_+}{\Delta_+}+\frac{n_-}{\Delta_-}}+ \frac{2}{\sqrt{C}n}\pts{n+\frac{n_+}{\Delta_+^2}+\frac{n_-}{\Delta_-^2}}}{d +\tau}\\
        &\leq \frac{\pts{1+\frac{2}{Cn^2} +\frac{1}{\tilde{C}}}\pts{\frac{n_+}{\Delta_+}+\frac{n_-}{\Delta_-}}}{d +\tau} =\frac{\pts{\frac{C_1+1}{C_1} + \frac{1}{\tilde{C}}}\pts{\frac{n_+}{\Delta_+}+\frac{n_-}{\Delta_-}}}{d +\tau},
    \end{align*}
    where the second inequality follows by Assumption~\ref{asm:dataset_gen}(D) such that $d'(n) > Cn^2\log(n/\delta)$, and the last inequality follows by Lemma~\ref{lem:aux_ineq}. Next, we have the lower bound:
    \begin{align*}
        \sidiz &\geq \frac{1}{4}\pts{\frac{n + \frac{2n_+}{\Delta_+} + \frac{n_+}{\Delta_+^2} + \frac{2n_-}{\Delta_-} + \frac{n_-}{\Delta_-^2}}{d'(n) + \sqrt{4\log\pts{n/\delta} \cdot d'(n)} + 4\log\pts{n/\delta} +\tau} - \frac{n - \frac{2n_+}{\Delta_+} + \frac{n_+}{\Delta_+^2} - \frac{2n_-}{\Delta_-} + \frac{n_-}{\Delta_-^2}}{d'(n) - \sqrt{4\log\pts{n/\delta} \cdot d'(n)} +\tau}}\\
        &\geq \frac{1}{4}\pts{\frac{n + \frac{2n_+}{\Delta_+} + \frac{n_+}{\Delta_+^2} + \frac{2n_-}{\Delta_-} + \frac{n_-}{\Delta_-^2}}{d'(n)\pts{1+\frac{2}{\sqrt{C}n}+\frac{4}{Cn^2}} +\tau} - \frac{n - \frac{2n_+}{\Delta_+} + \frac{n_+}{\Delta_+^2} - \frac{2n_-}{\Delta_-} + \frac{n_-}{\Delta_-^2}}{d'(n)\pts{1-\frac{2}{\sqrt{C}n}} +\tau}}\\
        &\geq \frac{\pts{1+\frac{2}{Cn^2}}\pts{\frac{n_+}{\Delta_+}+\frac{n_-}{\Delta_-}} - \pts{\frac{1}{\sqrt{C}n} +\frac{1}{Cn^2}}\pts{n+\frac{n_+}{\Delta_+^2}+\frac{n_-}{\Delta_-^2}}}{d'(n)\pts{1-\frac{2}{\sqrt{C}n}}\pts{1+\frac{2}{\sqrt{C}n}+\frac{4}{Cn^2}} +\tau}\\
        &\geq \frac{\pts{1+\frac{2}{Cn^2}}\pts{\frac{n_+}{\Delta_+}+\frac{n_-}{\Delta_-}} - \frac{2}{\sqrt{C}n}\pts{n+\frac{n_+}{\Delta_+^2}+\frac{n_-}{\Delta_-^2}}}{d +\tau}\\
        &\geq \frac{\pts{1+\frac{2}{Cn^2}-\frac{1}{\tilde{C}}}\pts{\frac{n_+}{\Delta_+}+\frac{n_-}{\Delta_-}}}{d +\tau} = \frac{\pts{\frac{C_1-1}{C_1} - \frac{1}{\tilde{C}}}\pts{\frac{n_+}{\Delta_+}+\frac{n_-}{\Delta_-}}}{d +\tau},
    \end{align*}
    where the second inequality follows by Assumption~\ref{asm:dataset_gen}(D) such that $d'(n) > Cn^2\log(n/\delta)$, and the fourth inequality follows by Lemma~\ref{lem:aux_ineq}.\\
    $\bullet \; \stdoz  = \bv_2^\top\bDelta^{-1}\XXztaui\bv_1 = \y^\top\bDelta^{-1}\QQtaui\ba.$ Applying the parallelogram law gives
        \begin{align*}
            \stdoz = \frac{1}{4}\pts{\pts{\bDelta^{-1}\y+\ba}^\top\QQtaui\pts{\bDelta^{-1}\y+\ba} -\pts{\bDelta^{-1}\y-\ba}^\top\QQtaui\pts{\bDelta^{-1}\y-\ba}}.
        \end{align*}
        Next, we have
        \begin{align*}
            \nnorm[2]{\bDelta^{-1}\y+\ba}^2 &= \pts{1+\frac{1}{\Delta_+}}^2n_+ + \pts{1-\frac{1}{\Delta_-}}^2n_- = n + \frac{2n_+}{\Delta_+} + \frac{n_+}{\Delta_+^2} - \frac{2n_-}{\Delta_-} + \frac{n_-}{\Delta_-^2}.\\
            \nnorm[2]{\bDelta^{-1}\y-\ba}^2 &= \pts{1-\frac{1}{\Delta_+}}^2n_+ + \pts{1+\frac{1}{\Delta_-}}^2n_- = n - \frac{2n_+}{\Delta_+} + \frac{n_+}{\Delta_+^2} + \frac{2n_-}{\Delta_-} + \frac{n_-}{\Delta_-^2}.
        \end{align*}
        Therefore, we get
        \begin{align*}
            \stdoz &\leq \frac{1}{4}\pts{\frac{n + \frac{2n_+}{\Delta_+} + \frac{n_+}{\Delta_+^2} - \frac{2n_-}{\Delta_-} + \frac{n_-}{\Delta_-^2}}{d'(n) - \sqrt{4\log\pts{n/\delta} \cdot d'(n)} +\tau} - \frac{n - \frac{2n_+}{\Delta_+} + \frac{n_+}{\Delta_+^2} + \frac{2n_-}{\Delta_-} + \frac{n_-}{\Delta_-^2}}{d'(n) + \sqrt{4\log\pts{n/\delta} \cdot d'(n)} + 4\log\pts{n/\delta} +\tau}}\\
            &\leq \frac{1}{4}\pts{\frac{n + \frac{2n_+}{\Delta_+} + \frac{n_+}{\Delta_+^2} - \frac{2n_-}{\Delta_-} + \frac{n_-}{\Delta_-^2}}{d'(n)\pts{1-\frac{2}{\sqrt{C}n}} +\tau} - \frac{n - \frac{2n_+}{\Delta_+} + \frac{n_+}{\Delta_+^2} + \frac{2n_-}{\Delta_-} + \frac{n_-}{\Delta_-^2}}{d'(n)\pts{1+\frac{2}{\sqrt{C}n}+\frac{4}{Cn^2}} +\tau}}\\
            &\leq \frac{\pts{1+\frac{2}{Cn^2}}\pts{\frac{n_+}{\Delta_+}-\frac{n_-}{\Delta_-}}  + \pts{\frac{1}{\sqrt{C}n} +\frac{1}{Cn^2}}\pts{n + \frac{n_+}{\Delta_+^2}+\frac{n_-}{\Delta_-^2}}}{d'(n)\pts{1-\frac{2}{\sqrt{C}n}}\pts{1+\frac{2}{\sqrt{C}n}+\frac{4}{Cn^2}} +\tau}\\
            &\leq \frac{\pts{1+\frac{2}{Cn^2}}\pts{\frac{n_+}{\Delta_+}-\frac{n_-}{\Delta_-}} + \frac{2}{\sqrt{C}n}\pts{n + \frac{n_+}{\Delta_+^2}+\frac{n_-}{\Delta_-^2}}}{d +\tau}\\
            &\leq \frac{\pts{1+\frac{2}{Cn^2}}\pts{\frac{n_+}{\Delta_+}-\frac{n_-}{\Delta_-}} + \frac{1}{\tilde{C}}\pts{\frac{n_+}{\Delta_+}+\frac{n_-}{\Delta_-}}}{d +\tau} = \frac{\pts{\frac{C_1+1}{C_1}}\pts{\frac{n_+}{\Delta_+}-\frac{n_-}{\Delta_-}} + \frac{1}{\tilde{C}}\pts{\frac{n_+}{\Delta_+}+\frac{n_-}{\Delta_-}}}{d +\tau}.
        \end{align*}
        \begin{align*}
            \stdoz &\geq \frac{1}{4}\pts{\frac{n + \frac{2n_+}{\Delta_+} + \frac{n_+}{\Delta_+^2} - \frac{2n_-}{\Delta_-} + \frac{n_-}{\Delta_-^2}}{d'(n) + \sqrt{4\log\pts{n/\delta} \cdot d'(n)} + 4\log\pts{n/\delta} +\tau} - \frac{n - \frac{2n_+}{\Delta_+} + \frac{n_+}{\Delta_+^2} + \frac{2n_-}{\Delta_-} + \frac{n_-}{\Delta_-^2}}{d'(n) - \sqrt{4\log\pts{n/\delta} \cdot d'(n)} +\tau}}\\
            &\geq \frac{1}{4}\pts{\frac{n + \frac{2n_+}{\Delta_+} + \frac{n_+}{\Delta_+^2} - \frac{2n_-}{\Delta_-} + \frac{n_-}{\Delta_-^2}}{d'(n)\pts{1+\frac{2}{\sqrt{C}n}+\frac{4}{Cn^2}} +\tau} - \frac{n - \frac{2n_+}{\Delta_+} + \frac{n_+}{\Delta_+^2} + \frac{2n_-}{\Delta_-} + \frac{n_-}{\Delta_-^2}}{d'(n)\pts{1-\frac{2}{\sqrt{C}n}} +\tau}}\\
            &\geq \frac{\pts{1+\frac{2}{Cn^2}}\pts{\frac{n_+}{\Delta_+}-\frac{n_-}{\Delta_-}} - \pts{\frac{1}{\sqrt{C}n} +\frac{1}{Cn^2}}\pts{n+\frac{n_+}{\Delta_+^2}+\frac{n_-}{\Delta_-^2}}}{d'(n)\pts{1-\frac{2}{\sqrt{C}n}}\pts{1+\frac{2}{\sqrt{C}n}+\frac{4}{Cn^2}} +\tau}\\
            &\geq \frac{\pts{1+\frac{2}{Cn^2}}\pts{\frac{n_+}{\Delta_+}-\frac{n_-}{\Delta_-}} - \frac{2}{\sqrt{C}n}\pts{n+\frac{n_+}{\Delta_+^2}+\frac{n_-}{\Delta_-^2}}}{d +\tau}\\
            &\geq \frac{\pts{1+\frac{2}{Cn^2}}\pts{\frac{n_+}{\Delta_+}-\frac{n_-}{\Delta_-}} - \frac{1}{\tilde{C}}\pts{\frac{n_+}{\Delta_+}+\frac{n_-}{\Delta_-}}}{d +\tau} = \frac{\pts{\frac{C_1+1}{C_1}}\pts{\frac{n_+}{\Delta_+}-\frac{n_-}{\Delta_-}} - \frac{1}{\tilde{C}}\pts{\frac{n_+}{\Delta_+}+\frac{n_-}{\Delta_-}}}{d +\tau}.
        \end{align*}
    $\bullet \; \sididz$ for $i=1,2$,
        \begin{gather*}
            \sididz = \bv_i^\top\bDelta^{-1}\QQtaui\bDelta^{-1}\bv_i.
        \end{gather*}
       Based on Corollary~\ref{cor:wishart_bound_ridge}, let $t=\sqrt{2\log\pts{n/\delta}}$. The variational characterization of eigenvalues gives us, with probability at least $1-\frac{\delta^2}{n^2}$,
        \begin{align*}
            \sididz &\leq \frac{\ndvi^2}{d'(n) - \sqrt{4\log\pts{n/\delta} \cdot d'(n)}+\tau} \leq \pts{\frac{C_1+1}{C_1}}\frac{\frac{n_+}{\Delta_+^2} + \frac{n_-}{\Delta_-^2}}{d+\tau}\\
            \sididz &\geq \frac{\ndvi^2}{d'(n) + \sqrt{4\log\pts{n/\delta} \cdot d'(n)} + 4\log\pts{n/\delta} + \tau} \geq \pts{\frac{C_1-1}{C_1}}\frac{\frac{n_+}{\Delta_+^2} + \frac{n_-}{\Delta_-^2}}{d+\tau}.
        \end{align*}
    $\bullet \; \hijdz = \bd_i^\top\XXztaui\bDelta^{-1}\bv_j = \bd_i^\top\QQtaui\bDelta^{-1}\bv_j$ for $i,j=1,2$. Hence,
        \begin{align*}
            \hijdz &\leq \ndi\ndvj\nnorm[2]{\QQtaui} \leq \frac{C\sqrt{n\pts{\frac{n_+}{\Delta_+^2} + \frac{n_-}{\Delta_-^2}}}\nui}{d'(n) - \sqrt{4\log\pts{n/\delta} \cdot d'(n)} +\tau} \leq\frac{C_1\sqrt{n\nd}\nui}{d +\tau}.\\
            \hijdz &\geq \frac{-C_1\sqrt{n\nd}\nui}{d +\tau}.
        \end{align*}
    $\bullet \; \oididk=\bv_i^\top\bDelta^{-1}\XXktaui\XXk\XXktaui\bDelta^{-1}\bv_i$, for $i=1,2$. We start with the upper bound
        \begin{align}
            \oididk &=\bv_i^\top\bDelta^{-1}\XXktaui\XXk\XXktaui\bDelta^{-1}\bv_i\nonumber\\ 
            &\leq \lbf[1]{\XXk}\nnorm[2]{\XXktaui\bDelta^{-1}\bv_i}^2\nonumber\\
            &\leq \lbf[1]{\XXk}\nnorm[2]{\XXktaui}^2\nnorm[2]{\bDelta^{-1}\bv_i}^2\label{eq:primitive_o_upper},
        \end{align}
    where the first inequality follows by the variational characterization of eigenvalues, and the second inequality applies the sub-multiplicative property of the matrix norm. Next, we focus on analyzing the eigenvalues of $\XXk$ and $\XXk+\tau\I$. We have
        \begin{align*}
            \eqref{eq:primitive_o_upper} &= \lbf[1]\XXk\lbf[1]{\XXktaui}^2\nnorm[2]{\bDelta^{-1}\bv_i}^2\\
            &= \frac{\lbf[1]{\XXktaui}^2}{\lbf[n]{\XXki}}\nnorm[2]{\bDelta^{-1}\bv_i}^2 \leq \frac{\pts{\hc_1/\pts{d+\tau}}^2}{\dc_2/d}\pts{\frac{n_+}{\Delta_+^2} + \frac{n_-}{\Delta_-^2}} = \frac{\hc_3 \nd d}{\pts{d+\tau}^2},
        \end{align*}
    where the inequality follows by applying the primitive bounds of $\suuk$ in Lemma~\ref{lem:primitive_bounds}. Similarly, for the lower bound we have
        \begin{align}
            \oididk &=\bv_i^\top\bDelta^{-1}\XXktaui\XXk\XXktaui\bDelta^{-1}\bv_i\nonumber\\ 
            &\geq \lbf[n]{\XXktaui\XXk\XXktaui}\nnorm[2]{\bDelta^{-1}\bv_i}^2\nonumber\\
            &= \frac{\nnorm[2]{\bDelta^{-1}\bv_i}^2}{\lbf[1]{\pts{\XXk+\tau\I}\XXki\pts{\XXk+\tau\I}}}\nonumber\\
            &\geq \frac{\nnorm[2]{\bDelta^{-1}\bv_i}^2}{\lbf[1]{\XXki}\lbf[1]{\XXk+\tau\I}^2}\label{eq:primitive_o_lower}.
        \end{align}
    Next, we have
         \begin{align*}
            \eqref{eq:primitive_o_lower} &= \frac{\lbf[n]{\XXktaui}^2}{\lbf[1]{\XXki}}\nnorm[2]{\bDelta^{-1}\bv_i}^2 \geq \frac{\pts{\dc_1/\pts{d+\tau}}^2}{\hc_2/d}\pts{\frac{n_+}{\Delta_+^2} + \frac{n_-}{\Delta_-^2}} = \frac{\dc_3 \nd d}{\pts{d+\tau}^2}.
        \end{align*}
    
    Finally, we show the inductive step from $k \to k+1$. The calculation is identical to the one that we did for the proof of Lemma~\ref{lem:primitive_bounds}, since we reuse the $f_{\A_{K+1}}$ function in Lemma~\ref{lem:aux_f_func}, and only the input variables change. (The full proof of the inductive step is provided in Appendix~\ref{app:primitive_recursive_proof} for completeness.)
    This completes the proof of the lemma.
\end{proof}

\section{Proofs of auxiliary lemmas (Lemmas~\ref{lem:aux_compare_a_h},~\ref{lem:aux_f_func},~\ref{lem:aux_det_bound} and~\ref{lem:aux_ineq})} \label{app:aux_lem_proof}
In this section, we provide the proofs of auxiliary technical lemmas used in Appendix~\ref{app:primitive_proof}.
\subsection{Proof of Lemma~\ref{lem:aux_compare_a_h}}

We have
    \begin{align*}
        \frac{1}{2}\pts{\ndpnd}\nuc &\geq \frac{1}{2}\sqrt{\frac{n_+^2}{\Delta_+^2} + \frac{n_-^2}{\Delta_-^2}}\nuc\\
        &= \sqrt{\frac{n_+}{\Delta_+^2}\pts{\frac{n_+\nuc^2}{4}} + \frac{n_-}{\Delta_-^2}\pts{\frac{n_-\nuc^2}{4}}}\\
        &\geq \sqrt{\pts{\frac{n_-\nuc^2}{4}}\pts{\frac{n_+}{\Delta_+^2} + \frac{n_-}{\Delta_-^2}}}\\
        &\geq \sqrt{\pts{\frac{n_-Cn\log(n/\delta)}{4}}\pts{\frac{n_+}{\Delta_+^2} + \frac{n_-}{\Delta_-^2}}}\\
        &= C_1\sqrt{n\pts{\frac{n_+}{\Delta_+^2} + \frac{n_-}{\Delta_-^2}}} = C_1\sqrt{n\nd},
    \end{align*}
    where the second inequality follows by $\Delta_- \leq \Delta_+$, the third inequality follows via the assumptions made in the lemma, and the last equality follows by the choice $C_1=\sqrt{\frac{Cn_-\log(n/\delta)}{4}} > 1$. On the other hand, if $n_- = \alpha n$, we can pick $C_1=\sqrt{\alpha} \frac{\nuc}{2}$.
    This completes the proof of the lemma.
    
\qed

\subsection{Proof of Lemma~\ref{lem:aux_f_func}}
    According to the definition of $f_{\A_k}$, we can compute the function value based on the definition of adjugate matrix of $\A_k$, by a series of algebraic steps. To upper bound $|f_{\A_k}|$, we apply the bounds on the $(k-1)$-th order primitives to get
    \begin{align*}
        \normo{\ffs[\A_{k}]{\begin{matrix}
            x_a, x_b,\\ x_c, x_d
        \end{matrix}}} &= \normo{\pts{\nuk^2-\tkkkmo}x_a x_c + \pts{1+\hkkkmo}\pts{x_a x_d + x_b x_c} -\skkkmo x_b x_d}\\
        &\leq \normo{\nuk^2-\tkkkmo}\normo{x_a} \normo{x_c} + \normo{1+\hkkkmo}\pts{\normo{x_a} \normo{x_d} + \normo{x_b} \normo{x_c}} +\normo{\skkkmo} \normo{x_b} \normo{x_d}\\
        &\leq \nuk^2\normo{x_a} \normo{x_c}  +\normo{\skkkmo} \normo{x_b} \normo{x_d} + \pts{1+\frac{C_1 n\nuk}{d+\tau}}\pts{\normo{x_a} \normo{x_d} + \normo{x_b} \normo{x_c}}\\
        &\leq \nuk^2\normo{x_a} \normo{x_c} + \pts{1+\frac{1}{C_2}}\pts{\normo{x_a} \normo{x_d} + \normo{x_b} \normo{x_c}} +\normo{\skkkmo} \normo{x_b} \normo{x_d},
    \end{align*}
    where we introduce the primitive bounds in the second inequality, and apply Assumption~\ref{asm:dataset_gen}(C) in the last inequality.
\qed
\subsection{Proof of Lemma~\ref{lem:aux_det_bound}}
    According to the definition of $\det(\A_k)$ in Eq.~\eqref{eq:det_def}, we have
    \begin{align*}
        \det(\A_k) = \skkkmo\pts{\nuk^2-\tkkkmo}+\pts{\hkkkmo+1}^2.
    \end{align*}
    Thus, we introduce the upper bound on the $(k-1)$th order primitive to yield
    \begin{align*}
        \det(\A_k) &\leq \pts{\frac{C_1+1}{C_1}}\frac{n}{d+\tau}\nuk^2 + \pts{\frac{C_2 n\nuk}{d+\tau} + 1}^2\\
        &\leq \pts{\frac{C_1+1}{C_1}}\frac{n}{d+\tau}\nuk^2 + \pts{\frac{1}{C_3} + 1}^2\\
        &\leq \pts{\frac{C_4+1}{C_4}}\pts{\frac{n}{d+\tau}\nuk^2 + \frac{C_4+1}{C_4}} \leq \frac{C_5+1}{C_5},
    \end{align*}
    where the second and third inequality follow by applying Assumption~\ref{asm:dataset_gen}(C), i.e.~$\frac{R_+n}{d} \leq \frac{1}{C}$. Similarly, 
    \begin{align*}
        \det(\A_k) &\geq \pts{\frac{C_1-1}{C_1}}^2\frac{n}{d+\tau} \nuk^2 + \pts{\frac{-C_2 n\nuk}{d+\tau} + 1}^2 \geq \pts{\frac{C_3-1}{C_3}}.
    \end{align*}
    This completes the proof of the lemma.
\qed
\subsection{Proof of Lemma~\ref{lem:aux_ineq}}
    Since $\Delta_{\pm} \leq 1$, we have
    \begin{align*}
        \frac{1+\frac{1}{\Delta_\pm^2}}{2} &\leq \frac{1}{\Delta_\pm} + \frac{1}{2}\pts{\frac{1}{\Delta_\pm^2}-1}\\
        \iff 1 + \frac{1}{\Delta_\pm^2} &\leq \frac{1}{2\Delta_\pm}\pts{4 + 2\pts{\frac{1}{\Delta_\pm}-\Delta_\pm}}\\
        \iff n_\pm + \frac{n_\pm}{\Delta_\pm^2} &\leq \frac{n_\pm}{2\Delta_\pm}\pts{4 + 2\pts{\frac{1}{\Delta_\pm}-\Delta_\pm}}.
    \end{align*}
    Then, we have
    \begin{align*}
        n_+ + \frac{n_+}{\Delta_+^2} + n_- + \frac{n_-}{\Delta_-^2} &= n + \frac{n_+}{\Delta_+^2} + \frac{n_-}{\Delta_-^2}\\
        &\leq \frac{n_+}{2\Delta_+}\pts{4 + 2\pts{\frac{1}{\Delta_+}-\Delta_+}} + \frac{n_-}{2\Delta_-}\pts{4 + 2\pts{\frac{1}{\Delta_-}-\Delta_-}}\\
        &\leq \pts{\frac{n_+}{2\Delta_+} + \frac{n_-}{2\Delta_-}} \pts{4 + 2\pts{\frac{1}{\Delta_-}-\Delta_-}},
    \end{align*}
    where the last inequality comes from $\Delta_- \leq \Delta_+$. Hence, it is sufficient to show $\frac{\sqrt{C}n}{\tilde{C}} \geq \pts{4 + 2\pts{\frac{1}{\Delta_-}-\Delta_-}}$. To this end, we have
    \begin{align*}
        \pts{4 + 2\pts{\frac{1}{\Delta_-}-\Delta_-}}^2 &\leq \pts{4 + 2\pts{\frac{1}{\Delta_-}}}^2 = 16 + \frac{4}{\Delta_-^2} + \frac{16}{\Delta_-} \leq 16 + 4n^2 + 16n \leq \frac{Cn^2}{\tilde{C}^2}
    \end{align*}
    for some $\frac{C}{\tilde{C}^2} \geq 36$ since $\frac{1}{n}\leq \Delta_- \leq 1$.
    This completes the proof of the lemma.
\qed
\section{Omitted calculations in proofs of Lemmas~\ref{lem:primitive_bounds} and~\ref{lem:w_primitive_bounds}} \label{app:primitive_recursive_proof}
%
In this section, we provide the detailed calculations of the inductive step, i.e.~going from $k \to k+1$, in the proofs of Lemmas~\ref{lem:primitive_bounds} and~\ref{lem:w_primitive_bounds}.
\subsection{Inductive step for Lemma~\ref{lem:primitive_bounds} (without adjustment weight)}
    $\bullet \; \siikpo$ for $i = 1,2 $: We have
    \begin{align*}
        &\siikpo = \bv_i^\top\XXkpotaui\bv_i\\
        &=\bv_i^\top\XXktaui\bv_i - \bv_i^\top\XXktaui\bL_{k+1}\A_{k+1}^{-1}\bR_{k+1}\XXktaui\bv_i\\
        &= \siik - \mts{\nukpo\sikpok,~\hkpoik,~\sikpok} \times\frac{\text{adj}(\A_{k+1})}{\det(\A_{k+1})}\begin{bmatrix}
        \nukpo\sikpok \\
        \sikpok \\
        \hkpoik
        \end{bmatrix}\\
        &= \frac{\siik \det(\A_{k+1}) - \ffs[\A_{k+1}]{\sikpok, \hkpoik, \sikpok, \hkpoik}}{\det(\A_{k+1})}.
    \end{align*}
    Thus, we can apply Lemma~\ref{lem:aux_f_func} to get 
    \begin{align*}
        \siikpo &\leq \frac{\normo{\siik \det(\A_{k+1})} + \normo{\ffs[\A_{k+1}]{\sikpok, \hkpoik, \sikpok, \hkpoik}}}{\det(\A_{k+1})}\\
        &\leq \frac{1}{\det(\A_{k+1})}\Bigg(\det(\A_{k+1})\normo{\siik} + \nukpo^2\normo{\sikpok}\normo{\sikpok}\\
        &+ \pts{1+\frac{1}{C_1}}\pts{\normo{\sikpok}\normo{\hkpoik} + \normo{\hkpoik}\normo{\sikpok}} + \normo{\skpokpok}\normo{\hkpoik}\normo{\hkpoik}\Bigg)\\
        &\leq \frac{1}{\det(\A_{k+1})} \pts{\frac{C_2+1}{C_2}\frac{n}{d+\tau} + \frac{C_3+1}{C_3}\frac{\nukpo^2n^2}{\pts{d+\tau}^2} + \frac{C_4+1}{C_4}\frac{C_5\nukpo n^2}{\pts{d+\tau}^2} + \frac{C_6\nukpo^2n^3}{\pts{d+\tau}^3}}\\
        & \leq \frac{C_7+1}{C_7}\frac{n}{d+\tau},
    \end{align*}
    where in the third inequality we apply the lower bound on $\det(\A_{k+1})$ (Lemma~\ref{lem:aux_det_bound}) and the $k$-th order primitive bounds, and in the last inequality, we apply Assumption~\ref{asm:dataset_gen}(C), i.e.~$d \geq CR_+n$.  Similarly, 
    \begin{align*}
        \siikpo &\geq \frac{\siik \det(\A_{k+1}) - \normo{\ffs[\A_{k+1}]{\sikpok, \hkpoik, \sikpok, \hkpoik}}}{\det(\A_{k+1})}\\
        &\geq \frac{1}{\det(\A_{k+1})}\Bigg(\det(\A_{k+1})\siik- \Bigg(\nukpo^2\normo{\sikpok}\normo{\sikpok}\\
        &+ \pts{1+\frac{1}{C_1}}\pts{\normo{\sikpok}\normo{\hkpoik} + \normo{\hkpoik}\normo{\sikpok}} + \normo{\skpokpok}\normo{\hkpoik}\normo{\hkpoik}\Bigg)\Bigg)\\
        &\geq \frac{1}{\det(\A_{k+1})} \pts{\frac{C_2-1}{C_2}\frac{n}{d+\tau}-\frac{C_3+1}{C_3}\frac{\nukpo^2n^2}{\pts{d+\tau}^2} - \frac{C_4+1}{C_4}\frac{C_5\nukpo n^2}{\pts{d+\tau}^2} - \frac{C_6\nukpo^2n^3}{\pts{d+\tau}^3} }\\
        &\geq \frac{C_7-1}{C_7}\frac{n}{d+\tau}.
    \end{align*}
    $\bullet \; \sotkpo$: We have
    \begin{align*}
        \sotkpo &= \bv_1^\top\XXkpotaui\bv_2\\
        &=\bv_1^\top\XXktaui\bv_2 - \bv_1^\top\XXktaui\bL_{k+1}\A_{k+1}^{-1}\bR_{k+1}\XXktaui\bv_2\\
        &= \sotk - \mts{\nukpo\sokpok,~\hkpook,~\sokpok} \times\frac{\text{adj}(\A_{k+1})}{\det(\A_{k+1})}\begin{bmatrix}
        \nukpo\stkpok \\
        \stkpok \\
        \hkpotk
        \end{bmatrix}\\
        &= \frac{\sotk \det(\A_{k+1}) - \ffs[\A_{k+1}]{\sokpok, \hkpook, \stkpok, \hkpotk}}{\det(\A_{k+1})}.
    \end{align*}
    Thus, we can apply Lemma~\ref{lem:aux_f_func} to get 
    \begin{align*}
        \sotkpo &\leq \frac{\normo{\sotk \det(\A_{k+1})} + \normo{\ffs[\A_{k+1}]{\sokpok, \hkpook, \stkpok, \hkpotk}}}{\det(\A_{k+1})}\\
        &\leq \frac{1}{\det(\A_{k+1})}\Bigg(\det(\A_{k+1})\normo{\sotk} + \nukpo^2\normo{\sokpok}\normo{\stkpok} + \normo{\skpokpok}\normo{\hkpook}\normo{\hkpotk}\\
        &\myquad[2] + \pts{1+\frac{1}{C_1}}\pts{\normo{\sokpok}\normo{\hkpotk} + \normo{\hkpook}\normo{\stkpok}}\Bigg)\\
        & \leq \frac{1}{\det(\A_{k+1})} \Bigg(\frac{C_2+1}{C_2}\pts{\frac{C_3+1}{C_3}\frac{n_+}{d+\tau}-\frac{C_3-1}{C_3}\frac{n_-}{d+\tau}}\\
        & + \frac{C_4+1}{C_4}\pts{\frac{C_5+1}{C_5}\frac{n_+}{d+\tau}-\frac{C_5-1}{C_5}\frac{n_-}{d+\tau}}\frac{\nukpo^2 n}{d+\tau} + \frac{C_6\nukpo^2n^3}{\pts{d+\tau}^3}\\
        &+ \frac{C_7+1}{C_7}\Bigg(\pts{\frac{C_8+1}{C_8}\frac{n_+}{d+\tau}-\frac{C_8-1}{C_8}\frac{n_-}{d+\tau}}\frac{C_9\nukpo n}{d+\tau} + \frac{C_{10}\nukpo n^2}{\pts{d+\tau}^2}\Bigg)\Bigg)\\
        & \leq \frac{C_{11}+1}{C_{11}}\frac{n_+}{d+\tau}-\frac{C_{11}-1}{C_{11}}\frac{n_-}{d+\tau},
    \end{align*}
    where in the third inequality we apply the lower bound on $\det(\A_{k+1})$ (Lemma~\ref{lem:aux_det_bound}) and the $k$-th order primitive bounds, and in the last inequality, we apply Assumption~\ref{asm:dataset_gen}(C), i.e.~$d \geq CR_+n$.  Similarly,
    \begin{align*}
        \sotkpo &\geq \frac{\sotk \det(\A_{k+1}) - \normo{\ffs[\A_{k+1}]{\sokpok, \hkpook, \stkpok, \hkpotk}}}{\det(\A_{k+1})}\\
        &\geq \frac{1}{\det(\A_{k+1})}\Bigg(\det(\A_{k+1})\sotk -\Bigg(\nukpo^2\normo{\sokpok}\normo{\stkpok} + \normo{\skpokpok}\normo{\hkpook}\normo{\hkpotk}\\
        &\myquad[2]+ \pts{1+\frac{1}{C_1}}\pts{\normo{\sokpok}\normo{\hkpotk} + \normo{\hkpook}\normo{\stkpok}}\Bigg)\Bigg)\\
        & \geq \frac{1}{\det(\A_{k+1})} \Bigg(\frac{C_2-1}{C_2}\pts{\frac{C_3-1}{C_3}\frac{n_+}{d+\tau}-\frac{C_3+1}{C_3}\frac{n_-}{d+\tau}}\\
        & - \frac{C_4+1}{C_4}\pts{\frac{C_5+1}{C_5}\frac{n_+}{d+\tau}-\frac{C_5-1}{C_5}\frac{n_-}{d+\tau}}\frac{\nukpo^2 n}{d+\tau}- \frac{C_6\nukpo^2n^3}{\pts{d+\tau}^3}\\
        & - \frac{C_7+1}{C_7}\Bigg(\pts{\frac{C_8+1}{C_8}\frac{n_+}{d+\tau}-\frac{C_8-1}{C_8}\frac{n_-}{d+\tau}}\frac{C_9\nukpo n}{d+\tau} + \frac{C_{10}\nukpo n^2}{\pts{d+\tau}^2}\Bigg)\Bigg)\\
        & \geq \frac{C_{11}-1}{C_{11}}\frac{n_+}{d+\tau}-\frac{C_{11}+1}{C_{11}}\frac{n_-}{d+\tau}.
    \end{align*}
    $\bullet \; \tiikpo$ for $i = 1,2 $: We have
    \begin{align*}
        \tiikpo &= \bd_i^\top\XXkpotaui\bd_i\\
        &=\bd_i^\top\XXktaui\bd_i - \bd_i^\top\XXktaui\bL_{k+1}\A_{k+1}^{-1}\bR_{k+1}\XXktaui\bd_i\\
        &= \tiik - \mts{\nukpo\hikpok,~\tikpok,~\hikpok} \times\frac{\text{adj}(\A_{k+1})}{\det(\A_{k+1})}\begin{bmatrix}
        \nukpo\hikpok \\
        \hikpok \\
        \tikpok
        \end{bmatrix}\\
        &= \frac{\tiik \det(\A_{k+1}) - \ffs[\A_{k+1}]{\hikpok, \tikpok, \hikpok, \tikpok}}{\det(\A_{k+1})}.
    \end{align*}
    Thus, we can apply Lemma~\ref{lem:aux_f_func} to get 
    \begin{align*}
        \tiikpo &\leq \frac{\normo{\tiik \det(\A_{k+1})} + \normo{\ffs[\A_{k+1}]{\hikpok, \tikpok, \hikpok, \tikpok}}}{\det(\A_{k+1})}\\ 
        &\leq \frac{1}{\det(\A_{k+1})}\Bigg(\det(\A_{k+1})\normo{\tiik} + \nukpo^2\normo{\hikpok}\normo{\hikpok} + \normo{\skpokpok}\normo{\tikpok}\normo{\tikpok}\\
        &\myquad[2] + \pts{1+\frac{1}{C_1}}\pts{\normo{\hikpok}\normo{\tikpok} + \normo{\tikpok}\normo{\hikpok}}\Bigg)\\
        & \leq \frac{1}{\det(\A_{k+1})} \Bigg(\frac{C_2+1}{C_2}\frac{C_3n\nui^2}{d+\tau} + \frac{C_4\nukpo^2\nui^2n^2}{\pts{d+\tau}^2}\\
        &\myquad[2] + \frac{C_5\nui^2\nukpo^2n^3}{\pts{d+\tau}^3} + \frac{C_6\nui^2\nukpo n^2}{\pts{d+\tau}^2}\Bigg)\\
        & \leq \frac{C_7n\nui^2}{d+\tau},
    \end{align*}
    where in the third inequality we apply the lower bound on $\det(\A_{k+1})$ (Lemma~\ref{lem:aux_det_bound}) and the $k$-th order primitive bounds, and in the last inequality, we apply Assumption~\ref{asm:dataset_gen}(C), i.e.~$d \geq CR_+n$.\\
    $\bullet \; \totkpo$: We have
    \begin{align*}
        \totkpo &= \bd_1^\top\XXkpotaui\bd_2\\
        &=\bd_1^\top\XXktaui\bd_2 - \bd_1^\top\XXktaui\bL_{k+1}\A_{k+1}^{-1}\bR_{k+1}\XXktaui\bd_2\\
        &= \totk - \mts{\nukpo\hokpok,~\tokpok,~\hokpok} \times\frac{\text{adj}(\A_{k+1})}{\det(\A_{k+1})}\begin{bmatrix}
        \nukpo\htkpok \\
        \htkpok \\
        \ttkpok
        \end{bmatrix}\\
        &= \frac{\totk \det(\A_{k+1}) - \ffs[\A_{k+1}]{\hokpok, \tokpok, \htkpok, \ttkpok}}{\det(\A_{k+1})}.
    \end{align*}
    Thus, we can apply Lemma~\ref{lem:aux_f_func} to get 
    \begin{align*}
        \totkpo &\leq \frac{\normo{\totk \det(\A_{k+1})} + \normo{\ffs[\A_{k+1}]{\hokpok, \tokpok, \htkpok, \ttkpok}}}{\det(\A_{k+1})}\\
        &\leq \frac{1}{\det(\A_{k+1})}\Bigg(\det(\A_{k+1})\normo{\totk} + \nukpo^2\normo{\hokpok}\normo{\htkpok} + \normo{\skpokpok}\normo{\tokpok}\normo{\ttkpok}\\
        &\myquad[2] + \pts{1+\frac{1}{C_1}}\pts{\normo{\hokpok}\normo{\ttkpok} + \normo{\tokpok}\normo{\htkpok}}\Bigg)\\
        & \leq \frac{1}{\det(\A_{k+1})} \Bigg(\frac{C_2+1}{C_2}\frac{C_3n\nuo\nut}{d+\tau} + \frac{C_4\nukpo^2\nuo\nut n^2}{\pts{d+\tau}^2}\\
        &\myquad[2] + \frac{C_5\nuo\nut\nukpo^2n^3}{\pts{d+\tau}^3} + \frac{C_6\nuo\nut\nukpo n^2}{\pts{d+\tau}^2}\Bigg)\\
        & \leq \frac{C_7n\nuo\nut}{d+\tau},
    \end{align*}
    where in the third inequality we apply the lower bound on $\det(\A_{k+1})$ (Lemma~\ref{lem:aux_det_bound}) and the $k$-th order primitive bounds, and in the last inequality, we apply Assumption~\ref{asm:dataset_gen}(C), i.e.~$d \geq CR_+n$. Similarly, 
    \begin{align*}
        \totkpo &\geq \frac{\totk \det(\A_{k+1}) - \normo{\ffs[\A_{k+1}]{\hokpok, \tokpok, \htkpok, \ttkpok}}}{\det(\A_{k+1})}\\ 
        &\geq \frac{1}{\det(\A_{k+1})}\Bigg(\det(\A_{k+1})\totk -\Bigg( \nukpo^2\normo{\hokpok}\normo{\htkpok} + \normo{\skpokpok}\normo{\tokpok}\normo{\ttkpok}\\
        &\myquad[2] + \pts{1+\frac{1}{C_1}}\pts{\normo{\hokpok}\normo{\ttkpok} + \normo{\tokpok}\normo{\htkpok}}\Bigg)\Bigg)\\
        & \geq \frac{1}{\det(\A_{k+1})} \Bigg(-\frac{C_2+1}{C_2}\frac{C_3n\nuo\nut}{d+\tau} - \frac{C_4\nukpo^2\nuo\nut n^2}{\pts{d+\tau}^2}\\
        &\myquad[2] - \frac{C_5\nuo\nut\nukpo^2n^3}{\pts{d+\tau}^3} - \frac{C_6\nuo\nut\nukpo n^2}{\pts{d+\tau}^2}\Bigg)\\
        & \geq -\frac{C_7n\nuo\nut}{d+\tau}.
    \end{align*}
    $\bullet \; \hijkpo = \bd_i^\top\XXkpotaui\bv_j$, for $i,j=1,2$: We have
    \begin{align*}
        \hijkpo &= \bd_i^\top\XXkpotaui\bv_j\\
        &=\bd_i^\top\XXktaui\bv_j - \bd_i^\top\XXktaui\bL_{k+1}\A_{k+1}^{-1}\bR_{k+1}\XXktaui\bv_j\\
        &= \hijk - \mts{\nukpo\hikpok,~\tikpok,~\hikpok} \times\frac{\text{adj}(\A_{k+1})}{\det(\A_{k+1})}\begin{bmatrix}
        \nukpo\sjkpok \\
        \sjkpok \\
        \hkpojk
        \end{bmatrix}\\
        &= \frac{\hijk \det(\A_{k+1}) - \ffs[\A_{k+1}]{\hikpok, \tikpok, \sjkpok, \hkpojk}}{\det(\A_{k+1})}.
    \end{align*}
    Thus, we can apply Lemma~\ref{lem:aux_f_func} to get 
    \begin{align*}
        \hijkpo &\leq \frac{\normo{\hijk \det(\A_{k+1})} + \normo{\ffs[\A_{k+1}]{\hikpok, \tikpok, \sjkpok, \hkpojk}}}{\det(\A_{k+1})}\\ 
        &\leq \frac{1}{\det(\A_{k+1})}\Bigg(\det(\A_{k+1})\normo{\hijk} + \nukpo^2\normo{\hikpok}\normo{\sjkpok} + \normo{\skpokpok}\normo{\tikpok}\normo{\hkpojk}\\
        &\myquad[2] + \pts{1+\frac{1}{C_1}}\pts{\normo{\hikpok}\normo{\hkpojk} + \normo{\tikpok}\normo{\sjkpok}}\Bigg)\\
        & \leq \frac{1}{\det(\A_{k+1})} \Bigg(\frac{C_2+1}{C_2}\frac{C_3n\nui}{d+\tau} + \frac{C_4\nukpo^2\nui n^2}{\pts{d+\tau}^2}\\
        &\myquad[2] + \frac{C_5\nui\nukpo^2n^3}{\pts{d+\tau}^3} + \frac{C_6\nui\nukpo n^2}{\pts{d+\tau}^2}\Bigg)\\
        & \leq \frac{C_7n\nui}{d+\tau},
    \end{align*}
    where in the third inequality we apply the lower bound on $\det(\A_{k+1})$  (Lemma~\ref{lem:aux_det_bound}) and the $k$-th order primitive bounds, and in the last inequality, we apply Assumption~\ref{asm:dataset_gen}(C), i.e.~$d \geq CR_+n$. Similarly, 
    \begin{align*}
        \hijkpo &\geq \frac{\hijk \det(\A_{k+1}) - \normo{\ffs[\A_{k+1}]{\hikpok, \tikpok, \sjkpok, \hkpojk}}}{\det(\A_{k+1})}\\ 
        &\geq \frac{1}{\det(\A_{k+1})}\Bigg(\det(\A_{k+1})\hijk -\Bigg( \nukpo^2\normo{\hikpok}\normo{\sjkpok} + \normo{\skpokpok}\normo{\tikpok}\normo{\hkpojk}\\
        &\myquad[2] + \pts{1+\frac{1}{C_1}}\pts{\normo{\hikpok}\normo{\hkpojk} + \normo{\tikpok}\normo{\sjkpok}}\Bigg)\Bigg)\\
        & \geq \frac{1}{\det(\A_{k+1})} \Bigg(-\frac{C_2+1}{C_2}\frac{C_3n\nui}{d+\tau} - \frac{C_4\nukpo^2\nui n^2}{\pts{d+\tau}^2}\\
        & \myquad[2]- \frac{C_5\nui\nukpo^2n^3}{\pts{d+\tau}^3} - \frac{C_6\nui\nukpo n^2}{\pts{d+\tau}^2}\Bigg)\\
        & \geq -\frac{C_7n\nui}{d+\tau}.
    \end{align*}
    $\bullet \; \suukpo$: We have
    \begin{align*}
        &\suukpo = \bu^\top\XXkpotaui\bu\\
        &=\bu^\top\XXktaui\bu - \bu^\top\XXktaui\bL_{k+1}\A_{k+1}^{-1}\bR_{k+1}\XXktaui\bu\\
        &= \suuk - \mts{\nukpo\sukpok,~\hkpouk,~\sukpok} \times\frac{\text{adj}(\A_{k+1})}{\det(\A_{k+1})}\begin{bmatrix}
        \nukpo\sukpok \\
        \sukpok \\
        \hkpouk
        \end{bmatrix}\\
        &= \frac{\suuk \det(\A_{k+1}) - \ffs[\A_{k+1}]{\sukpok, \hkpouk, \sukpok, \hkpouk}}{\det(\A_{k+1})}.
    \end{align*}
    Thus, we can apply Lemma~\ref{lem:aux_f_func} to get 
    \begin{align*}
        \suukpo &\leq \frac{\normo{\suuk \det(\A_{k+1})} + \normo{\ffs[\A_{k+1}]{\sukpok, \hkpouk, \sukpok, \hkpouk}}}{\det(\A_{k+1})}\\
        &\leq \frac{1}{\det(\A_{k+1})}\Bigg(\det(\A_{k+1})\normo{\suuk} + \nukpo^2\normo{\sukpok}\normo{\sukpok}\\
        &+ \pts{1+\frac{1}{C_1}}\pts{\normo{\sukpok}\normo{\hkpouk} + \normo{\hkpouk}\normo{\sukpok}} + \normo{\skpokpok}\normo{\hkpouk}\normo{\hkpouk}\Bigg)\\
        &\leq \frac{1}{\det(\A_{k+1})} \pts{\frac{C_2+1}{C_2}\frac{1}{d+\tau} + \frac{C_3\nukpo^2n}{\pts{d+\tau}^2} + \frac{C_4\nukpo n}{\pts{d+\tau}^2} + \frac{C_5\nukpo^2n^2}{\pts{d+\tau}^3}}\\
        & \leq \frac{1}{\det(\A_{k+1})} \pts{\frac{C_6+1}{C_6}\frac{1}{d+\tau}} \leq \frac{C_7+1}{C_7}\frac{1}{d+\tau},
    \end{align*}
    where in the third inequality we apply the lower bound on $\det(\A_{k+1})$  (Lemma~\ref{lem:aux_det_bound}) and the $k$-th order primitive bounds, and in the last inequality, we apply Assumption~\ref{asm:dataset_gen}(C) such that $d \geq CR_+n$. Similarly, 
    \begin{align*}
        \suukpo &\geq \frac{\suuk \det(\A_{k+1}) - \normo{\ffs[\A_{k+1}]{\sukpok, \hkpouk, \sukpok, \hkpouk}}}{\det(\A_{k+1})}\\ 
        &\geq \frac{1}{\det(\A_{k+1})}\Bigg(\det(\A_{k+1})\suuk- \Bigg(\nukpo^2\normo{\sukpok}\normo{\sukpok}\\
        &+ \pts{1+\frac{1}{C_1}}\pts{\normo{\sukpok}\normo{\hkpouk} + \normo{\hkpouk}\normo{\sukpok}} + \normo{\skpokpok}\normo{\hkpouk}\normo{\hkpouk}\Bigg)\Bigg)\\
        &\geq \frac{1}{\det(\A_{k+1})} \pts{\frac{C_2-1}{C_2}\frac{1}{d+\tau} - \frac{C_3\nukpo^2n}{\pts{d+\tau}^2} - \frac{C_4\nukpo n}{\pts{d+\tau}^2} - \frac{C_5\nukpo^2n^2}{\pts{d+\tau}^3}}\\
        & \geq \frac{1}{\det(\A_{k+1})} \pts{\frac{C_6-1}{C_6}\frac{1}{d+\tau}} = \frac{C_7-1}{C_7}\frac{1}{d+\tau}.
    \end{align*}
    $\bullet \; \suikpo$ for $i=1,2$: We have
    \begin{align*}
        \suikpo &= \bu^\top\XXkpotaui\bv_i\\
        &=\bu^\top\XXktaui\bv_i - \bu^\top\XXktaui\bL_{k+1}\A_{k+1}^{-1}\bR_{k+1}\XXktaui\bv_i\\
        &= \suik - \mts{\nukpo\sukpok,~\hkpouk,~\sukpok} \times\frac{\text{adj}(\A_{k+1})}{\det(\A_{k+1})}\begin{bmatrix}
        \nukpo\sikpok \\
        \sikpok \\
        \hkpoik
        \end{bmatrix}\\
        &= \frac{\suik \det(\A_{k+1}) - \ffs[\A_{k+1}]{\sukpok, \hkpouk, \sikpok, \hkpoik}}{\det(\A_{k+1})}.
    \end{align*}
    Thus, we can apply Lemma~\ref{lem:aux_f_func} to get 
    \begin{align*}
        \suikpo &\leq \frac{\normo{\suik \det(\A_{k+1})} + \normo{\ffs[\A_{k+1}]{\sukpok, \hkpouk, \sikpok, \hkpoik}}}{\det(\A_{k+1})}\\
        &\leq \frac{1}{\det(\A_{k+1})}\Bigg(\det(\A_{k+1})\normo{\suik} + \nukpo^2\normo{\sukpok}\normo{\sikpok} + \normo{\skpokpok}\normo{\hkpouk}\normo{\hkpoik}\\
        &\myquad[2] + \pts{1+\frac{1}{C_1}}\pts{\normo{\sukpok}\normo{\hkpoik} + \normo{\hkpouk}\normo{\sikpok}}\Bigg)\\
        & \leq \frac{1}{\det(\A_{k+1})} \Bigg(\frac{C_2+1}{C_2}\frac{C_3\sqrt{n}}{d+\tau} + \frac{C_4+1}{C_4}\frac{C_5\sqrt{n}}{d+\tau}\frac{\nukpo^2 n}{d+\tau}+ \frac{C_6\nukpo^2n^2\sqrt{n}}{\pts{d+\tau}^3}\\
        &+ \frac{C_7+1}{C_7}\Bigg(\frac{C_8\sqrt{n}}{d+\tau}\frac{\nukpo n}{d+\tau} + \frac{C_9\nukpo n\sqrt{n}}{\pts{d+\tau}^2}\Bigg)\Bigg)\\
        & \leq \frac{C_{10}\sqrt{n}}{d+\tau},
    \end{align*}
    where in the third inequality we apply the lower bound on $\det(\A_{k+1})$ (Lemma~\ref{lem:aux_det_bound}) and the $k$-th order primitive bounds, and in the last inequality, we apply Assumption~\ref{asm:dataset_gen}(C) such that $d \geq CR_+n$.  Similarly,
    \begin{align*}
        \suikpo &\geq \frac{\suik \det(\A_{k+1}) - \normo{\ffs[\A_{k+1}]{\sukpok, \hkpouk, \sikpok, \hkpoik}}}{\det(\A_{k+1})}\\ 
        &\geq \frac{1}{\det(\A_{k+1})}\Bigg(\det(\A_{k+1})\suik -\Bigg(\nukpo^2\normo{\sukpok}\normo{\sikpok} + \normo{\skpokpok}\normo{\hkpouk}\normo{\hkpoik}\\
        &\myquad[2]+ \pts{1+\frac{1}{C_1}}\pts{\normo{\sukpok}\normo{\hkpoik} + \normo{\hkpouk}\normo{\sikpok}}\Bigg)\Bigg)\\
        & \geq \frac{1}{\det(\A_{k+1})} \Bigg(-\frac{C_2-1}{C_2}\frac{C_3\sqrt{n}}{d+\tau} - \frac{C_4+1}{C_4}\frac{C_5\sqrt{n}}{d+\tau}\frac{\nukpo^2 n}{d+\tau} - \frac{C_6\nukpo^2n^2\sqrt{n}}{\pts{d+\tau}^3}\\
        &\myquad[1]  - \frac{C_7+1}{C_7}\Bigg(\frac{C_8\sqrt{n}}{d+\tau}\frac{\nukpo n}{d+\tau} + \frac{C_9\nukpo n\sqrt{n}}{\pts{d+\tau}^2}\Bigg)\Bigg)\\
        & \geq -\frac{C_{10}\sqrt{n}}{d+\tau}.
    \end{align*}
    $\bullet \; \hiukpo = \bd_i^\top\XXkpotaui\bu$, for $i=1,2$: We have
    \begin{align*}
        \hiukpo &= \bd_i^\top\XXkpotaui\bu\\
        &=\bd_i^\top\XXktaui\bu - \bd_i^\top\XXktaui\bL_{k+1}\A_{k+1}^{-1}\bR_{k+1}\XXktaui\bu\\
        &= \hiuk - \mts{\nukpo\hikpok,~\tikpok,~\hikpok} \times\frac{\text{adj}(\A_{k+1})}{\det(\A_{k+1})}\begin{bmatrix}
        \nukpo\sukpok \\
        \sukpok \\
        \hkpouk
        \end{bmatrix}\\
        &= \frac{\hiuk \det(\A_{k+1}) - \ffs[\A_{k+1}]{\hikpok, \tikpok, \sukpok, \hkpouk}}{\det(\A_{k+1})}.
    \end{align*}
    Thus, we can apply Lemma~\ref{lem:aux_f_func} to get 
    \begin{align*}
        \hiukpo &\leq \frac{\normo{\hiuk \det(\A_{k+1})} + \normo{\ffs[\A_{k+1}]{\hikpok, \tikpok, \sukpok, \hkpouk}}}{\det(\A_{k+1})}\\
        &\leq \frac{1}{\det(\A_{k+1})}\Bigg(\det(\A_{k+1})\normo{\hiuk} + \nukpo^2\normo{\hikpok}\normo{\sukpok} + \normo{\skpokpok}\normo{\tikpok}\normo{\hkpouk}\\
        &\myquad[2] + \pts{1+\frac{1}{C_1}}\pts{\normo{\hikpok}\normo{\hkpouk} + \normo{\tikpok}\normo{\sukpok}}\Bigg)\\
        & \leq \frac{1}{\det(\A_{k+1})} \Bigg(\frac{C_2+1}{C_2}\frac{C_3\sqrt{n}\nui}{d+\tau} + \frac{C_4\nukpo^2\nui n\sqrt{n}}{\pts{d+\tau}^2}\\
        &\myquad[2] + \frac{C_5\nui\nukpo^2n^2\sqrt{n}}{\pts{d+\tau}^3} + \frac{C_6\nui\nukpo n\sqrt{n}}{\pts{d+\tau}^2}\Bigg)\\
        & \leq \frac{C_7\sqrt{n}\nui}{d+\tau},
    \end{align*}
    where in the third inequality we apply the lower bound on $\det(\A_{k+1})$ (Lemma~\ref{lem:aux_det_bound}) and the $k$-th order primitive bounds, and in the last inequality, we apply Assumption~\ref{asm:dataset_gen}(C) such that $d \geq CR_+n$. Similarly, 
    \begin{align*}
        \hiukpo &\geq \frac{\hiuk \det(\A_{k+1}) - \normo{\ffs[\A_{k+1}]{\hikpok, \tikpok, \sukpok, \hkpouk}}}{\det(\A_{k+1})}\\
        &\geq \frac{1}{\det(\A_{k+1})}\Bigg(\det(\A_{k+1})\hiuk -\Bigg( \nukpo^2\normo{\hikpok}\normo{\sukpok} + \normo{\skpokpok}\normo{\tikpok}\normo{\hkpouk}\\
        &\myquad[2] + \pts{1+\frac{1}{C_1}}\pts{\normo{\hikpok}\normo{\hkpouk} + \normo{\tikpok}\normo{\sukpok}}\Bigg)\Bigg)\\
        & \geq \frac{1}{\det(\A_{k+1})} \Bigg(-\frac{C_2+1}{C_2}\frac{C_3\sqrt{n}\nui}{d+\tau} - \frac{C_4\nukpo^2\nui n\sqrt{n}}{\pts{d+\tau}^2}\\
        & \myquad[2]- \frac{C_5\nui\nukpo^2n^2\sqrt{n}}{\pts{d+\tau}^3} - \frac{C_6\nui\nukpo n\sqrt{n}}{\pts{d+\tau}^2}\Bigg)\\
        & \geq -\frac{C_7\sqrt{n}\nui}{d+\tau}.
    \end{align*}

%
\subsection{Inductive step for Lemma~\ref{lem:w_primitive_bounds} (with adjustment weight)}
    $\bullet \; \sidikpo$ for $i = 1,2 $: We have
    \begin{align*}
        \sidikpo &= \bv_i^\top\bDelta^{-1}\XXkpotaui\bv_i\\
        &=\bv_i^\top\bDelta^{-1}\XXktaui\bv_i - \bv_i^\top\bDelta^{-1}\XXktaui\bL_{k+1}\A_{k+1}^{-1}\bR_{k+1}\XXktaui\bv_i\\
        &= \sidik - \mts{\nukpo\sidkpok,~\hkpoidk,~\sidkpok} \times\frac{\text{adj}(\A_{k+1})}{\det(\A_{k+1})}\begin{bmatrix}
        \nukpo\sikpok \\
        \sikpok \\
        \hkpoik
        \end{bmatrix}\\
        &= \frac{\sidik \det(\A_{k+1}) - \ffs[\A_{k+1}]{\sidkpok, \hkpoidk, \sikpok, \hkpoik}}{\det(\A_{k+1})}.
    \end{align*}
    Thus, we can apply Lemma~\ref{lem:aux_f_func} to get 
    \begin{align*}
        \sidikpo &\leq \frac{\normo{\sidik \det(\A_{k+1})} + \normo{\ffs[\A_{k+1}]{\sidkpok, \hkpoidk, \sikpok, \hkpoik}}}{\det(\A_{k+1})}\\
        &\leq \frac{1}{\det(\A_{k+1})}\Bigg(\det(\A_{k+1})\normo{\sidik} + \nukpo^2\normo{\sidkpok}\normo{\sikpok} + \normo{\skpokpok}\normo{\hkpoidk}\normo{\hkpoik}\\
        &\myquad[2] + \pts{1+\frac{1}{C_1}}\pts{\normo{\sidkpok}\normo{\hkpoik} + \normo{\hkpoidk}\normo{\sikpok}}\Bigg)\\
        & \leq \frac{1}{\det(\A_{k+1})} \Bigg(\frac{C_2+1}{C_2}\frac{\pts{\hc_3+\frac{1}{C_4}}\pts{\ndpnd}}{d+\tau}\\
        &\myquad[2]+ \frac{C_5+1}{C_5}\frac{\pts{\hc_6+\frac{1}{C_7}}\pts{\ndpnd}}{d+\tau}\frac{\nukpo^2n}{d+\tau}  + \frac{C_8+1}{C_8}\frac{C_9\nukpo^2n^2\sqrt{n\nd}}{\pts{d+\tau}^3}\\
        &\myquad[2] + \frac{C_{10}+1}{C_{10}}\Bigg(\frac{\pts{\hc_{11}+\frac{1}{C_{12}}}\pts{\ndpnd}}{d+\tau}\frac{C_{13}n\nukpo}{d+\tau} + \frac{C_{14}n\sqrt{n\nd}\nukpo}{\pts{d+\tau}^2}\Bigg)\Bigg),
    \end{align*}
    where in the third inequality we apply the lower bound on $\det(\A_{k+1})$ (Lemma~\ref{lem:aux_det_bound}) and the $k$-th order primitive bounds. Next, Lemma~\ref{lem:aux_compare_a_h} tells us that $\sqrt{n\nd}\leq \frac{1}{C}\pts{\ndpnd}\nut$. We then have
    \begin{align*}
        \sidikpo &\leq \frac{1}{\det(\A_{k+1})} \Bigg(\frac{C_2+1}{C_2}\frac{\pts{\hc_3+\frac{1}{C_4}}\pts{\ndpnd}}{d+\tau}\\
        &\myquad[2] + \frac{C_5+1}{C_5}\frac{\pts{\hc_6+\frac{1}{C_7}}\pts{\ndpnd}}{d+\tau}\frac{\nukpo^2n}{d+\tau} + \frac{C_8+1}{C_8}\frac{C_9\nukpo^2\nut n^2 \frac{1}{C}\pts{\ndpnd}}{\pts{d+\tau}^3}\\
        &\myquad[2] + \frac{C_{10}+1}{C_{10}}\Bigg(\frac{\pts{\hc_{11}+\frac{1}{C_{12}}}\pts{\ndpnd}}{d+\tau}\frac{C_{13}n\nukpo}{d+\tau} + \frac{C_{14}n\frac{1}{C}\pts{\ndpnd}\nut\nukpo}{\pts{d+\tau}^2}\Bigg)\Bigg)\\
        & \leq \frac{\pts{\hc_{15}+\frac{1}{C_{16}}}\pts{\ndpnd}}{d+\tau},
    \end{align*} 
    where in the last inequality, we apply Assumption~\ref{asm:dataset_gen}(C), i.e.~$d \geq CR_+n$.  Similarly,
    \begin{align*}
        \sidikpo &\geq \frac{\sidik \det(\A_{k+1}) - \normo{\ffs[\A_{k+1}]{\sidkpok, \hkpoidk, \sikpok, \hkpoik}}}{\det(\A_{k+1})}\\
        &\geq \frac{1}{\det(\A_{k+1})}\Bigg(\det(\A_{k+1})\sidik -\Bigg(\nukpo^2\normo{\sidkpok}\normo{\sikpok} + \normo{\skpokpok}\normo{\hkpoidk}\normo{\hkpoik}\\
        &\myquad[2] + \pts{1+\frac{1}{C_1}}\pts{\normo{\sidkpok}\normo{\hkpoik} + \normo{\hkpoidk}\normo{\sikpok}}\Bigg)\Bigg)\\
        & \geq \frac{1}{\det(\A_{k+1})} \Bigg(\frac{C_2-1}{C_2}\frac{\pts{\hc_3-\frac{1}{C_4}}\pts{\ndpnd}}{d+\tau}\\
        &\myquad[2] - \frac{C_5+1}{C_5}\frac{\pts{\hc_6+\frac{1}{C_7}}\pts{\ndpnd}}{d+\tau}\frac{\nukpo^2n}{d+\tau} - \frac{C_8+1}{C_8}\frac{C_9\nukpo^2\nut n^2 \frac{1}{C}\pts{\ndpnd}}{\pts{d+\tau}^3}\\
        &\myquad[2] - \frac{C_{10}+1}{C_{10}}\Bigg(\frac{\pts{\hc_{11}+\frac{1}{C_{12}}}\pts{\ndpnd}}{d+\tau}\frac{C_{13}n\nukpo}{d+\tau}  - \frac{C_{14}n\frac{1}{C}\pts{\ndpnd}\nut\nukpo}{\pts{d+\tau}^2}\Bigg)\Bigg)\\
        & \geq \frac{\pts{\hc_{15}-\frac{1}{C_{16}}}\pts{\ndpnd}}{d+\tau}.
    \end{align*}
    $\bullet \; \stdokpo$: We have
    \begin{align*}
        \stdokpo &= \bv_2^\top\bDelta^{-1}\XXkpotaui\bv_1\\
        &=\bv_2^\top\bDelta^{-1}\XXktaui\bv_1 - \bv_2^\top\bDelta^{-1}\XXktaui\bL_{k+1}\A_{k+1}^{-1}\bR_{k+1}\XXktaui\bv_1\\
        &= \stdok - \mts{\nukpo\stdkpok,~\hkpotdk,~\stdkpok} \times\frac{\text{adj}(\A_{k+1})}{\det(\A_{k+1})}\begin{bmatrix}
        \nukpo\sokpok \\
        \sokpok \\
        \hkpook
        \end{bmatrix}\\
        &= \frac{\stdok \det(\A_{k+1}) - \ffs[\A_{k+1}]{\stdkpok, \hkpotdk, \sokpok, \hkpook}}{\det(\A_{k+1})}.
    \end{align*}
    Thus, we can apply Lemma~\ref{lem:aux_f_func} to get 
    \begin{align*}
        \stdokpo &\leq \frac{\normo{\stdok \det(\A_{k+1})} + \normo{\ffs[\A_{k+1}]{\stdkpok, \hkpotdk, \sokpok, \hkpook}}}{\det(\A_{k+1})}\\
        &\leq \frac{1}{\det(\A_{k+1})}\Bigg(\det(\A_{k+1})\stdok + \nukpo^2\normo{\stdkpok}\normo{\sokpok} + \normo{\skpokpok}\normo{\hkpotdk}\normo{\hkpook}\\
        &\myquad[2]+ \pts{1+\frac{1}{C_1}}\pts{\normo{\stdkpok}\normo{\hkpook} + \normo{\hkpotdk}\normo{\sokpok}}\Bigg)\\
        & \leq \frac{1}{\det(\A_{k+1})} \Bigg(\frac{C_2+1}{C_2}\frac{\hc_3\pts{\ndmnd}+\frac{1}{C_4}\pts{\ndpnd}}{d+\tau}\\
        &\myquad[2] + \frac{C_5+1}{C_5}\frac{\pts{\hc_6+\frac{1}{C_7}}\pts{\ndpnd}}{d+\tau}\frac{\nukpo^2n}{d+\tau} + \frac{C_8+1}{C_8}\frac{C_9\nukpo^2n^2\sqrt{n\nd}}{\pts{d+\tau}^3}\\
        &\myquad[2] + \frac{C_{10}+1}{C_{10}}\Bigg(\frac{\pts{\hc_{11}+\frac{1}{C_{12}}}\pts{\ndpnd}}{d+\tau}\frac{C_{13}n\nukpo}{d+\tau} + \frac{C_{14}n\sqrt{n\nd}\nukpo}{\pts{d+\tau}^2}\Bigg)\Bigg)
    \end{align*}
    where in the third inequality we apply the lower bound on $\det(\A_{k+1})$ (Lemma~\ref{lem:aux_det_bound}) and the $k$-th order primitive bounds. Next, Lemma~\ref{lem:aux_compare_a_h} tells us that $\sqrt{n\nd}\leq \frac{1}{C}\pts{\ndpnd}\nut$. We then have
    \begin{align*}
        \stdokpo &\leq \frac{1}{\det(\A_{k+1})} \Bigg(\frac{C_2+1}{C_2}\frac{\hc_3\pts{\ndmnd}+\frac{1}{C_4}\pts{\ndpnd}}{d+\tau}\\
        &\myquad[2] + \frac{C_5+1}{C_5}\frac{\pts{\hc_6+\frac{1}{C_7}}\pts{\ndpnd}}{d+\tau}\frac{\nukpo^2n}{d+\tau} + \frac{C_8+1}{C_8}\frac{C_9\nukpo^2\nut n^2 \frac{1}{C}\pts{\ndpnd}}{\pts{d+\tau}^3}\\
        &\myquad[2] + \frac{C_{10}+1}{C_{10}}\Bigg(\frac{\pts{\hc_{11}+\frac{1}{C_{12}}}\pts{\ndpnd}}{d+\tau}\frac{C_{13}n\nukpo}{d+\tau} + \frac{C_{14}n\frac{1}{C}\pts{\ndpnd}\nut\nukpo}{\pts{d+\tau}^2}\Bigg)\Bigg)\\
        & \leq \frac{\hc_{15}\pts{\ndmnd}+\frac{1}{C_{16}}\pts{\ndpnd}}{d+\tau},
    \end{align*} 
    where in the last inequality, we apply Assumption~\ref{asm:dataset_gen}(C), i.e.~$d \geq CR_+n$.  Similarly,
    \begin{align*}
        \stdokpo &\geq \frac{\stdok \det(\A_{k+1}) - \normo{\ffs[\A_{k+1}]{\stdkpok, \hkpotdk, \sokpok, \hkpook}}}{\det(\A_{k+1})}\\
        &\geq \frac{1}{\det(\A_{k+1})}\Bigg(\det(\A_{k+1})\stdok -\Bigg(\nukpo^2\normo{\stdkpok}\normo{\sokpok} + \normo{\skpokpok}\normo{\hkpotdk}\normo{\hkpook}\\
        &\myquad[2] + \pts{1+\frac{1}{C_1}}\pts{\normo{\stdkpok}\normo{\hkpook} + \normo{\hkpotdk}\normo{\sokpok}}\Bigg)\Bigg)\\
        & \geq \frac{1}{\det(\A_{k+1})} \Bigg(\frac{C_2-1}{C_2}\frac{\hc_3\pts{\ndmnd}-\frac{1}{C_4}\pts{\ndpnd}}{d+\tau}\\
        &\myquad[2] - \frac{C_5+1}{C_5}\frac{\pts{\hc_6+\frac{1}{C_7}}\pts{\ndpnd}}{d+\tau}\frac{\nukpo^2n}{d+\tau} - \frac{C_8+1}{C_8}\frac{C_9\nukpo^2\nut n^2 \frac{1}{C}\pts{\ndpnd}}{\pts{d+\tau}^3}\\
        &\myquad[2] - \frac{C_{10}+1}{C_{10}}\Bigg(\frac{\pts{\hc_{11}+\frac{1}{C_{12}}}\pts{\ndpnd}}{d+\tau}\frac{C_{13}n\nukpo}{d+\tau} - \frac{C_{14}n\frac{1}{C}\pts{\ndpnd}\nut\nukpo}{\pts{d+\tau}^2}\Bigg)\Bigg)\\
        & \geq \frac{\hc_{15}\pts{\ndmnd}-\frac{1}{C_{16}}\pts{\ndpnd}}{d+\tau}.
    \end{align*}
    $\bullet \; \sididkpo$ for $i = 1,2 $: We have
    \begin{align*}
        \sididkpo &= \bv_i^\top\bDelta^{-1}\XXkpotaui\bDelta^{-1}\bv_i\\
        &=\bv_i^\top\bDelta^{-1}\XXktaui\bDelta^{-1}\bv_i\\
        &\myquad[2]-\bv_i^\top\bDelta^{-1}\XXktaui\bL_{k+1}\A_{k+1}^{-1}\bR_{k+1}\XXktaui\bDelta^{-1}\bv_i\\
        &= \sididk - \mts{\nukpo\sidkpok,~\hkpoidk,~\sidkpok} \times\frac{\text{adj}(\A_{k+1})}{\det(\A_{k+1})}\begin{bmatrix}
        \nukpo\sidkpok \\
        \sidkpok \\
        \hkpoidk
        \end{bmatrix}\\
        &= \frac{\sididk \det(\A_{k+1}) - \ffs[\A_{k+1}]{\sidkpok, \hkpoidk, \sidkpok, \hkpoidk}}{\det(\A_{k+1})}.
    \end{align*}
    Thus, we can apply Lemma~\ref{lem:aux_f_func} to get 
    \begin{align*}
        \sididkpo &\leq \frac{\normo{\sididk \det(\A_{k+1})} + \normo{\ffs[\A_{k+1}]{\sidkpok, \hkpoidk, \sidkpok, \hkpoidk}}}{\det(\A_{k+1})}\\
        &\leq \frac{1}{\det(\A_{k+1})}\Bigg(\det(\A_{k+1})\normo{\sididk} + \nukpo^2\normo{\sidkpok}\normo{\sidkpok} + \normo{\skpokpok}\normo{\hkpoidk}\normo{\hkpoidk}\\
        &\myquad[2]+ \pts{1+\frac{1}{C_1}}\pts{\normo{\sidkpok}\normo{\hkpoidk} + \normo{\hkpoidk}\normo{\sidkpok}}\Bigg)\\
        & \leq \frac{1}{\det(\A_{k+1})} \Bigg(\frac{C_2+1}{C_2}\frac{\hc_3\nd}{d+\tau} + \frac{\pts{\hc_4+\frac{1}{C_5}}\pts{\ndpnd}^2\nukpo^2}{\pts{d+\tau}^2}\\
        &\myquad[2] + \frac{C_6+1}{C_6}\frac{C_7\nukpo^2n^2\nd}{\pts{d+\tau}^3} + \frac{C_{8}+1}{C_{8}}\frac{2\pts{\hc_{9}+\frac{1}{C_{10}}}\pts{\ndpnd}}{d+\tau}\frac{C_{11}\sqrt{n\nd}\nukpo}{d+\tau}\Bigg)
    \end{align*}
    where in the third inequality we apply the lower bound on $\det(\A_{k+1})$ (Lemma~\ref{lem:aux_det_bound}) and the $k$-th order primitive bounds. Next, by the Cauchy–Schwarz inequality, we have $\pts{\ndpnd}^2\leq 2 \pts{\frac{n_+^2}{\Delta_+^2} + \frac{n_-^2}{\Delta_-^2}} \leq 2n\nd$. Also, Lemma~\ref{lem:aux_compare_a_h} tells us that $\sqrt{n\nd}\leq \frac{1}{C}\pts{\ndpnd}\nut$. We then have
    \begin{align*}
        \sididkpo &\leq \frac{1}{\det(\A_{k+1})} \Bigg(\frac{C_2+1}{C_2}\frac{\hc_3\nd}{d+\tau}  + \frac{\pts{\hc_4+\frac{1}{C_5}}2n\nd\nukpo^2}{\pts{d+\tau}^2} + \frac{C_6+1}{C_6}\frac{C_7\nukpo^2 n^2\nd }{\pts{d+\tau}^3}\\
        &\myquad[2]+ \frac{C_{8}+1}{C_{8}}\frac{2\pts{\hc_{9}+\frac{1}{C_{10}}}\frac{2}{C}n\nd\nut}{d+\tau}\frac{C_{11}\nukpo}{d+\tau}\Bigg)\\
        & \leq \frac{\hc_{12}\nd}{d+\tau},
    \end{align*} 
    where in the last inequality, we apply Assumption~\ref{asm:dataset_gen}(C), i.e.~$d \geq CR_+n$.  Similarly,
    \begin{align*}
        \sididkpo &\geq \frac{\sididk \det(\A_{k+1}) - \normo{\ffs[\A_{k+1}]{\sidkpok, \hkpoidk, \sidkpok, \hkpoidk}}}{\det(\A_{k+1})}\\
        &\geq \frac{1}{\det(\A_{k+1})}\Bigg(\det(\A_{k+1})\sididk -\Bigg(\nukpo^2\normo{\sidkpok}\normo{\sidkpok} + \normo{\skpokpok}\normo{\hkpoidk}\normo{\hkpoidk}\\
        &\myquad[2]+ \pts{1+\frac{1}{C_1}}\pts{\normo{\sidkpok}\normo{\hkpoidk} + \normo{\hkpoidk}\normo{\sidkpok}}\Bigg)\Bigg)\\
        & \geq \frac{1}{\det(\A_{k+1})} \Bigg(\frac{C_2+1}{C_2}\frac{\dc_3\nd}{d+\tau} - \frac{\pts{\hc_4+\frac{1}{C_5}}2n\nd\nukpo^2}{\pts{d+\tau}^2}\\
        &\myquad[2] - \frac{C_6+1}{C_6}\frac{C_7\nukpo^2 n^2\nd }{\pts{d+\tau}^3} - \frac{C_{8}+1}{C_{8}}\frac{2\pts{\hc_{9}+\frac{1}{C_{10}}}\frac{2}{C}n\nd\nut}{d+\tau}\frac{C_{11}\nukpo}{d+\tau}\Bigg)\\
        & \geq \frac{\dc_{12}\nd}{d+\tau}.
    \end{align*}
    $\bullet \; \hijdkpo$ for $i, j = 1,2 $: We have
    \begin{align*}
        \hijdkpo &= \bd_i^\top\XXkpotaui\bDelta^{-1}\bv_j\\
        &=\bd_i^\top\XXktaui\bDelta^{-1}\bv_j - \bd_i^\top\XXktaui\bL_{k+1}\A_{k+1}^{-1}\bR_{k+1}\XXktaui\bDelta^{-1}\bv_j\\
        &= \hijdk - \mts{\nukpo\hikpok,~\tikpok,~\hikpok} \times\frac{\text{adj}(\A_{k+1})}{\det(\A_{k+1})}\begin{bmatrix}
        \nukpo\skpojdk \\
        \skpojdk \\
        \hkpojdk
        \end{bmatrix}\\
        &= \frac{\hijdk \det(\A_{k+1})- \ffs[\A_{k+1}]{\hikpok, \tikpok, \skpojdk, \hkpojdk}}{\det(\A_{k+1})}.
    \end{align*}
    Thus, we can apply Lemma~\ref{lem:aux_f_func} to get 
    \begin{align*}
        \hijdkpo &\leq \frac{\normo{\hijdk \det(\A_{k+1})} +\normo{\ffs[\A_{k+1}]{\hikpok, \tikpok, \skpojdk, \hkpojdk}}}{\det(\A_{k+1})}\\
        &\leq \frac{1}{\det(\A_{k+1})}\Bigg(\det(\A_{k+1})\normo{\hijdk} + \nukpo^2\normo{\hikpok}\normo{\skpojdk} + \normo{\skpokpok}\normo{\tikpok}\normo{\hkpojdk}\\
        &\myquad[2]+ \pts{1+\frac{1}{C_1}}\pts{\normo{\hikpok}\normo{\hkpojdk} + \normo{\tikpok}\normo{\skpojdk}}\Bigg)\\
        & \leq \frac{1}{\det(\A_{k+1})} \Bigg(\frac{C_2+1}{C_2}\frac{C_3\sqrt{n\nd}\nui}{d+\tau}\\
        &\myquad[2] + \frac{\pts{\hc_4+\frac{1}{C_5}}\pts{\ndpnd}\nukpo^2 C_6n\nui}{\pts{d+\tau}^2} + \frac{C_7+1}{C_7}\frac{C_8\nukpo^2\nui n^2\sqrt{n\nd}}{\pts{d+\tau}^3}\\
        &\myquad[2] + \frac{C_{9}+1}{C_{9}}\Bigg(\frac{C_{10}n\sqrt{n\nd}\nui\nukpo}{\pts{d+\tau}^2} + \frac{C_{11}\pts{\hc_{12}+\frac{1}{C_{13}}}n\pts{\ndpnd}\nui\nukpo}{\pts{d+\tau}^2}\Bigg)\Bigg),
    \end{align*}
    where in the third inequality we apply the lower bound on $\det(\A_{k+1})$ (Lemma~\ref{lem:aux_det_bound}) and the $k$-th order primitive bounds. Next, by the Cauchy–Schwarz inequality, we have $\pts{\ndpnd}\leq \sqrt{2 \pts{\frac{n_+^2}{\Delta_+^2} + \frac{n_-^2}{\Delta_-^2}}} \leq \sqrt{2n\nd}$. We then have
    \begin{align*}
        \hijdkpo &\leq \frac{1}{\det(\A_{k+1})} \Bigg(\frac{C_2+1}{C_2}\frac{C_3\sqrt{n\nd}\nui}{d+\tau}\\
        &\myquad[2] + \frac{\pts{\hc_4+\frac{1}{C_5}}C_6n\sqrt{2n\nd}\nui\nukpo^2}{\pts{d+\tau}^2} + \frac{C_7+1}{C_7}\frac{C_8\nukpo^2 \nui n^2\sqrt{n\nd} }{\pts{d+\tau}^3}\\
        &\myquad[2] + \frac{C_{9}+1}{C_{9}}\Bigg(\frac{C_{10}n\sqrt{n\nd}\nui\nukpo}{\pts{d+\tau}^2} + \frac{C_{11}\pts{\hc_{12}+\frac{1}{C_{13}}}n\sqrt{2n\nd}\nui\nukpo}{\pts{d+\tau}^2}\Bigg)\Bigg)\\
        & \leq \frac{C_{12}\sqrt{n\nd}\nui}{d+\tau},
    \end{align*} 
    where in the last inequality, we apply Assumption~\ref{asm:dataset_gen}(C), i.e.~$d \geq CR_+n$.  Similarly,
    \begin{align*}
        \hijdkpo &\geq \frac{\hijdk \det(\A_{k+1})- \normo{\ffs[\A_{k+1}]{\hikpok, \tikpok, \skpojdk, \hkpojdk}}}{\det(\A_{k+1})}\\
        &\geq \frac{1}{\det(\A_{k+1})}\Bigg(\det(\A_{k+1})\hijdk -\Bigg(\nukpo^2\normo{\hikpok}\normo{\skpojdk} + \normo{\skpokpok}\normo{\tikpok}\normo{\hkpojdk}\\
        &\myquad[2]+ \pts{1+\frac{1}{C_1}}\pts{\normo{\hikpok}\normo{\hkpojdk} + \normo{\tikpok}\normo{\skpojdk}}\Bigg)\Bigg)\\
        & \geq \frac{1}{\det(\A_{k+1})} \Bigg(-\frac{C_2-1}{C_2}\frac{C_3\sqrt{n\nd}\nui}{d+\tau}\\
        &\myquad[2] - \frac{\pts{\hc_4+\frac{1}{C_5}}C_6n\sqrt{2n\nd}\nui\nukpo^2}{\pts{d+\tau}^2} - \frac{C_7+1}{C_7}\frac{C_8\nukpo^2 \nui n^2\sqrt{n\nd} }{\pts{d+\tau}^3}\\
        &\myquad[2] - \frac{C_{9}+1}{C_{9}}\Bigg(\frac{C_{10}n\sqrt{n\nd}\nui\nukpo}{\pts{d+\tau}^2} - \frac{C_{11}\pts{\hc_{12}+\frac{1}{C_{13}}}n\sqrt{2n\nd}\nui\nukpo}{\pts{d+\tau}^2}\Bigg)\Bigg)\\
        & \geq -\frac{C_{12}\sqrt{n\nd}\nui}{d+\tau}.
    \end{align*}

\section{Proof of Proposition~\ref{pro:wst_lower_bound}} \label{app:lower_bound}
In this section, we provide the proof of Proposition~\ref{pro:wst_lower_bound}. The proof steps mostly resemble those in the proof of Theorem~\ref{thm:wst_risk_upperbound}. We show the derivation for the case $b = +1$, noting that the case $b = -1$ follows by an identical set of calculations.

    First of all, we use the matching exponential lower bound on the Q-function~\cite{chiani2003new}, which yields
    \begin{align} \label{eq:lower_bound_target_term}
        \Riskf[+1]{\hw} = \Qf{\frac{\hw^{\top}\bmu_{+1}}{\sqrt{\hw^{\top}\hw}}} \geq C\expf{-\frac{\pts{\hw^{\top}\bmu_{+1}}^2}{2\hw^{\top}\hw}}.
    \end{align}
    Therefore, it is sufficient to show an upper bound on $\frac{\pts{\hw^{\top}\bmu_{+1}}^2}{2\hw^{\top}\hw}$ to lower bound Eq.~\eqref{eq:lower_bound_target_term}. We first reproduce the equivalent expression in terms of adjusted primitives (Eq.~\eqref{eq:maj_target_primitive}), and then apply the bounds on these primitives (provided in Lemma~\ref{lem:w_primitive_bounds}) to get the following upper bound:
    \begin{align}
        &\frac{\pts{\hw^\top\bmu_{+1}}^2}{2\hw^{\top}\hw}\nonumber\\
        &= \frac{\pts{\nut^2\stdtt + \nuo^2\stdot + \httdt + \hotdt}^2}{2\otdtdt}\nonumber\\
        &\leq \dfrac{\pts{\nut^2 \frac{\pts{\hc_9+\frac{1}{C_{10}}} \pts{\ndpnd}}{d+\tau} + \nuo^2 \frac{\hc_9 \pts{\ndmnd} + \frac{1}{C_{10}}\pts{\ndpnd}}{d+\tau}+\frac{C_{12} \sqrt{n\nd}\nut}{d+\tau}+\frac{C_{12} \sqrt{n\nd}\nuo}{d+\tau}}^2}{\frac{\dc_{13}\nd d}{\pts{d+\tau}^2}}\nonumber\\
        &=\frac{\pts{\pts{\hc_9\nut^2 +\frac{\nut^2+\nuo^2}{C_{10}}}\pts{\ndpnd} + \hc_9 \nuo^2\pts{\ndmnd}{+C_{12} \sqrt{n\nd}\pts{\nut+\nuo}}}^2}{\dc_{13}\nd d}.\label{eq:low_maj_target_primitive2}
    \end{align}
    Next, since $R_- \geq 0$, i.e. $\nut \geq \nuo$, we have
    \begin{align}
        \eqref{eq:low_maj_target_primitive2}&\leq \dfrac{\pts{\pts{\hc_9\nut^2 +\frac{2\nut^2}{C_{10}}}\pts{\ndpnd}+ \hc_9 \nuo^2\pts{\ndmnd}+2C_{12} \sqrt{n\nd}\nut}^2}{\dc_{13}\nd d}\nonumber\\
        &\leq \dfrac{\pts{\pts{\hc_9 +\frac{2}{C_{10}} +\frac{C_{12}}{C_{14}}}\nut^2\pts{\ndpnd}+ \hc_9 \nuo^2\pts{\ndmnd}}^2}{\dc_{13}\nd d}\nonumber\\
        &= \dfrac{\pts{\hc_{15}\nut^2\pts{\ndpnd} + \hc_{15} \nuo^2\pts{\ndmnd}}^2}{\dc_{13}\nd d}\nonumber\\
        &= \dfrac{\pts{\hc_{15} R_+\frac{n_+}{\Delta_+} + \hc_{15} R_-\frac{n_-}{\Delta_-}}^2}{\dc_{13}\nd d} \leq \dfrac{2\hc_{15}^2 R_+^2\frac{n_+^2}{\Delta_+^2} + 2\hc_{15}^2 R_-^2\frac{n_-^2}{\Delta_-^2}}{\dc_{13}\nd d},\label{eq:low_maj_target_primitive3}
    \end{align}
    where the second inequality follows from Lemma~\ref{lem:aux_compare_a_h} applied with a large enough constant $C_{14} > C_{12}$. Finally, we define $\alpha_\pm = \frac{n_\pm/\Delta_\pm^2}{\nd}$, noting that $0<\alpha_\pm<1$ and $\alpha_+ + \alpha_-=1$. Then, Eq.~\eqref{eq:low_maj_target_primitive3} becomes
\begin{align*}
    \frac{\pts{\hw^\top\bmu_{+1}}^2}{2\hw^{\top}\hw} &\leq \frac{C_{16} \pts{\alpha_+R_+^2n_+ + \alpha_-R_-^2n_-}}{d}.
\end{align*}
Plugging the above into Eq.~\eqref{eq:lower_bound_target_term} completes the proof.
\qed
\end{document}